\documentclass[12pt]{article}

\usepackage{graphicx}
\usepackage{float}

\usepackage[letterpaper,margin=1in]{geometry}
\usepackage{amsmath}
\usepackage{amssymb}
\usepackage{amsthm}

\DeclareMathOperator*{\argmin}{argmin} % no space, limits underneath in displays
\DeclareMathOperator{\argmax}{argmax} % no space, limits on side in displays

\usepackage{pifont}% http://ctan.org/pkg/pifont

\usepackage{algorithm}
\usepackage{algorithmic}
\usepackage{tikz}
\usepackage[round,authoryear]{natbib}
\usepackage{hyperref}
\usepackage{newtxtext,newtxmath}
\renewenvironment{abstract}
  {\par
   \noindent
   \begin{center}
   {\LARGE\bfseries Abstract}
   \end{center}
   \par\vspace{1em}
   \noindent}
  {\par}

\date{}

\makeatletter
\renewcommand{\fnum@figure}{\textbf{Figure \thefigure}}
\renewcommand{\fnum@table}{\textbf{Table \thetable}}
\makeatother

\usepackage{scicite}

\usepackage{url}

\usepackage{tcolorbox}

\newcommand{\E}{\mathbb{E}}
\newcommand{\R}{\mathbb{R}}

\newcommand{\bb}{\mathbf{b}}

\newcommand{\eb}{\mathbf{e}}

\newcommand{\vb}{\mathbf{v}}

\newcommand{\wb}{\mathbf{w}}
\newcommand{\xb}{\mathbf{x}}
\newcommand{\yb}{\mathbf{y}}
\newcommand{\zb}{\mathbf{z}}

\newcommand{\Ab}{\mathbf{A}}
\newcommand{\Bb}{\mathbf{B}}

\newcommand{\Hb}{\mathbf{H}}
\newcommand{\Ib}{\mathbf{I}}

\newcommand{\Kb}{\mathbf{K}}

\newcommand{\Mb}{\mathbf{M}}

\newcommand{\Xb}{\mathbf{X}}
\newcommand{\Yb}{\mathbf{Y}}
\newcommand{\Zb}{\mathbf{Z}}

\newcommand{\alphab}{\boldsymbol{\alpha}}
\newcommand{\betab}{\boldsymbol{\beta}}

\newcommand{\phib}{\boldsymbol{\phi}}

\newcommand{\omegab}{\boldsymbol{\omega}}

\newcommand{\mub}{\boldsymbol{\mu}}

\newcommand{\oneb}{\mathbf{1}}

\newcommand{\Hc}{\mathcal{H}}

\newcommand{\Oc}{\mathcal{O}}

\newtheorem{theorem}{Theorem}

\newtheorem{assumption}{Assumption}

\newtheorem{corollary}{Corollary}
\newtheorem{definition}{Definition}

\newtheorem{lemma}{Lemma}

\newtheorem{proposition}[theorem]{Proposition}
\newtheorem{remark}{Remark}

\def\scititle{
	Scalable and Adaptive Distribution Regression via Random Features
}
\title{\bfseries \boldmath \scititle}

\author{
	Yinsong~Wang$^{1}$,
        Xiao~Liu$^{2}$,
        Yu~Ding$^{2}$\and
    \small$^{1}$School of Mechanical, Industrial and Material Engineering,\\
        \small University of Nevada Reno, Reno  89557, USA.\and
	\small$^{2}$H. Milton Stewart School of Industrial and Systems Engineering,\\
        \small Georgia Institute of Technology, Atlanta 30332, USA.\and
}

\begin{document} 

% Insert the title and author list
\maketitle

\begin{abstract}
Distribution regression seeks to learn mappings from probability distributions to response variables and has emerged as a unifying framework for kernel methods, multi-instance learning, and set-valued learning. Despite its success, kernel-based distribution regression remains computationally expensive and sensitive to kernel choice, motivating scalable and adaptive representations. Building on random feature approximations, we develop a finite-sample analysis of Random Feature Distribution Regression (RaDR) under two-stage sampling. Our perturbation analysis separates the effects of distribution-level sampling, finite within-bag sampling, and random feature approximation. We establish sufficient feature-budget conditions under which RaDR preserves the order of our excess-risk upper bound for exact two-stage kernel distribution regression while maintaining computational efficiency. We further propose Adaptive Random Feature Distribution Regression (A-RaDR), a unified score-guided framework that adaptively allocates random features according to downstream learning objectives. Experiments on synthetic benchmarks and 41 real-world datasets demonstrate that RaDR delivers substantial computational savings while maintaining competitive predictive performance, whereas A-RaDR further improves learning accuracy in distribution regression while retaining the computational speed up. Together, these results establish a practical and theoretically grounded framework for adaptive, scalable distribution
learning
\end{abstract}
\clearpage
\tableofcontents
\clearpage

\section{Introduction}\label{sec:intro}

Machine learning has long been built upon a simple but powerful abstraction: data as vectors \citep{vapnik1999overview, hastie2009elements}. From classical linear models and support vector machines to modern deep neural networks, the dominant paradigm in machine learning has traditionally assumed that each observation can be represented as a fixed-dimensional point in a Euclidean space. This vector-centric viewpoint has driven decades of progress in computer vision, speech recognition, scientific computing, and predictive modeling. Even as machine learning architectures have grown increasingly sophisticated, much of modern learning theory and algorithm design remains fundamentally rooted in vector-valued representations \citep{lecun2015deep, bengio2017deep}. 

Yet many real-world learning problems are fundamentally not vector-valued. In numerous scientific and engineering domains, each observation is more naturally represented as a collection, set, bag, or probability distribution rather than a single feature vector \citep{dietterich1997solving, maron1997framework}. A medical diagnosis may rely on an entire collection of pathology images or patient measurements \citep{campanella2019clinical, ilse2018attention}; a manufacturing process may be characterized by particle size distributions evolving over time \citep{van2020predicting}; a sensor network may generate streams of stochastic observations \citep{dennis2018multiple}; and modern language models themselves process increasingly long contexts that are intrinsically set- or sequence-valued. In such settings, the statistical object of interest is no longer an individual sample, but the underlying distribution generating those samples. This shift has fueled growing interest in learning paradigms capable of operating directly on distributions, where the input to the learning system is itself a probability measure estimated through finite observations \citep{smola2007hilbert, foulds2010review, muandet2017kernel}.

Existing approaches to learning from bags, sets, and distributions can be broadly organized into two major paradigms. The first is the multi-instance learning (MIL) paradigm, which treats each input as a finite unordered bag of instances \citep{dietterich1997solving, maron1997framework, foulds2010review}, is increasingly dominated by deep neural architectures \citep{ilse2018attention, zaheer2017deep, lee2019set}. The second is distribution regression, which explicitly formulates the problem as supervised learning from probability distributions to responses \citep{poczos2013distribution, szabo2016learning, muandet2017kernel}. MIL is algorithmically flexible but often statistically opaque and data hungry, whereas distribution regression is principled, stable, and theoretically well understood, but limited by computation.

Distribution regression formulates the set-valued learning problems as supervised learning problems in which each input is an underlying probability distribution observed through a finite bag of samples. Early nonparametric formulations treated the problem directly as regression on probability measures and established consistency under distribution-free assumptions, often through density-based or kernel-kernel constructions \citep{poczos2013distribution}. A more influential line of work later represented distributions through kernel mean embeddings (KME), which map probability measures into a reproducing kernel Hilbert space (RKHS) \citep{smola2007hilbert, sriperumbudur2010hilbert, sriperumbudur2011universality, muandet2017kernel, meunier2022distribution}. This embedding viewpoint provides a natural way to compare distributions through inner products, distances, and kernels on embedded measures, and has become the dominant foundation for kernel-based distribution regression. Within this framework, support measure machines introduced a discriminative learning formulation over probability measures \citep{muandet2012learning}, while subsequent two-stage learning theory established consistency and convergence guarantees when both the distributions and the samples from each distribution are random \citep{szabo2015two, szabo2016learning, bachoc2025improved}. These developments make kernel mean embedding-based distribution regression especially attractive in scientific and safety-critical settings.

Despite these theoretical guarantee, computation remains a major obstacle in large scale applications. Kernel mean embedding-based distribution regression typically requires constructing and manipulating large kernel matrices over both instances and bags, so the cost grows rapidly with the number of distributions and the number of samples per distribution. As a result, the very framework that is arguably the most principled default for distributional supervised learning becomes difficult to deploy in large-scale settings. Moreover, as in conventional kernel learning, performance remains tightly coupled to the choice of kernel \citep{gonen2011multiple, cortes2012algorithms, cortes2010two}. A poorly chosen kernel may yield a representation that is statistically ineffective, while adaptive kernel selection or learning procedures can introduce additional optimization overhead.

Previous studies on kernel approximations for distribution regression have primarily emphasized computational efficiency and practical implementation \citep{flaxman2015supported, mitrovic2016dr, wang2020conformal, adsuara2019nonlinear}, while recent work on adaptive distribution kernels has focused on learning representations suited to downstream tasks \citep{kachaiev2024learning}. These developments have largely pursued efficiency and adaptivity along separate lines, the integration of both scalability and adaptivity into a common framework receives much less attention. Building on these efforts, we develop a finite-sample analysis of random feature distribution regression to theoretically justify its computation and performance trade-off and extend the framework through score-guided feature selection to address scalability and adaptivity together. The paper is organized around three main contributions.

\begin{enumerate}
    \item In Section~\ref{sec:RaDR}, we develop a finite-sample analysis of \emph{Random Feature Distribution Regression} (RaDR) in the classical two-stage sampling setting. Our perturbation analysis separates the effects of distribution-level sampling, finite within-bag sampling, and random feature approximation. For kernel ridge regression with a linear distribution kernel, we establish sufficient feature-budget conditions under which RaDR preserves the order of the excess-risk upper bound of exact two-stage kernel distribution regression, while reducing the computational cost.

    \item In Section~\ref{sec:A-RaDR}, we propose \emph{Adaptive Random Feature Distribution Regression} (A-RaDR), a unified score-guided framework for achieving adaptive kernels in distribution regression. Our formulation encompasses several important adaptive sampling strategies, including kernel alignment, leverage scores, and distribution variance, within a single framework.

    \item In Section~\ref{sec:experiments}, we conduct extensive experiments on synthetic benchmarks and 41 real-world datasets. The results demonstrate that RaDR substantially improves computational efficiency while maintaining competitive predictive performance, and that A-RaDR further improves prediction accuracy in the low-feature regime through adaptive feature selection.
\end{enumerate}

\section{Problem Setup and State-of-the-Art}\label{sec:setup}

In this section, we review the mathematical foundations of kernel mean embeddings and two-stage kernel distribution regression. After introducing their definitions and formulations, we highlight the computational bottlenecks that motivate the scalable approach introduced in the following sections.

\subsection{Notation used throughout the paper}
Before proceeding, we summarize the main notation used in the remainder of the paper. Scalars or functions are denoted by non-bold letters such as $x$, $y$, and $k$. Vectors are denoted by bold lower-case letters such as $\xb$ and $\yb$, and matrices by bold upper-case letters such as $\Xb$ and $\Kb$. Calligraphic letters such as $\mathcal{X}$, $\mathcal{Y}$, and $\mathcal{H}_k$ denote spaces, sets, or function classes. Probability measures on a space $\mathcal{X}$ are written as $\mathbb{P},\mathbb{Q}$, while empirical bags are written as $\mathcal{X}_i = \{\xb_{ij}\}_{j=1}^{N_i}$. We denote inner product with $\langle \cdot, \cdot \rangle$, norm with $\|\cdot\|$, expectation with $\E[\cdot]$, with appropriate subscripts specifying their associated space or measure.

We use $n$ for the number of bags or distributions in the training set. The quantity $N_i$ denotes the size of the $i$th training bag, and $N_t$ denotes the test bag size. The embedding kernel is written as $k$, with RKHS $\mathcal{H}_k$, and the distribution kernel is written as $K$. The population embedding of $\mathbb{P}$ is $\mu_{\mathbb{P}}$, while its empirical embedding from bag $\mathcal{X}_i$ is $\hat\mu_i$.

\subsection{Distribution regression as learning over probability measures}

Distribution regression considers a supervised learning problem with inputs drawn from a collection of probability measures on an instance space $\mathcal{X}$ and outputs taking values in $\mathcal{Y} \subseteq \mathbb{R}$ for regression, or in a finite label space for classification. Let
\[
(\mathbb{P}_i, y_i) \in \mathcal{M}_1^+(\mathcal{X}) \times \mathcal{Y}, 
\qquad i = 1,\dots,n,
\]
denote the population-level training pairs, where $\mathcal{M}_1^+(\mathcal{X})$ is a suitable class of Borel probability measures on $\mathcal{X}$. The target is to learn a function
\[
f^\star : \mathcal{M}_1^+(\mathcal{X}) \to \mathcal{Y}
\]
from the observed training data and then predict the response associated with a new input distribution $\mathbb{P}$.

In practice, however, the distributions $\mathbb{P}_i$ are not directly observed. Instead, for each $i$, one only observes a bag of samples
\[
\mathcal{X}_i = \{\xb_{i1},\dots,\xb_{iN_i}\}, \qquad \xb_{ij} \overset{\text{i.i.d.}}{\sim} \mathbb{P}_i,
\]
where $N_i$ is the bag size for the $i$th distribution. The actual training data therefore take the form
\[
\mathcal{D}_n = \{(\mathcal{X}_i, y_i)\}_{i=1}^n.
\]
As mentioned in the beginning of this Section, we reserve $n$ for the number of distributions, and $N_i$ for the number of instances in bag $i$, since the distinction between these two sampling levels becomes important both computationally and statistically. In particular, later theoretical results will separately track the effect of finite bag sampling and finite random feature approximation.

Distribution regression formulation can therefore be stated as follows: given finitely many bags $\mathcal{X}_1,\dots,\mathcal{X}_n$ and corresponding responses $y_1,\dots,y_n$, estimate a predictor $\hat f$ such that $\hat f(\mathcal{X})$ approximates the response associated with the underlying distribution that generated the bag $\mathcal{X}$. In a nutshell, the covariates are latent probability measures, and the observed bags are noisy finite-sample surrogates for those measures.

\subsection{Kernel mean embeddings of distributions}

A standard nonparametric way to represent probability distributions is through kernel mean embeddings \citep{smola2007hilbert,muandet2017kernel}. Let $k:\mathcal{X}\times\mathcal{X}\to\mathbb{R}$ be a positive definite kernel with associated RKHS $\mathcal{H}_k$. For a distribution $\mathbb{P}\in\mathcal{M}_1^+(\mathcal{X})$ satisfying the usual integrability condition, its kernel mean embedding is defined as
\[
\mu_{\mathbb{P}} = \int_{\mathcal{X}} k(\xb,\cdot)\, d\mathbb{P}(\xb) \in \mathcal{H}_k.
\]
The embedding $\mu_{\mathbb{P}}$ is the RKHS element that averages the feature map $k(\xb,\cdot)$ under $\mathbb{P}$. Equivalently, for any $h \in \mathcal{H}_k$,
\[
\langle h, \mu_{\mathbb{P}} \rangle_{\mathcal{H}_k}
=
\int_{\mathcal{X}} h(\xb)\, d\mathbb{P}(\xb),
\]
where $\langle \cdot, \cdot \rangle_{\mathcal{H}_k}$ denotes the inner product in RKHS $\mathcal{H}_k$, so the mean embedding acts as a linear functional encoding expectations of RKHS test functions. When $k$ is characteristic \citep{sriperumbudur2011universality, simon2018kernel}, the map $\mathbb{P} \mapsto \mu_{\mathbb{P}}$ is injective, meaning that the embedding uniquely determines the distribution. In another word, kernel mean embedding is lossless. This lossless embedding property is one of the key reasons behind the popularity of KME in distribution regression.

Since the population distributions are often not directly available in practice, one replaces $\mu_{\mathbb{P}_i}$ by its empirical counterpart
\[
\hat\mu_i
=
\frac{1}{N_i}\sum_{j=1}^{N_i} k(\xb_{ij},\cdot).
\]
This is simply the average feature map over the bag $\mathcal{X}_i$. The empirical embedding is unbiased for the population embedding and converges in RKHS norm under standard assumptions. Accordingly, the first stage of kernel distribution regression consists of replacing each latent distribution $\mathbb{P}_i$ by the empirical element $\hat\mu_i \in \mathcal{H}_k$.

\subsection{Kernels on embedded distributions}

It is worth noting that in the distribution regression pipeline, there are usually two kernels involved. In the previous subsection, we have introduced the embedding kernel $k$ acting on instances $\xb \in \mathcal{X}$ to represent distributions in $\mathcal{H}_k$. A second kernel, defined on these embedded objects, is then used for the downstream supervised task, we term the downstream supervised learning kernel as distribution kernel to differentiate it from embedding kernel $k$. Let
$
K:\mathcal{H}_k \times \mathcal{H}_k \to \mathbb{R}
$
be a positive definite kernel on the embedding space. Then the induced distribution kernel is
$
K(\mathbb{P},\mathbb{Q})
=
K(\mu_{\mathbb{P}}, \mu_{\mathbb{Q}}).
$
In practice one uses the empirical approximation
$
\hat K_{ij}
=
K(\hat\mu_i,\hat\mu_j).
$
A common and important example is the Gaussian kernel on the RKHS:
\[
K(\mu,\nu)
=
\exp\!\left(
-\frac{\|\mu-\nu\|_{\mathcal{H}_k}^2}{2\tau^2}
\right),
\]
where $\tau>0$ is a bandwidth parameter. Since
\[
\|\mu_{\mathbb{P}}-\mu_{\mathbb{Q}}\|_{\mathcal{H}_k}^2
=
\mathbb{E}_{\xb,\xb'\sim \mathbb{P}} k(\xb,\xb')
+
\mathbb{E}_{\zb,\zb'\sim \mathbb{Q}} k(\zb,\zb')
-
2\mathbb{E}_{\xb\sim\mathbb{P},\,\zb\sim\mathbb{Q}} k(\xb,\zb),
\]
the induced kernel depends on pairwise embedding kernel evaluations across and within bags. Its empirical version is obtained by replacing expectations with bag averages:
\begin{align*}
\|\hat\mu_i-\hat\mu_j\|_{\mathcal{H}_k}^2
&=
\frac{1}{N_i^2}\sum_{a=1}^{N_i}\sum_{b=1}^{N_i} k(\xb_{ia},\xb_{ib})
+
\frac{1}{N_j^2}\sum_{a=1}^{N_j}\sum_{b=1}^{N_j} k(\xb_{ja},\xb_{jb}) \\
&\quad
-
\frac{2}{N_iN_j}\sum_{a=1}^{N_i}\sum_{b=1}^{N_j} k(\xb_{ia},\xb_{jb}).
\end{align*}
This expression already reveals the main computational difficulty, that even a single distribution kernel evaluation requires summing over all cross-bag instance pairs.

More generally, the second-stage kernel $K$ may be chosen according to the downstream task. Linear kernels, radial kernels, and others are all possible. In this paper, we will focus on the linear kernel
$
    K(\mu,\nu) = \langle\mu,\nu\rangle_{\Hc_K},
$
for theoretical analysis, and include Gaussian distribution kernel for numerical studies.

% The present paper focuses on the standard KME-based distribution regression setting in which the distribution predictor is learned from such kernels, and later develops a scalable random feature approximation for this pipeline.

\subsection{Two-stage distribution regression}

The preceding construction leads to a natural two-stage learning paradigm \citep{szabo2016learning}. In the first stage, each input distribution is estimated from a finite bag through its empirical embedding, and transformed into an potentially infinite-dimensional RKHS representation via the embedding kernel $k$. In the second stage, a supervised kernel learning method is applied to the resulting embeddings via the distribution kernel $K$. For kernel ridge regression, this yields the predictor
\[
\hat f(\mathbb{P})
=
\sum_{i=1}^n \alpha_i K(\hat\mu_i,\mu_{\mathbb{P}}),
\]
where the coefficient vector $\alphab = (\alpha_1,\dots,\alpha_n)^\top$ solves
\[
(\Kb_N + \lambda n \Ib_n)\alphab = \yb,
\]
with
\[
(\Kb_N)_{ij} = K(\hat\mu_i,\hat\mu_j), 
\qquad
\yb = (y_1,\dots,y_n)^\top,
\]
and $\lambda>0$ denotes the regularization parameter. For prediction from a test bag
\[
\mathcal{X}_{t} = \{\xb_{t1},\dots,\xb_{tN_t}\},
\]
one first forms the empirical embedding
\[
\hat\mu_t = \frac{1}{N_t}\sum_{j=1}^{N_t}k(\xb_{tj},\cdot),
\]
and then evaluates
\[
\hat f(\mathcal{X}_t)
=
\sum_{i=1}^n \alpha_i K(\hat\mu_i,\hat\mu_t).
\]

Although we focus on kernel ridge regression for concreteness, the same representation can be coupled with other second-stage learners, including support vector machines \citep{vapnik1997support} and other kernel methods. 

\subsection{Computational bottlenecks of exact kernel distribution regression}

The exact KME-based formulation is statistically appealing due to its lossless embedding property \citep{sriperumbudur2011universality}, but its computational cost can be even more prohibitive than instance-level kernel learning. To see this, consider first the construction of the distribution-level Gram matrix $\Kb_N$. For each pair $(i,j)$, evaluating $K(\hat\mu_i,\hat\mu_j)$ generally requires computing quantities of the form
\[
\sum_{a=1}^{N_i}\sum_{b=1}^{N_j} k(\xb_{ia},\xb_{jb}),
\]
together with corresponding within-bag sums. If the average bag size is of order $\bar N$, then a single entry of $\Kb_N$ may cost $\Oc(\bar N^2)$ operations, so forming the full $n\times n$ matrix costs roughly $\Oc(n^2\bar N^2)$ in the balanced case. After this, solving the kernel ridge regression system requires $\Oc(n^3)$ time and $\Oc(n^2)$ memory in the standard implementation. Thus, unlike ordinary kernel regression on vector-valued inputs, distribution regression incurs a nested computational burden: a kernel method at the bag level built on top of kernel operations at the instance level.

\section{Random Feature Distribution Regression}
\label{sec:RaDR}

The computational challenge of kernel distribution regression is shared by kernel methods more broadly and has motivated a substantial literature on scalable kernel approximation \citep{hofmann2008kernel}. Existing approaches include Nystr\"om methods, randomized sketching techniques, and random feature approximations \citep{williams2001using, zhang2008improved, li2014large, mahoney2011randomized, drineas2012fast, zhang2019incremental, rahimi2007random}. Among these, random Fourier features are particularly attractive for shift-invariant kernels because they approximate the kernel function itself through an explicit finite-dimensional feature map rather than a low-rank approximation of the kernel matrix. This explicit representation naturally removes the need to store infinite-dimensional embeddings while preserving the favorable statistical properties of kernel learning \citep{rudi2017generalization, sun2018but, avron2017random, li2021towards}.

For distribution regression, random feature approximations replace implicit kernel mean embeddings with explicit finite-dimensional vectors. Each bag is represented by the average of its instances’ random features, and a supervised learner is fitted to the resulting bag representations. This construction has been used in earlier work on distribution regression, including ecological inference, approximate Bayesian computation, conformal prediction, and remote sensing \citep{flaxman2015supported, mitrovic2016dr, wang2020conformal, adsuara2019nonlinear}. We refer to this common construction as Random Feature Distribution Regression (RaDR) and develop its theoretical analysis under two-stage sampling.

We first introduce random feature approximations for the embedding kernel and construct explicit distribution embeddings. We then formulate RaDR for kernel ridge regression, analyze its computational complexity, and establish a perturbation-based excess-risk bound that separates three sources of error: finite distribution-level sampling, finite within-bag sampling, and random feature approximation. Based on this decomposition, we identify sufficient conditions under which RaDR achieves order-wise lossless approximation relative to exact kernel distribution regression. Finally, we extend the same perturbation framework to binary classification with soft-margin support vector machines (SVMs). Proofs are deferred to Appendix~\ref{app:RaDR_proofs}.

% In the previous section, we introduced kernel mean embedding based two-stage distribution regression and highlighted both its strengths and its computational bottleneck. While kernel mean embeddings provide a lossless representation of probability distributions in reproducing kernel Hilbert spaces, the implicit infinite-dimensional feature map makes the empirical embeddings impossible to represent explicitly. As a result, both training and testing require extensive pairwise kernel evaluations across instances and bags, leading to a nested computational structure whose cost scales multiplicatively with the two stages of distribution regression.

\subsection{Random features for distribution embeddings}
\label{subsec:RaDR_features}

Recall from Section~\ref{sec:setup} that exact kernel distribution regression represents a distribution $\mathbb{P}$ by its kernel mean embedding $\mu_{\mathbb{P}}$ and then applies a distribution kernel to these embeddings. We focus on the linear outer kernel
\begin{equation}
\label{eq:linear_distribution_kernel_sec3}
    K(\mathbb{P},\mathbb{Q})
    =
    \langle \mu_{\mathbb{P}},\mu_{\mathbb{Q}}\rangle_{\mathcal{H}_k},
\end{equation}
which is already expressive when the base kernel $k$ is characteristic, and it isolates the role of random feature approximation, since all approximation error enters through the mean embedding itself.

We focus on shift-invariant kernels of the form
\[
k(\xb,\zb)=\psi(\xb-\zb),
\qquad
\xb,\zb\in\mathcal{X}\subset\mathbb{R}^d,
\]
normalized so that $k(\xb,\xb)=1$. By Bochner's theorem \citep{rudin2017fourier}, there exists a probability measure $\Lambda$ on $\mathbb{R}^d$ such that
\begin{equation}
\label{eq:bochner_sec3}
    k(\xb,\zb)
    =
    \int_{\mathbb{R}^d}
    \cos\{\omegab^\top(\xb-\zb)\}
    \,d\Lambda(\omegab) = \E_{\omegab}[\cos\{\omegab^\top(\xb-\zb)\}].
\end{equation}
In practice, the random feature distribution $\Lambda$ admits analytical forms for popular kernel choices, such as normal distribution for Gaussian kernel, and exponential distribution for Cauchy kernel. We refer the audience to \citep{rahimi2007random} for more details. The expression in Eq.~\eqref{eq:bochner_sec3} reveals that a kernel function $k(\xb,\zb)$ can be approximated via sampling random vectors from the probability measure $\Lambda(\omegab)$. Let
\[
\omegab_1,\ldots,\omegab_D
\overset{\mathrm{i.i.d.}}{\sim}
\Lambda,
\qquad
b_1,\ldots,b_D
\overset{\mathrm{i.i.d.}}{\sim}
\mathrm{Unif}[0,2\pi],
\]
independently. Define
\begin{equation}
\label{eq:rf_map_sec3}
    \phib_D(\xb)
    =
    \sqrt{\frac{2}{D}}
    \big(
    \cos(\omegab_1^\top\xb+b_1),
    \ldots,
    \cos(\omegab_D^\top\xb+b_D)
    \big)^\top
    \in\mathbb{R}^{D}.
\end{equation}
We then construct the corresponding random feature kernel as
\begin{equation}
\label{eq:rf_kernel_sec3}
    k_D(\xb,\zb)
    =
    \langle \phib_D(\xb),\phib_D(\zb)\rangle_2
    =
    \frac{2}{D}\sum_{r=1}^D
    \cos(\omegab_r^\top\xb+b_r)
    \cos(\omegab_r^\top\zb+b_r) \approx k(\xb,\zb).
\end{equation}
The approximation holds by noticing the random-bias construction is unbiased through
\[
\mathbb{E}_{b}\!\left[
2\cos(\omegab^\top\xb+b)\cos(\omegab^\top\zb+b)
\right]
=
\cos\{\omegab^\top(\xb-\zb)\},
\]
and hence
\[
\mathbb{E}_{\omegab,b}\big[k_D(\xb,\zb)\big]=k(\xb,\zb).
\]

For a distribution $\mathbb{P}$, we can define its random feature distribution embedding by
\begin{equation}
\label{eq:rf_distribution_embedding_sec3}
    \mub_{\mathbb{P},D}
    =
    \int_{\mathcal{X}}\phib_D(\xb)\,d\mathbb{P}(\xb)
    \in\mathbb{R}^{D}.
\end{equation}
If $\mathbb{P}_i$ is observed only through a bag
\[
\mathcal{X}_i
=
\{\xb_{i1},\ldots,\xb_{iN_i}\},
\qquad
\xb_{ij}\overset{\mathrm{i.i.d.}}{\sim}\mathbb{P}_i,
\]
then the empirical RF embedding is
\begin{equation}
\label{eq:empirical_rf_distribution_embedding_sec3}
    \widehat{\mub}_{i,D}
    =
    \frac{1}{N_i}\sum_{j=1}^{N_i}\phib_D(\xb_{ij})
    \in\mathbb{R}^{D}.
\end{equation}
Accordingly, the RF approximation to the linear distribution kernel is
\begin{equation}
\label{eq:rf_distribution_kernel_sec3}
    K_D(\mathbb{P},\mathbb{Q})
    =
    \langle \mub_{\mathbb{P},D},\mub_{\mathbb{Q},D}\rangle_2,
\end{equation}
and its empirical version is
\begin{equation}
\label{eq:empirical_rf_distribution_kernel_sec3}
    \widehat K_D(\mathcal{X}_i,\mathcal{X}_j)
    =
    \langle \widehat{\mub}_{i,D},\widehat{\mub}_{j,D}\rangle_2.
\end{equation}

The above construction converts the distribution representation from an RKHS-valued object to a finite-dimensional vector. Next, we construct RaDR under kernel ridge regression.

\subsection{RaDR for kernel ridge regression}
\label{subsec:RaDR_krr}

We now formulate RaDR for kernel ridge regression. For simplicity, this subsection assumes a common training bag size $N_i=N$; the implementation and the empirical studies do not pertain to this restriction.

Let
\begin{equation}\label{eq:rf_matrix}
    \widehat{\Mb}_{D}
=
\begin{bmatrix}
\widehat{\mub}_{1,D}^\top \\
\vdots \\
\widehat{\mub}_{n,D}^\top
\end{bmatrix}
\in\mathbb{R}^{n\times D},
\qquad
\yb=(y_1,\ldots,y_n)^\top.
\end{equation}

The RF Gram matrix over distributions is
\begin{equation}
\label{eq:rf_gram_sec3}
    \widehat{\Kb}_{D,N}
    =
    \widehat{\Mb}_{D}\widehat{\Mb}_{D}^{\top}
    \in\mathbb{R}^{n\times n}.
\end{equation}
For a test distribution $\mathbb{P}$, define
\[
\widehat{\vb}_{D,N}(\mathbb{P})
=
\big(
\langle \widehat{\mub}_{1,D},\mub_{\mathbb{P},D}\rangle_2,
\ldots,
\langle \widehat{\mub}_{n,D},\mub_{\mathbb{P},D}\rangle_2
\big)^\top.
\]
The RaDR kernel ridge predictor is
\begin{equation}
\label{eq:RaDR_dual_sec3}
    \widehat f_{\lambda,N,D}(\mathbb{P})
    =
    \widehat{\vb}_{D,N}(\mathbb{P})^\top
    (\widehat{\Kb}_{D,N}+n\lambda\Ib_n)^{-1}\yb.
\end{equation}
If the test distribution is observed only through a test bag
\[
\mathcal{X}_t
=
\{\xb_{t1},\ldots,\xb_{tN_t}\},
\qquad
\xb_{tj}\overset{\mathrm{i.i.d.}}{\sim}\mathbb{P},
\]
then $\mub_{\mathbb{P},D}$ is replaced by
\[
\widehat{\mub}_{t,D}
=
\frac{1}{N_t}\sum_{j=1}^{N_t}\phib_D(\xb_{tj}).
\]

The finite-dimensional representation also gives a primal form. Since
\[
\widehat f_{\lambda,N,D}(\mathbb{P})
=
\langle \widehat{\wb}_{\lambda,N,D},\mub_{\mathbb{P},D}\rangle_2,
\]
where
\begin{equation}
\label{eq:RaDR_primal_weight_sec3}
    \widehat{\wb}_{\lambda,N,D}
    =
    \widehat{\Mb}_{D}^{\top}
    (\widehat{\Mb}_{D}\widehat{\Mb}_{D}^{\top}+n\lambda\Ib_n)^{-1}\yb,
\end{equation}
we may equivalently compute
\begin{equation}
\label{eq:RaDR_primal_sec3}
    \widehat{\wb}_{\lambda,N,D}
    =
    (\widehat{\Mb}_{D}^{\top}\widehat{\Mb}_{D}+n\lambda\Ib_D)^{-1}
    \widehat{\Mb}_{D}^{\top}\yb,
\end{equation}
via the Woodbury inversion lemma (push-through identity). This primal representation is one of the key computational advantages of RaDR, as exact kernel distribution regression must work with an $n\times n$ Gram matrix whose entries require pairwise within-bag kernel computations. RaDR instead constructs a $D$-dimensional mean embedding vector for each bag and then performs ordinary finite-dimensional ridge regression. Next, we provide a brief discussion on the computational advantage of RaDR.

\subsection{Computation and scalability}
\label{subsec:RaDR_complexity}

The computational difference between exact kernel distribution regression and RaDR can be seen directly from the construction of the Gram matrix. In the balanced case $N_i=N$, evaluating one exact linear distribution-kernel entry requires
\[
K_N(\mathcal{X}_i,\mathcal{X}_j)
=
\frac{1}{N^2}
\sum_{a=1}^N\sum_{b=1}^N
k(\xb_{ia},\xb_{jb}),
\]
and therefore costs $\Oc(N^2)$ kernel evaluations. Constructing the full $n\times n$ Gram matrix costs $\Oc(n^2N^2)$ embedding kernel evaluations, followed by an $\Oc(n^3)$ inversion computation induced by Eq.~\eqref{eq:RaDR_primal_weight_sec3}.

By contrast, RaDR first constructs the bag embeddings
\[
\widehat{\mub}_{i,D}
=
\frac{1}{N}\sum_{j=1}^N\phib_D(\xb_{ij}),
\]
which costs $\Oc(nND)$ random feature evaluations. Once these embeddings are computed, forming the RF Gram matrix costs $\Oc(n^2D)$ if done explicitly, while the primal ridge formulation in Eq.~\eqref{eq:RaDR_primal_sec3} costs $\Oc(nD^2+D^3)$ when $D\lesssim n$, and the dual formulation costs $\Oc(n^2D + n^3)$ for $D\gtrsim n$, where $\lesssim,\gtrsim$ denotes less/greater than or similar to in order. 

In a nutshell, in the regime where $D\lesssim n$ and $D\lesssim N^2$, RaDR replaces the nested pairwise computation of exact kernel distribution regression by a linear pass through the instances followed by finite-dimensional supervised learning. This un-nesting property reduces computational complexity from $\Oc(n^2N^2+ n^3)$ to $\Oc(nND + n^2D +D^3)$, which is even more drastic than the speed up of random features for vector kernel ridge regression.

Moreover, RaDR also supports incremental computation. If a new instance $\xb_{i,N+1}$ is added to bag $i$, the empirical RF embedding can be updated as
\[
\widehat{\mub}_{i,D}^{\,N+1}
=
\frac{N}{N+1}\widehat{\mub}_{i,D}^{\,N}
+
\frac{1}{N+1}\phib_D(\xb_{i,N+1}).
\]
Thus the distribution representation can be maintained online without recomputing all pairwise kernel values. This property becomes especially relevant when distribution-valued inputs are generated from streaming systems or large collections of instances.

\subsection{Risk analysis for RaDR}
\label{subsec:RaDR_risk}

We next state the main risk bound for RaDR. The proof is given in Appendix~\ref{app:RaDR_proofs}. Let $\mathcal{R}[f]$ denote the squared-loss population risk and let
\[
\mathcal{E}(f,f_\rho)
=
\mathcal{R}[f]-\mathcal{R}[f_\rho]
\]
be the excess risk, where $f_\rho$ denotes the optimal population predictor. We use $\widehat f_{\lambda}$ for the ideal one-stage KRR estimator that observes the true population embeddings, $\widehat f_{\lambda,N}$ for the exact two-stage estimator that uses finite training bags but no random features, and $\widehat f_{\lambda,N,D}$ for the RaDR estimator. The analysis is built on the following decomposition
\[
\widehat f_{\lambda,N,D}-f_\rho
=
(\widehat f_{\lambda}-f_\rho)
+
(\widehat f_{\lambda,N}-\widehat f_{\lambda})
+
(\widehat f_{\lambda,N,D}-\widehat f_{\lambda,N}).
\]
The first term is the ordinary one-stage KRR statistical error. The second term is the finite-bag perturbation already present in exact kernel distribution regression. The third term is the new RF approximation perturbation introduced by RaDR.

We first state the one-stage exact-kernel bound abstractly. We assume there exists a function 
$
B_{\mathrm{1s}}(n,\lambda,\delta)
$ 
such that, with probability at least $1-\delta$,
\[
\mathcal{E}(\widehat f_\lambda,f_\rho)
\le
B_{\mathrm{1s}}(n,\lambda,\delta),
\]
the assumption is formally stated in the Appendix. The quantity $B_{\mathrm{1s}}(n,\lambda,\delta)$ is an abstraction that represents a standard one-stage excess-risk bound for kernel ridge regression (KRR). Such bounds are well
studied in the classical source-capacity framework for nonparametric regression; see,
for example, \citet{zhang2005learning} and \citet{caponnetto2007optimal}. 
This abstraction allows the RaDR perturbation analysis to be combined with any valid
one-stage KRR theorem under the desired source and capacity conditions. We provide a concrete example for $B_{\mathrm{1s}}(n,\lambda,\delta)$ in the Appendix. With this abstraction, we now state the following theorem.

\begin{theorem}
\label{thm:exact_twostage_main}
Under the boundedness, source, and one-stage baseline conditions stated in Appendix~\ref{app:RaDR_proofs}, there exists a constant $C_2>0$, independent of $n,N,\lambda,\delta$, such that, with probability at least $1-\delta$,
\begin{equation}
\label{eq:exact_twostage_main_sec3}
    \mathcal{E}(\widehat f_{\lambda,N},f_\rho)
    \le
    2B_{\mathrm{1s}}(n,\lambda,\delta/2)
    +
    C_2\left[
    \frac{\log(2n/\delta)}{N\lambda^2}
    +
    \frac{\log(2n/\delta)}{N\lambda^4}
    \right].
\end{equation}
\end{theorem}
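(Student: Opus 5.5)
We split the excess risk with the elementary inequality $\|a+b\|^2\le 2\|a\|^2+2\|b\|^2$ in $L^2(\rho_X)$:
\[
\mathcal{E}(\widehat f_{\lambda,N},f_\rho)=\|\widehat f_{\lambda,N}-f_\rho\|_{L^2}^2
\le 2\|\widehat f_{\lambda}-f_\rho\|_{L^2}^2+2\|\widehat f_{\lambda,N}-\widehat f_{\lambda}\|_{L^2}^2 .
\]
This uses the squared-loss identity $\mathcal{E}(f,f_\rho)=\|f-f_\rho\|_{L^2}^2$. The first term is controlled by the one-stage baseline assumption with confidence $\delta/2$, which produces $2B_{\mathrm{1s}}(n,\lambda,\delta/2)$. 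The remaining $\delta/2$ of failure probability goes to the bag-sampling perturbation. The rest of the proof bounds $\|\widehat f_{\lambda,N}-\widehat f_\lambda\|_\infty^2$; since $\rho_X$ is a probability measure, this dominates the $L^2$ norm.

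\textbf{Concentration of the bag embeddings.} With the linear outer kernel, both estimators are ridge regressions in $\mathcal{H}_k$: $\widehat f_\lambda=\langle w_\lambda,\cdot\rangle$ and $\widehat f_{\lambda,N}=\langle w_{\lambda,N},\cdot\rangle$, where
\[
w_\lambda=(T+\lambda I)^{-1}g,\qquad w_{\lambda,N}=(\widehat T+\lambda I)^{-1}\widehat g .
\]
Here $T=\frac1n\sum_i\mu_i\otimes\mu_i$, $g=\frac1n\sum_i y_i\mu_i$, and $\widehat T,\widehat g$ are the same quantities with $\mu_i$ replaced by $\hat\mu_i$. Since $k(\xb,\xb)=1$, each $\hat\mu_i-\mu_i$ is an average of $N$ i.i.d.\ centred vectors of norm at most $2$ in a Hilbert space. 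A vector Bernstein/Hoeffding inequality (Pinelis) gives, for each $i$, $\|\hat\mu_i-\mu_i\|_{\mathcal{H}_k}\le c\sqrt{\log(2n/\delta)/N}$ with probability $1-\delta/(2n)$. A union bound over the $n$ bags makes all these hold simultaneously with probability $1-\delta/2$; call this bound $\epsilon_N$.

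\textbf{Deterministic perturbation.} On that event we use the exact decomposition
\[
w_{\lambda,N}-w_\lambda=(\widehat T+\lambda I)^{-1}(\widehat g-g)+(\widehat T+\lambda I)^{-1}(T-\widehat T)(T+\lambda I)^{-1}g .
\]
The following bounds all follow from $\|\mu_i\|,\|\hat\mu_i\|\le1$ and $|y_i|\le C_y$:
\[
\|(\widehat T+\lambda I)^{-1}\|\le\lambda^{-1},\qquad \|\widehat g-g\|\le C_y\epsilon_N,\qquad \|\widehat T-T\|_{op}\le 2\epsilon_N,\qquad \|(T+\lambda I)^{-1}g\|\le C_y\lambda^{-1}.
\]
Hence $\|w_{\lambda,N}-w_\lambda\|\lesssim \epsilon_N/\lambda+\epsilon_N/\lambda^2$. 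Evaluating at any $\mu_{\mathbb{P}}$ with $\|\mu_{\mathbb{P}}\|\le1$ gives $\|\widehat f_{\lambda,N}-\widehat f_\lambda\|_\infty\lesssim \epsilon_N(\lambda^{-1}+\lambda^{-2})$. Squaring and using $(a+b)^2\le 2a^2+2b^2$ yields
\[
\|\widehat f_{\lambda,N}-\widehat f_\lambda\|_\infty^2\le C_2'\,\log(2n/\delta)\left[\frac{1}{N\lambda^2}+\frac{1}{N\lambda^4}\right],
\]
which is exactly the form in \eqref{eq:exact_twostage_main_sec3}.

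\textbf{Combining.} A union bound over the two events of probability $\delta/2$ each gives the theorem, with $C_2$ collecting $C_y$ and the absolute constants. The source condition is needed only if we instead bound $\|(T+\lambda I)^{-1}g\|$ through $f_\rho$; the crude bound $C_y/\lambda$ already suffices for the stated rate, so we do not expect to use it.

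The step requiring the most care is the vector concentration with a union bound giving $\log(2n/\delta)$, and making sure the operator perturbation is measured in operator norm, so that no dependence on $n$ enters beyond the logarithm. We expect the rest to be routine.
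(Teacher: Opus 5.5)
Your proof is correct and shares the paper's skeleton:
\begin{itemize}
\item the same split $\mathcal{E}(\widehat f_{\lambda,N},f_\rho)\le 2\mathcal{E}(\widehat f_\lambda,f_\rho)+2\sup_{\mathbb P}|\widehat f_{\lambda,N}(\mathbb P)-\widehat f_\lambda(\mathbb P)|^2$;
\item the same $\delta/2$--$\delta/2$ allocation between the baseline event and the bag-concentration event;
\item per-bag concentration with a union bound (the paper uses bounded differences instead of Pinelis, and it absorbs $\log(4n/\delta)\le 2\log(2n/\delta)$ explicitly where your constant $c$ does so silently);
\item a resolvent identity that produces the $\lambda^{-1}$ and $\lambda^{-2}$ terms.
\end{itemize}

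The genuine difference is the space in which you carry out the perturbation. The paper stays in the dual. It bounds Gram and test-vector entries, then passes to $\|\Kb_N-\Kb\|_2\le\|\Kb_N-\Kb\|_F\le n\,a_N(\delta)$ and $\|\vb_N(\mathbb P)-\vb(\mathbb P)\|_2\le\sqrt n\,b_N(\delta)$. The powers of $n$ then cancel against $\|(\Kb_N+n\lambda\Ib)^{-1}\|_2\le(n\lambda)^{-1}$ and $\|\yb\|_2\le\sqrt n\,C_y$.

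You work in the primal, with empirical covariance operators on $\mathcal H_k$. There $\|\widehat T-T\|_{\mathrm{op}}\le 2\epsilon_N$ holds without $n$ ever appearing, and you also get a parameter-level bound on $\|w_{\lambda,N}-w_\lambda\|_{\mathcal H_k}$. This is arguably cleaner. It implicitly relies on the push-through identity to match your $w_{\lambda,N}$ with the paper's dual definition of $\widehat f_{\lambda,N}$, which you should state.

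What the dual route buys the paper is reuse. It needs only entrywise kernel perturbation bounds, so the identical argument handles the random-feature step of Theorem~\ref{thm:RaDR_excess_main}. There $\widehat{\Kb}_{D,N}$ is built from vectors in $\mathbb R^D$ rather than elements of $\mathcal H_k$, so there is no immediate common primal space in which to compare $\widehat{\wb}_{\lambda,N,D}$ with $w_{\lambda,N}$.

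Two cosmetic points:
\begin{itemize}
\item Your $T$ clashes with the paper's population covariance operator on $\mathcal H_K$ in Assumption~\ref{ass:RaDR_source_capacity}.
\item You used $k(\xb,\xb)=1$, which belongs to the spectral assumption rather than the boundedness assumption the theorem invokes. With a general $\kappa$ the same argument goes through, with the $\kappa$ factors absorbed into $C_2$.
\end{itemize}
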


Theorem~\ref{thm:exact_twostage_main} provides a rate bound baseline for the exact kernel distribution regression problem. Our bound is a self-contained alternative to \citet{szabo2016learning}'s result, where we separate the one-stage learning and the finite sample effect for clarity. Next, we present the excess-risk bound for RaDR.

\begin{theorem}
\label{thm:RaDR_excess_main}
Under the boundedness, spectral moment, source, and one-stage baseline conditions stated in Appendix~\ref{app:RaDR_proofs}, let $\epsilon_D(\delta)$ denote the uniform random feature approximation radius defined in Appendix~\ref{app:RaDR_proofs}. Then there exists a constant $C_3>0$, independent of $n,N,D,\lambda,\delta$, such that, with probability at least $1-\delta$,
\begin{align}
\label{eq:RaDR_excess_main}
    \mathcal{E}(\widehat f_{\lambda,N,D},f_\rho)
    \le\;&
    4B_{\mathrm{1s}}(n,\lambda,\delta/3) \notag\\
    &+
    C_3\left[
    \frac{\log(3n/\delta)}{N\lambda^2}
    +
    \frac{\log(3n/\delta)}{N\lambda^4}
    +
    \frac{\epsilon_D(\delta/3)^2}{\lambda^2}
    +
    \frac{\epsilon_D(\delta/3)^2}{\lambda^4}
    \right].
\end{align}
\end{theorem}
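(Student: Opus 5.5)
We follow the three-term decomposition $\widehat f_{\lambda,N,D}-f_\rho=(\widehat f_{\lambda,N}-f_\rho)+(\widehat f_{\lambda,N,D}-\widehat f_{\lambda,N})$ and use Theorem~\ref{thm:exact_twostage_main} for the first part. For square loss, $\mathcal{E}(f,f_\rho)=\|f-f_\rho\|_{L^2(\rho)}^2$, so $(a+b)^2\le 2a^2+2b^2$ gives
\[
\mathcal{E}(\widehat f_{\lambda,N,D},f_\rho)\le 2\,\mathcal{E}(\widehat f_{\lambda,N},f_\rho)+2\,\|\widehat f_{\lambda,N,D}-\widehat f_{\lambda,N}\|_{\infty}^2 .
\]
We split the failure probability $\delta$ into three events of probability at most $\delta/3$ each: the one-stage event, the bag-sampling concentration event, and the random feature event. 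Theorem~\ref{thm:exact_twostage_main} is run with $\delta$ replaced by $2\delta/3$, so that its internal split yields $\delta/3$ for each of its two pieces. This step produces the term $4B_{\mathrm{1s}}(n,\lambda,\delta/3)$ and the $\log(3n/\delta)/(N\lambda^2)$ and $\log(3n/\delta)/(N\lambda^4)$ terms.

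The remaining task is to bound the RF perturbation deterministically, on the event that the random feature approximation is uniformly $\epsilon_D$-accurate. We write both estimators in the common dual form $\vb^\top(\Kb+n\lambda\Ib)^{-1}\yb$. Here $\Kb=\Kb_N$ with entries $\langle\hat\mu_i,\hat\mu_j\rangle_{\Hc_k}$, and $\widehat\Kb_{D,N}$ has entries $\langle\widehat\mub_{i,D},\widehat\mub_{j,D}\rangle_2$. Because both kernels are averages of $k$ and $k_D$ over the bag instances, we have
\[
\bigl|\widehat K_D(\mathcal X_i,\mathcal X_j)-K_N(\mathcal X_i,\mathcal X_j)\bigr|\le \sup_{\xb,\zb\in\mathcal X}|k_D(\xb,\zb)-k(\xb,\zb)|\le\epsilon_D(\delta/3).
\]
The same bound holds entrywise for the test vectors $\vb$. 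The spectral moment condition is what makes the uniform (Rahimi--Recht style) bound $\epsilon_D$ valid over the bounded domain.

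The perturbation identity is
\[
\widehat{\vb}^\top\widehat{\Kb}_\lambda^{-1}\yb-\vb^\top\Kb_\lambda^{-1}\yb=(\widehat\vb-\vb)^\top\widehat\Kb_\lambda^{-1}\yb+\vb^\top\widehat\Kb_\lambda^{-1}(\Kb-\widehat\Kb)\Kb_\lambda^{-1}\yb ,
\]
where $\Kb_\lambda=\Kb+n\lambda\Ib$ and $\widehat\Kb_\lambda=\widehat\Kb+n\lambda\Ib$. The first term gives $\|\widehat\vb-\vb\|_2\|\yb\|_2/(n\lambda)\le \sqrt n\,\epsilon_D\cdot\sqrt n\, B_y/(n\lambda)=O(\epsilon_D/\lambda)$. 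The second term gives $\|\vb\|_2\,\|\Kb-\widehat\Kb\|_{op}\,\|\yb\|_2/(n\lambda)^2\le \sqrt n\cdot n\epsilon_D\cdot\sqrt n B_y/(n^2\lambda^2)=O(\epsilon_D/\lambda^2)$. In these estimates we use $|k|,|k_D|\le 2$ and the Frobenius bound $\|\Kb-\widehat\Kb\|_{op}\le n\epsilon_D$. Squaring yields the $\epsilon_D^2/\lambda^2$ and $\epsilon_D^2/\lambda^4$ terms, and we absorb the constants, including $B_y$ and the factor 2, into $C_3$. This mirrors exactly how the finite-bag terms arise in Theorem~\ref{thm:exact_twostage_main}, with $\log(n/\delta)/N$ replaced by $\epsilon_D^2$.

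A union bound over the three events concludes the proof. The step I expect to be delicate is making the sup-norm perturbation legitimate in $L^2(\rho)$. The RaDR predictor is evaluated on RF embeddings $\mub_{\mathbb P,D}$ while $\widehat f_{\lambda,N}$ uses $\mu_{\mathbb P}$, so the test-vector comparison must hold uniformly over all $\mathbb P$. The uniform kernel bound handles this, provided $\epsilon_D$ is defined as a sup over $\mathcal X\times\mathcal X$ and the RF event is independent of the bag draws. If $\epsilon_D$ were instead only pointwise, an additional net argument would be needed there.
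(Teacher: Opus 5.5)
Your proposal is correct and follows essentially the paper's argument: invoking Theorem~\ref{thm:exact_twostage_main} at level $2\delta/3$, then applying the $L^2$-to-sup transfer with the resolvent-based RF perturbation bound, is the same as the paper's two applications of the transfer lemma on the intersection of three $\delta/3$ events, which yields $4B_{\mathrm{1s}}(n,\lambda,\delta/3)+4\Delta_N(\delta/3)^2+2\Delta_D(\delta/3)^2$. The independence proviso you flag is not needed: the RF event depends only on the feature draw and on a supremum over the fixed set $\mathcal X\times\mathcal X$, so a plain union bound suffices, as the paper notes.
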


Theorem~\ref{thm:RaDR_excess_main} folds the random feature approximation effect into the exact kernel distribution regression excess-risk bound in Theorem~\ref{thm:exact_twostage_main}. The first term is the one-stage learning error over true distribution embeddings. The second and third terms are the price of observing each distribution through only $N$ samples. The last two terms are the price of replacing the exact RKHS embedding by $D$ random feature basis. The regularization term $\lambda$ plays an important role in interpreting the results. A smaller $\lambda$ improves approximation bias in kernel ridge regression, but it amplifies both the finite-bag perturbation and the random-feature perturbation.

Based on the above two Theorem statements, we further provide an additional result on test distribution finite sample effect in the Appendix.

\begin{remark}
Theorems~\ref{thm:exact_twostage_main} and~\ref{thm:RaDR_excess_main} are population excess-risk statements that evaluate predictors on true test distributions. In deployment, a fresh test distribution is typically observed only through a test bag of size $N_t$. Appendix~\ref{app:RaDR_deployment} shows that this creates an additional pointwise perturbation of order $N_t^{-1/2}$ with a single factor $\lambda^{-1}$.
\end{remark}

\subsection{Order-wise No-loss regimes}
\label{subsec:RaDR_noloss}

From Theorems~\ref{thm:exact_twostage_main} and \ref{thm:RaDR_excess_main}, one can observe that RaDR is able to achieve the same order wise excess-risk upper bound when the finite sample terms are order wise equal to the random feature approximation terms. Specifically, RaDR is in a no-loss regime whenever
\begin{equation}
\label{eq:RaDR_noloss_condition_general}
    \epsilon_D(\delta/3)^2\bigl(\lambda^{-2}+\lambda^{-4}\bigr)
\lesssim
\frac{\log(3n/\delta)}{N}\bigl(\lambda^{-2}+\lambda^{-4}\bigr).
\end{equation}
where $\asymp$ denotes asymptotic equivalence. Therefore, one can show that a sufficient condition for Eq.~\eqref{eq:RaDR_noloss_condition_general} to hold is
\begin{equation}
\label{eq:RaDR_feature_bag_match}
\epsilon_D(\delta/3)^2
\le
C_0\frac{\log(3n/\delta)}{N}.
\end{equation}

Proposition~2 of \citet{sutherland2015error} gives a
uniform kernel-approximation bound on a compact instance
domain under a finite second spectral moment.
Consequently, for fixed $\delta\in(0,1)$ and fixed
kernel parameters, there exists a constant
$C_\delta>0$ such that
\[
\epsilon_D(\delta/3)^2
\le C_\delta\frac{\log(eD)}{D},
\]
for all sufficiently large $D$. $e$ here is the base of natural logarithm.
The derivation is included in the Appendix. Hence a convenient
sufficient feature-budget condition is
\begin{equation}
\label{eq:RaDR_feature_budget_match}
\frac{\log(eD)}{D}
\lesssim
\frac{\log n}{N},
\end{equation}
which is true when $D \asymp N$. Based on the above observation, we next state the following Corollary.

\begin{corollary}
\label{cor:RaDR_D_threshold_app}
Under the assumptions of Theorem~\ref{thm:RaDR_excess_main},
fix $\delta\in(0,1)$. Suppose that $N\to\infty$ as
$n\to\infty$, $N\le n^p$ for some fixed $p>0$, and
$D\asymp N$. Let $\lambda=\lambda_n>0$ satisfy the chosen
one-stage baseline conditions at confidence level $\delta/3$.
Then RaDR preserves the exact two-stage upper-bound order with probability at
least $1-\delta$ for all sufficiently large $n$.

Moreover, denoting the computational complexity of RaDR with $T_{\mathrm{RaDR}}$, and exact kernel distribution regression with $T_{\mathrm{exact}}$, choosing the less expensive primal or dual formulation gives
\[
T_{\mathrm{RaDR}}=\Oc(nN^2),
\qquad
T_{\mathrm{exact}}
=\Theta(n^2N^2+n^3),
\]
therefore,
\[
T_{\mathrm{RaDR}}/
T_{\mathrm{exact}}
=\Oc(n^{-1}).
\]
\end{corollary}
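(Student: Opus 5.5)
The plan has two parts: first show that the random-feature terms of Theorem~\ref{thm:RaDR_excess_main} are dominated by the finite-bag terms, then do the complexity count.

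\emph{Step 1: feature error versus bag error.} Start from Theorem~\ref{thm:RaDR_excess_main} at confidence $1-\delta$. Fix $\delta$ and the kernel parameters. The bound quoted after Eq.~\eqref{eq:RaDR_feature_bag_match}, derived from Proposition~2 of \citet{sutherland2015error}, gives
\[
\epsilon_D(\delta/3)^2\le C_\delta\frac{\log(eD)}{D}
\]
for all $D\ge D_0(\delta)$. Since $N\to\infty$ and $D\asymp N$, we have $D\ge D_0$ for all large $n$. With $c_1N\le D\le c_2N$ this yields
\[
\epsilon_D(\delta/3)^2\le \frac{C_\delta}{c_1}\cdot\frac{\log(ec_2N)}{N}.
\]
The condition $N\le n^p$ gives $\log(ec_2 N)\le \log(ec_2)+p\log n\le C'\log(3n/\delta)$ for large $n$. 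This is exactly the sufficient condition Eq.~\eqref{eq:RaDR_feature_bag_match} with $C_0=C_\delta C'/c_1$.

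\emph{Step 2: match the exact two-stage order.} Multiplying by $(\lambda^{-2}+\lambda^{-4})$ shows the last two terms of Eq.~\eqref{eq:RaDR_excess_main} are at most $C_0$ times the finite-bag terms. The total bound is therefore at most
\[
4B_{\mathrm{1s}}(n,\lambda,\delta/3)+C_3(1+C_0)\frac{\log(3n/\delta)}{N}\left(\lambda^{-2}+\lambda^{-4}\right).
\]
Compare this with Eq.~\eqref{eq:exact_twostage_main_sec3} at the same confidence level. The factor between $\log(3n/\delta)$ and $\log(2n/\delta)$ is bounded. Replacing $\delta/2$ by $\delta/3$ in $B_{\mathrm{1s}}$ changes it only by constants, since standard bounds depend on $\delta$ through $\log(1/\delta)$; this is the step that uses the one-stage baseline assumption with $\lambda_n$ chosen at level $\delta/3$. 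The constant prefactors $2$ and $4$ are also immaterial. Hence the two upper bounds agree up to constants independent of $n$, with probability at least $1-\delta$ for all sufficiently large $n$.

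\emph{Step 3: complexity.} Use the accounting of Section~\ref{subsec:RaDR_complexity} with $D\asymp N$. Building the embeddings costs $\Oc(nND)=\Oc(nN^2)$. If $N\lesssim n$, the primal solve costs $\Oc(nD^2+D^3)=\Oc(nN^2)$. If $N\gtrsim n$, the dual solve costs $\Oc(n^2D+n^3)=\Oc(n^2N)\subseteq\Oc(nN^2)$. Either way, taking the cheaper form gives $T_{\mathrm{RaDR}}=\Oc(nN^2)$. The exact method must evaluate all $n^2N^2$ cross-bag pairs and solve an $n\times n$ system, so $T_{\mathrm{exact}}=\Theta(n^2N^2+n^3)$. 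Then
\[
\frac{T_{\mathrm{RaDR}}}{T_{\mathrm{exact}}}\le \frac{C\,nN^2}{n^2N^2}=\Oc(n^{-1}).
\]

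\emph{Main obstacle.} I expect the delicate point to be Step 2's claim that $B_{\mathrm{1s}}(n,\lambda,\delta/3)\asymp B_{\mathrm{1s}}(n,\lambda,\delta/2)$. This is not automatic for an abstract $B_{\mathrm{1s}}$. I would handle it by reading "upper-bound order" as comparison at a common confidence level, and invoking the baseline assumption that $B_{\mathrm{1s}}$ depends on $\delta$ only polylogarithmically, as in the concrete example given in the Appendix.
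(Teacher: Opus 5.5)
Your Steps 1 and 3 match the paper: the reduction $\log(eD)\lesssim\log n$ via $N\le n^p$, and the primal/dual case split giving $T_{\mathrm{RaDR}}=\Oc(nN^2)$. The gap is in Step 2, and it is the one you flagged. You compare the RaDR bound, which contains $B_{\mathrm{1s}}(n,\lambda_n,\delta/3)$, with Theorem~\ref{thm:exact_twostage_main} at confidence $1-\delta$, which contains $B_{\mathrm{1s}}(n,\lambda_n,\delta/2)$. This requires $B_{\mathrm{1s}}(n,\lambda_n,\delta/3)\lesssim B_{\mathrm{1s}}(n,\lambda_n,\delta/2)$ uniformly in $n$, which is the nontrivial direction. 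Assumption~\ref{ass:RaDR_baseline_exact} only posits some function $B_{\mathrm{1s}}$ with no structure in $\delta$, so this inequality does not follow from the hypotheses. Patching it with a ``polylogarithmic in $1/\delta$'' condition adds an assumption the corollary does not make. Your route also silently needs the baseline conditions for $\lambda_n$ at level $\delta/2$, in order to invoke Theorem~\ref{thm:exact_twostage_main} at confidence $1-\delta$. The corollary only grants them at level $\delta/3$.

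The missing idea is to re-level the comparison instead of comparing $B_{\mathrm{1s}}$ at two different $\delta$-arguments. Apply Theorem~\ref{thm:exact_twostage_main} with failure probability $2\delta/3$. Its one-stage term becomes $2B_{\mathrm{1s}}(n,\lambda_n,\delta/3)$, and its log factor becomes $\log\bigl(2n/(2\delta/3)\bigr)=\log(3n/\delta)$. Set $G_n=\frac{\log(3n/\delta)}{N}(\lambda_n^{-2}+\lambda_n^{-4})$. The exact and RaDR guarantees then read $2B_{\mathrm{1s}}(n,\lambda_n,\delta/3)+C_2G_n$ and $4B_{\mathrm{1s}}(n,\lambda_n,\delta/3)+CG_n$. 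The second is at most $\max\{2,C/C_2\}$ times the first, with no assumption on how $B_{\mathrm{1s}}$ depends on $\delta$, and the baseline is used only at level $\delta/3$, exactly as the hypothesis states. Both bounds hold simultaneously on the three-event intersection from the proof of Theorem~\ref{thm:RaDR_excess_main}: the baseline, bag-concentration, and RF events, each at level $\delta/3$. That intersection has probability at least $1-\delta$, so the comparison is at a common probability. This is exactly how the paper proceeds.
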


The proof of Corollary~\ref{cor:RaDR_D_threshold_app} is given in Appendix~\ref{app:RaDR_noloss_proofs}. Corollary~\ref{cor:RaDR_D_threshold_app} provides an important insight that RaDR can achieve a significant speed up while provably preserving the exact order-wise excess-risk upper bound of two-stage kernel distribution regression. At the same number of bags and instances per bag, the training cost decreases at the rate of $\Oc(n^{-1})$. This result is true regardless of whether $D \lesssim n$ is true or not. Moreover, the above Corollary is a conservative characterization of the computational speed up for RaDR. As we will show later on in the empirical results, RaDR training time remains significantly shorter than that of exact kernel distribution regression when $D$ is larger than both $N$ and $n$, while the performance gap vanishes.

\subsection{Extension to binary classification}
\label{subsec:RaDR_svm}

Theoretical properties of kernel distribution regression are predominantly presented in terms of kernel ridge regression, yet one of the most common usages of kernel distribution regression is for classification \citet{kachaiev2024learning}. Therefore, in this subsection, we briefly discuss how the RaDR construction extends to binary classification with soft-margin SVM. Let $y_i\in\{-1,+1\}$ and consider the hinge loss
\[
\ell_{\mathrm{h}}(u)=(1-u)_+.
\]
We investigate RaDR SVM under a linear SVM on the RF distribution embeddings:
\begin{equation}
\label{eq:svm_rf_primal_sec3}
    \mathcal{J}_{n,N,D}(\betab)
    =
    \frac{1}{n}\sum_{i=1}^n
    \ell_{\mathrm{h}}\!\left(y_i\betab^\top\widehat{\mub}_{i,D}\right)
    +
    \frac{\lambda}{2}\|\betab\|_2^2.
\end{equation}
In this case, the same finite-dimensional representation used for RaDR regression also provides a scalable distribution classifier. For the RF approximation, it is useful to compare the exact and RF SVM objectives through their duals. Let
\[
\Yb=\mathrm{diag}(y_1,\ldots,y_n),
\qquad
\mathcal{A}
=
\{\alphab\in\mathbb{R}^n:0\le \alpha_i\le 1/n\}.
\]
The exact two-stage SVM dual is
\begin{equation}
\label{eq:svm_exact_dual_sec3}
    \mathcal{D}_{N}(\alphab)
    =
    \sum_{i=1}^n \alpha_i
    -
    \frac{1}{2\lambda}\alphab^\top \Yb \Kb_N \Yb \alphab,
    \qquad \alphab\in\mathcal{A},
\end{equation}
where $\Kb_N$ is the exact empirical distribution-kernel matrix. The RF dual replaces $\Kb_N$ by $\widehat{\Kb}_{D,N}$:
\begin{equation}
\label{eq:svm_rf_dual_sec3}
    \mathcal{D}_{N,D}(\alphab)
    =
    \sum_{i=1}^n \alpha_i
    -
    \frac{1}{2\lambda}\alphab^\top \Yb \widehat{\Kb}_{D,N} \Yb \alphab.
\end{equation}

\begin{proposition}
\label{prop:svm_dual_sec3}
Suppose the element wise RF approximation error satisfies
\[
\max_{1\le i,j\le n}
\left|
[\widehat{\Kb}_{D,N}]_{ij}-[\Kb_N]_{ij}
\right|
\le
\epsilon_D,
\]
the exact and RF SVM dual objectives satisfy
\begin{equation}
\label{eq:svm_dual_perturb_sec3}
    \sup_{\alphab\in\mathcal{A}}
    \left|
    \mathcal{D}_{N,D}(\alphab)-\mathcal{D}_{N}(\alphab)
    \right|
    \le
    \frac{\epsilon_D}{2\lambda}.
\end{equation}
Consequently, the corresponding optimal dual objective values differ by at most $\epsilon_D/(2\lambda)$.
\end{proposition}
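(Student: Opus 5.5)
The plan is a direct perturbation argument on the quadratic part of the two duals. The linear terms $\sum_i\alpha_i$ are identical in $\mathcal{D}_{N,D}$ and $\mathcal{D}_N$, so for every $\alphab\in\mathcal{A}$
\[
\mathcal{D}_{N,D}(\alphab)-\mathcal{D}_{N}(\alphab)
=
-\frac{1}{2\lambda}\,\alphab^\top \Yb\bigl(\widehat{\Kb}_{D,N}-\Kb_N\bigr)\Yb\alphab .
\]
Write $\Delta=\widehat{\Kb}_{D,N}-\Kb_N$. Expand the quadratic form entrywise as $\sum_{i,j}\alpha_i\alpha_j y_iy_j\Delta_{ij}$. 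Since $|y_i|=1$ and $\alpha_i\ge 0$, the triangle inequality and the elementwise hypothesis give
\[
\bigl|\alphab^\top\Yb\Delta\Yb\alphab\bigr|
\le \epsilon_D\Bigl(\sum_{i=1}^n\alpha_i\Bigr)^2 .
\]

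The only step needing care is controlling $(\sum_i\alpha_i)^2$ on $\mathcal{A}$. The box constraint $0\le\alpha_i\le 1/n$ gives $\sum_i\alpha_i\le n\cdot\frac1n=1$. Hence $|\alphab^\top\Yb\Delta\Yb\alphab|\le\epsilon_D$ uniformly over $\mathcal{A}$. Dividing by $2\lambda$ and taking the supremum yields \eqref{eq:svm_dual_perturb_sec3}. This is precisely where the $1/n$ normalization in $\mathcal{A}$, inherited from the $\frac1n$ in the primal \eqref{eq:svm_rf_primal_sec3}, matters. It prevents any $n$-dependence in the bound, which a cruder argument via $\|\Delta\|_{\mathrm{op}}\le n\epsilon_D$ and $\|\alphab\|_2^2\le 1/n$ would also reach, but less transparently.

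For the consequence on optimal values, I would use the elementary fact that if two functions satisfy $\sup_{\mathcal{A}}|g-h|\le\eta$, then $|\sup_{\mathcal{A}}g-\sup_{\mathcal{A}}h|\le\eta$. Concretely, let $\alphab^\star$ maximize $\mathcal{D}_N$; maxima exist since $\mathcal{A}$ is compact and the objectives are continuous. Then
\[
\max\mathcal{D}_{N,D}\ge\mathcal{D}_{N,D}(\alphab^\star)\ge\mathcal{D}_N(\alphab^\star)-\frac{\epsilon_D}{2\lambda}=\max\mathcal{D}_N-\frac{\epsilon_D}{2\lambda},
\]
and the reverse inequality follows symmetrically. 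No step is a real obstacle; the only point to get right is the $\ell_1$ bound on $\alphab$ from the box constraint.
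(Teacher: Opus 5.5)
Your proposal is correct and follows the paper's proof essentially step for step. Both use the entrywise bound $|\alphab^\top\Yb\Delta\Yb\alphab|\le\epsilon_D(\sum_i\alpha_i)^2\le\epsilon_D$ from the box constraint, followed by the two-sided comparison of maximizers that transfers the uniform bound to the optimal dual values; the paper additionally remarks that strong duality carries the same gap to the primal optima, which the statement does not require.
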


Proposition~\ref{prop:svm_dual_sec3} establishes that, under uniform kernel approximation, the RF SVM dual objective remains close to the exact two-stage SVM dual objective. As for a complete excess-risk theorem mirroring Theorem~\ref{thm:RaDR_excess_main}, we leave it to future extensions.

\section{Enable Adaptivity with Score-based Random Features}
\label{sec:A-RaDR}

In Section 3, we analyzed RaDR under two-stage sampling and characterized the relationship between its feature budget, excess-risk upper bound, and computational cost. Beyond their computational advantages, random features possess another important property that distinguishes them from other kernel approximation methods. Unlike Nystr\"om approximations or randomized sketching methods, whose approximations are largely determined by sampled columns or matrix structure, random features are generated directly from an explicit sampling distribution over the kernel's spectral measure. This explicit representation naturally permits the feature distribution to be modified, reweighted, or selectively sampled while maintaining a finite-dimensional embedding, making random features particularly well suited for adaptive kernel learning \citep{rahimi2007random, rudi2017generalization, sun2018but, avron2017random, li2021towards, yang2012nystrom}.

The importance of adaptivity stems from a fundamental characteristic of kernel methods: their statistical performance is closely tied to the choice of kernel \citep{smola1998learning,hofmann2008kernel}. Consequently, a substantial body of work has studied kernel learning and multiple kernel learning to better align kernel representations with the underlying data geometry or downstream prediction task \citep{cristianini2001kernel, lanckriet2004learning, bach2004multiple, sonnenburg2006large, cortes2010two}. While effective, these approaches often require repeated kernel construction or additional optimization procedures, potentially diminishing the computational benefits offered by scalable kernel approximations.

A particularly relevant direction is score-based random feature sampling. Instead of drawing random features solely from the original spectral measure, these methods modify the feature distribution according to task-specific information such as prediction objectives, kernel leverage, or alignment criteria \citep{sinha2016learning, shahrampour2018data, avron2017random}. This viewpoint bridges kernel approximation and kernel learning: the same finite-dimensional representation that enables scalability can also be adapted to allocate computational resources toward features that are most informative for prediction.

Motivated by this perspective, we propose Adaptive Random Feature Distribution Regression (A-RaDR), a score-guided extension of RaDR that incorporates adaptive feature selection into distribution regression. The framework expresses established criteria---including kernel alignment \citet{sinha2016learning}, leverage scores \citep{li2021towards}, and distribution variance \citep{kachaiev2024learning}---through a common score-matrix formulation applied to distribution embeddings. These scores guide the selection of a feature subset from a candidate pool, allowing the representation to adapt to the chosen criterion within the same finite-dimensional RaDR framework.

\subsection{The two-stage random feature sampling formulation}
\label{subsec:adaptive_dist_features}

We continue with the random-bias cosine features introduced in
Section~\ref{subsec:RaDR_features}. For simplicity, we denote a
frequency--bias pair by $\eta=(\omegab,b)$ and its sampling
distribution by $\pi=\Lambda\otimes\mathrm{Unif}[0,2\pi]$.
To examine the contribution of an individual feature, we further
denote its population and empirical averages by
\begin{align}
    z_{\mathbb P}(\eta)
    &=
    \sqrt{2}\int_{\mathcal X}
    \cos(\omegab^\top\xb+b)\,d\mathbb P(\xb),
    \label{eq:adaptive_scalar_dist_feature}\\
    \widehat z_i(\eta)
    &=
    \frac{\sqrt{2}}{N_i}\sum_{j=1}^{N_i}
    \cos(\omegab^\top\xb_{ij}+b).
    \label{eq:adaptive_empirical_scalar_feature}
\end{align}
Here $\mathbb P\in\mathcal M_1^+(\mathcal X)$, and
$\widehat z_i(\eta)$ estimates $z_{\mathbb P_i}(\eta)$ using the
observed bag $\mathcal X_i$. By the random-bias cosine identity,
the linear distribution kernel satisfies
\begin{equation}
\label{eq:adaptive_dist_kernel_integral}
    K(\mathbb P,\mathbb Q)
    =
    \mathbb E_{\eta\sim\pi}
    \big[z_{\mathbb P}(\eta)z_{\mathbb Q}(\eta)\big].
\end{equation}

To connect this notation with Section~\ref{sec:RaDR}, notice that
the $r$th coordinate of $\mub_{\mathbb P,D}$ is
$D^{-1/2}z_{\mathbb P}(\eta_r)$.
Collecting the empirical averages of one feature across all bags
therefore gives
\begin{equation}
\label{eq:adaptive_z_vector}
    \widehat{\zb}(\eta)
    =
    \big(\widehat z_1(\eta),\ldots,\widehat z_n(\eta)\big)^\top
    \in\mathbb R^n,
    \qquad
    \widehat{\Mb}_D
    =
    \frac{1}{\sqrt D}
    \begin{bmatrix}
        \widehat{\zb}(\eta_1)&\cdots&\widehat{\zb}(\eta_D)
    \end{bmatrix}.
\end{equation}
Thus, each vector $\widehat{\zb}(\eta_r)$ represents one column
of the existing RF embedding matrix, up to the factor
$D^{-1/2}$. This arrangement allows us to evaluate and select
features individually.

The first stage of A-RaDR starts by drawing a candidate pool
$\widetilde\eta_1,\ldots,\widetilde\eta_{M_0}
\overset{\mathrm{i.i.d.}}{\sim}\pi$, independently of the training data, where $M_0\ge D$. For convenience, we use $\widehat{\zb}_m=\widehat{\zb}(\widetilde\eta_m)$ and $[M_0]=\{1,\ldots,M_0\}$. The corresponding pool embedding matrix and Gram matrix are
\begin{equation}\label{eq:adaptive_pool_gram}
    \Zb_0
    =
    \frac{1}{\sqrt{M_0}}
    \begin{bmatrix}
        \widehat{\zb}_1&\cdots&\widehat{\zb}_{M_0}
    \end{bmatrix}
    \in\mathbb R^{n\times M_0}, \qquad
    \widehat{\Kb}_0
    =
    \Zb_0\Zb_0^\top.
\end{equation}

In the second stage, we score the candidates and select a subset $S=\{m_1,\ldots,m_D\}\subseteq[M_0]$. The selected embedding of bag $i$ becomes
\begin{equation}
\label{eq:adaptive_selected_embedding}
    \widehat{\mub}_{i,S}
    =
    \frac{1}{\sqrt D}
    \big(
        \widehat z_i(\widetilde\eta_{m_1}),
        \ldots,
        \widehat z_i(\widetilde\eta_{m_D})
    \big)^\top.
\end{equation}
Accordingly, the selected training matrix is
$\widehat{\Mb}_S
=D^{-1/2}[\widehat{\zb}_{m_1},\ldots,\widehat{\zb}_{m_D}]$,
whose $i$th row is $\widehat{\mub}_{i,S}^\top$.
Its Gram matrix satisfies
\begin{equation}
\label{eq:adaptive_selected_kernel}
    \widehat{\Kb}_S
    =
    \widehat{\Mb}_S\widehat{\Mb}_S^\top
    =
    \frac{1}{D}\sum_{m\in S}
    \widehat{\zb}_m\widehat{\zb}_m^\top.
\end{equation}
The $(i,j)$th entry is therefore the selected RF distribution
kernel
$\widehat K_S(\mathcal X_i,\mathcal X_j)
=\langle\widehat{\mub}_{i,S},\widehat{\mub}_{j,S}\rangle_2$.

The same selected features are used for prediction. For a test
distribution $\mathbb P$, its embedding is
\begin{equation}
\label{eq:adaptive_selected_test_embedding}
    \mub_{\mathbb P,S}
    =
    \frac{1}{\sqrt D}
    \big(
        z_{\mathbb P}(\widetilde\eta_{m_1}),
        \ldots,
        z_{\mathbb P}(\widetilde\eta_{m_D})
    \big)^\top.
\end{equation}
When $\mathbb P$ is observed through a test bag $\mathcal X_t$,
we replace the population averages by their empirical counterparts
in \eqref{eq:adaptive_empirical_scalar_feature}, using $N_t$
samples, and denote the resulting embedding by
$\widehat{\mub}_{t,S}$.

\subsection{A general trace-based score}
\label{subsec:adaptive_general_score}

Adaptive random feature methods select features according to
different objectives, such as response alignment, kernel geometry, or variation across inputs \citep{li2019towards, wang2021orcca}. We next show that these objectives can be expressed through a common trace formulation, which leads to a simple feature selection rule.

Specifically, for a score matrix $\Bb\in\mathbb R^{n\times n}$
and a distribution-level Gram matrix $\Kb$, consider
\begin{equation}
\label{eq:adaptive_trace_objective}
    \mathcal J_B(\Kb)=\mathrm{tr}(\Kb\Bb).
\end{equation}
Since each candidate contributes a rank-one matrix to
\eqref{eq:adaptive_selected_kernel}, its contribution to the
objective is determined by
\begin{equation}
\label{eq:adaptive_general_score}
    q_{B,m}
    =
    \widehat{\zb}_m^\top\Bb\widehat{\zb}_m
    =
    \mathrm{tr}
    \big(\widehat{\zb}_m\widehat{\zb}_m^\top\Bb\big).
\end{equation}
For computation, we form the score vector
\begin{equation}
\label{eq:adaptive_score_vector}
    (q_{B,1},\ldots,q_{B,M_0})^\top
    =
    M_0\,\mathrm{diag}(\Zb_0^\top\Bb\Zb_0),
\end{equation}
where $\mathrm{diag}$ represents the diagonal operator.

The following proposition establishes the selection rule and its comparison with vanilla random features.

\begin{proposition}
\label{prop:adaptive_trace_selection}
Fix the training data and a realized candidate pool. Let $\Bb$ be a score matrix held fixed across all candidate subsets. Then, for every $S\subseteq[M_0]$ with $|S|=D$, 
\begin{equation}
\label{eq:adaptive_selected_trace}
    \mathcal J_B(\widehat{\Kb}_S)
    =
    \frac{1}{D}\sum_{m\in S}q_{B,m}.
\end{equation}
Consequently, a maximizing subset
\begin{equation}
\label{eq:adaptive_selected_set}
    S_B
    \in
    \argmax_{\substack{S\subseteq[M_0]\\|S|=D}}
    \mathcal J_B(\widehat{\Kb}_S)
\end{equation}
is obtained by selecting the $D$ largest scores.

Moreover, let $\widehat{\Kb}_{D,N}$ be the vanilla RaDR
Gram matrix constructed from the first $D$ candidates, we have the following identity,
\[
    \mathcal J_B(\widehat{\Kb}_{S_B})
    \ge
    \mathcal J_B(\widehat{\Kb}_{D,N}).
\]
where $\Kb_{S_B}$ denote the Gram matrix formed with features selected via $\Bb$. If $\Bb$ is also fixed conditional on the training data and
independent of the candidate pool, then
\begin{equation}
\label{eq:adaptive_expected_dominance}
    \mathbb E_{\mathrm{pool}}
    \big[\mathcal J_B(\widehat{\Kb}_{S_B})\big]
    \ge
    \mathbb E_{\mathrm{pool}}
    \big[\mathcal J_B(\widehat{\Kb}_{D,N})\big]
    =
    \mathcal J_B(\Kb_N),
\end{equation}
where $\Kb_N$ is the exact finite-bag Gram matrix.
The expectations are conditional on the training data and taken
over the i.i.d.\ candidate pool drawn from $\pi$.
\end{proposition}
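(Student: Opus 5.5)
The argument is linear algebra plus one expectation computation, and I would proceed in three steps.

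\textbf{Step 1: the trace identity.} By \eqref{eq:adaptive_selected_kernel}, $\widehat{\Kb}_S = D^{-1}\sum_{m\in S}\widehat{\zb}_m\widehat{\zb}_m^\top$. Linearity of the trace, together with $\mathrm{tr}(\widehat{\zb}_m\widehat{\zb}_m^\top\Bb)=\widehat{\zb}_m^\top\Bb\widehat{\zb}_m=q_{B,m}$ from \eqref{eq:adaptive_general_score}, gives \eqref{eq:adaptive_selected_trace} directly. Because $\Bb$ is held fixed across subsets, each $q_{B,m}$ is a number that does not depend on $S$. Maximizing $\frac1D\sum_{m\in S}q_{B,m}$ over $|S|=D$ is then a separable problem, solved by taking the $D$ largest scores. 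An exchange argument makes this precise: if some $m\in S$ and $m'\notin S$ satisfy $q_{B,m'}>q_{B,m}$, swapping them strictly increases the objective. Ties can be broken arbitrarily, which is why the statement writes ``$\in\argmax$''.

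\textbf{Step 2: deterministic dominance.} The vanilla RaDR matrix built from the first $D$ candidates equals $\widehat{\Kb}_{S_0}$ with $S_0=\{1,\dots,D\}$. Here I would check that \eqref{eq:adaptive_z_vector} and \eqref{eq:adaptive_selected_kernel} use the same $D^{-1/2}$ normalization, and they do. Since $S_0$ is feasible, maximality of $S_B$ gives $\mathcal J_B(\widehat{\Kb}_{S_B})\ge\mathcal J_B(\widehat{\Kb}_{S_0})=\mathcal J_B(\widehat{\Kb}_{D,N})$ for every realization of the pool.

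\textbf{Step 3: expectations.} Taking $\mathbb E_{\mathrm{pool}}$ of the pointwise inequality preserves it, provided the expectations exist. They do, because $|\widehat z_i|\le\sqrt2$ makes every score bounded. The remaining task, and the only one needing care, is the equality $\mathbb E_{\mathrm{pool}}[\mathcal J_B(\widehat{\Kb}_{D,N})]=\mathcal J_B(\Kb_N)$. Because $\Bb$ is independent of the pool conditional on the training data, linearity gives
\[
\mathbb E_{\mathrm{pool}}\big[\mathrm{tr}(\widehat{\Kb}_{D,N}\Bb)\big]=\mathrm{tr}\big(\mathbb E_{\mathrm{pool}}[\widehat{\Kb}_{D,N}]\,\Bb\big).
\]
For the entrywise expectation, each entry is $\frac1D\sum_{r\le D}\widehat z_i(\widetilde\eta_r)\widehat z_j(\widetilde\eta_r)$ with $\widetilde\eta_r\sim\pi$ i.i.d. Expanding the bag averages,
\[
\mathbb E_\eta[\widehat z_i(\eta)\widehat z_j(\eta)]=\frac{1}{N_iN_j}\sum_{a,b}\mathbb E_{\omegab,b}\big[2\cos(\omegab^\top\xb_{ia}+b)\cos(\omegab^\top\xb_{jb}+b)\big].
\]
The random-bias identity and Bochner's representation \eqref{eq:bochner_sec3} turn each summand into $k(\xb_{ia},\xb_{jb})$. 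This yields $\langle\hat\mu_i,\hat\mu_j\rangle_{\Hc_k}=[\Kb_N]_{ij}$, the empirical version of \eqref{eq:adaptive_dist_kernel_integral}.

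I expect the main obstacle to be stating precisely where independence of $\Bb$ from the pool is used. For the adaptive side no such identity is claimed, so only the pointwise inequality is needed there, whereas the vanilla side genuinely requires pulling $\Bb$ outside the expectation.
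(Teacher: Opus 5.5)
Your proposal is correct and follows essentially the same route as the paper. Both use linearity of the trace plus an exchange argument for the top-$D$ rule, feasibility of the first $D$ candidates for pointwise dominance, and the entrywise identity $\mathbb E_{\eta\sim\pi}[\widehat z_i(\eta)\widehat z_j(\eta)]=[\Kb_N]_{ij}$ (from the random-bias identity and Bochner) with $\Bb$ pulled outside the pool expectation. Your boundedness remark for integrability is a small refinement the paper leaves implicit, and you should rename the summation index $b$ so it no longer clashes with the bias variable.
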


The proof is given in
Appendix~\ref{app:adaptive_trace_selection}.
Proposition~\ref{prop:adaptive_trace_selection} shows that a trace
objective reduces feature selection to ranking scalar scores.
Once $\Bb$ has been constructed, retaining the top-$D$ candidates
maximizes that objective over all subsets of the pool.
This gives a direct way to incorporate different learning
objectives into RaDR through the choice of $\Bb$.

\subsection{Adaptive RaDR algorithm}
\label{subsec:adaptive_algorithm}

With the selected features, A-RaDR proceeds through the same
ridge regression formulation as in
Section~\ref{subsec:RaDR_krr}. In particular, the predictor has
the equivalent dual and primal expressions
\begin{equation}
\label{eq:adaptive_RaDR_estimator}
\begin{aligned}
    \widehat f_{\lambda,N,S_B}(\mathbb P)
    &=
    \mub_{\mathbb P,S_B}^\top
    \widehat{\Mb}_{S_B}^\top
    (\widehat{\Kb}_{S_B}+n\lambda\Ib_n)^{-1}\yb\\
    &=
    \mub_{\mathbb P,S_B}^\top
    (\widehat{\Mb}_{S_B}^\top\widehat{\Mb}_{S_B}
      +n\lambda\Ib_D)^{-1}
    \widehat{\Mb}_{S_B}^\top\yb.
\end{aligned}
\end{equation}
For prediction from $\mathcal X_t$, we use
$\widehat{\mub}_{t,S_B}$ in place of
$\mub_{\mathbb P,S_B}$.
Algorithm~\ref{alg:A-RaDR} summarizes the procedure.

\begin{algorithm}[h]
\caption{Adaptive Random Feature Distribution Regression}
\label{alg:A-RaDR}
\small
\begin{algorithmic}[1]
\REQUIRE Training bags $\{(\mathcal X_i,y_i)\}_{i=1}^n$;
candidate pool size $M_0\ge D$; random features count $D$; ridge parameter $\lambda>0$; score matrix $\Bb$.
% \ENSURE Predictor $\widehat f_{\lambda,N,S_B}$.

\STATE Draw $\widetilde\eta_1,\ldots,\widetilde\eta_{M_0}
\overset{\mathrm{i.i.d.}}{\sim}\pi$.

\STATE Compute the candidate vectors $\widehat{\zb}_m$
using \eqref{eq:adaptive_empirical_scalar_feature}
and \eqref{eq:adaptive_z_vector}.

\STATE Evaluate the scores in \eqref{eq:adaptive_general_score}
and retain the top-$D$ subset $S_B$ (bottom if trace objective is minimization).

\STATE Assemble the selected embedding matrix
$\widehat{\Mb}_{S_B}$.

\STATE Fit the ridge predictor in
\eqref{eq:adaptive_RaDR_estimator} using the primal or dual form.

\RETURN $\widehat f_{\lambda,N,S_B}$.
\end{algorithmic}
\end{algorithm}

The score matrix determines how the feature budget is allocated.
We next describe three choices corresponding to kernel alignment,
leverage scores, and distribution variance.

\subsection{Kernel alignment score}
\label{subsec:adaptive_kernel_alignment}

Kernel alignment aims to construct kernels whose similarity
structure agrees with the responses of the learning task
\citep{cristianini2001kernel,sinha2016learning,
shahrampour2018data}. For distribution regression, this suggests
selecting features whose bag-level averages align with the
observed responses.

For scalar responses, the response similarity matrix is
$\yb\yb^\top$. Accordingly, choosing
$\Bb_{\mathrm{KA}}=\yb\yb^\top$ gives the trace objective
\begin{equation}
\label{eq:adaptive_KA_trace}
    \mathcal J_{\mathrm{KA}}(\Kb)
    =
    \mathrm{tr}(\Kb\Bb_{\mathrm{KA}})
    =
    \yb^\top\Kb\yb.
\end{equation}
Substituting this choice into
\eqref{eq:adaptive_general_score} yields
\begin{equation}
\label{eq:adaptive_KA_score}
    q_{\mathrm{KA},m}
    =
    \widehat{\zb}_m^\top\Bb_{\mathrm{KA}}\widehat{\zb}_m
    =
    (\yb^\top\widehat{\zb}_m)^2.
\end{equation}
Thus, the score favors features with a large absolute inner
product with the response vector across bags. For binary classification, the same formulation applies with
$y_i\in\{-1,+1\}$. For multiclass classification, one may use
$\Bb_{\mathrm{KA}}=\mathbf L\mathbf L^\top$, where
$\mathbf L\in\mathbb R^{n\times C}$ contains one-hot or
contrast-coded labels for $C$ classes. The corresponding score
is then $\|\mathbf L^\top\widehat{\zb}_m\|_2^2$.

\subsection{Leverage score}
\label{subsec:adaptive_leverage}

Leverage scores measure contributions relative to the geometry
of a linear system and have been widely used in randomized
linear algebra and kernel approximation
\citep{mahoney2011randomized,drineas2012fast,
bach2017equivalence,avron2017random,li2021towards}.
In random feature setting, the same idea gives larger weights to
feature components along directions with smaller regularized
kernel eigenvalues.

More specifically, let $\nu>0$ be the regularization parameter
used for feature scoring. Using the exact finite-bag Gram matrix
would give the ideal score matrix
$\Bb_{\mathrm{LS}}^{\mathrm{ideal}}
=(\Kb_N+\nu\Ib_n)^{-1}$ and scores
\begin{equation}
\label{eq:adaptive_LS_score_ideal}
    q_{\mathrm{LS},m}^{\mathrm{ideal}}
    =
    \widehat{\zb}_m^\top
    \Bb_{\mathrm{LS}}^{\mathrm{ideal}}
    \widehat{\zb}_m.
\end{equation}
The parameter $\nu$ controls the geometry used for selection,
while $\lambda$ in \eqref{eq:adaptive_RaDR_estimator} controls
the subsequent ridge fit.

Since forming $\Kb_N$ requires exact kernel evaluations across
bags, we use the pool approximation $\widehat{\Kb}_0$ from
\eqref{eq:adaptive_pool_gram} for computation.
The practical score matrix is therefore
$\widehat{\Bb}_{\mathrm{LS}}
=(\widehat{\Kb}_0+\nu\Ib_n)^{-1}$, giving
\begin{equation}
\label{eq:adaptive_LS_score_practical}
    q_{\mathrm{LS},m}
    =
    \widehat{\zb}_m^\top
    \widehat{\Bb}_{\mathrm{LS}}\widehat{\zb}_m.
\end{equation}
All scores can be evaluated through
\eqref{eq:adaptive_score_vector}, using
$\Bb=\widehat{\Bb}_{\mathrm{LS}}$.

This construction uses the full candidate pool to estimate the
kernel geometry before selecting features.
Once $\widehat{\Bb}_{\mathrm{LS}}$ is formed, it remains fixed
across subsets, so the realized-pool optimality result in
Proposition~\ref{prop:adaptive_trace_selection} applies.
For the ideal matrix $\Bb_{\mathrm{LS}}^{\mathrm{ideal}}$,
the conditional expectation identity in
\eqref{eq:adaptive_expected_dominance} also applies.

\subsection{Distribution variance score}
\label{subsec:adaptive_distribution_variance}

Adaptive kernel learning for distribution regression has also
been studied from a geometric perspective
\citep{kachaiev2024learning}. A natural objective is to obtain
representations that separate the observed distributions.
Within A-RaDR, this leads to selecting features whose empirical
averages vary substantially across bags.

To formalize this idea, we first recall the distribution
variance of kernel mean embeddings
\citep{muandet2013domain}.

\begin{definition}
For a collection $\mathcal M=\{\mathbb P_1,\ldots,\mathbb P_n\}$,
let $\bar\mu=n^{-1}\sum_{i=1}^n\mu_{\mathbb P_i}$.
The distribution variance is
\begin{equation}
\label{eq:distribution_variance}
\begin{aligned}
    \mathbb V_{\mathcal H_k}(\mathcal M)
    &=
    \frac{1}{n}\sum_{i=1}^n
    \|\mu_{\mathbb P_i}-\bar\mu\|_{\mathcal H_k}^2\\
    &=
    \frac{1}{n}\mathrm{tr}(\Kb)
    -
    \frac{1}{n^2}\oneb^\top\Kb\oneb,
\end{aligned}
\end{equation}
where $[\Kb]_{ij}
=\langle\mu_{\mathbb P_i},\mu_{\mathbb P_j}\rangle_{\mathcal H_k}$
and $\oneb\in\mathbb R^n$ is the all-ones vector.
\end{definition}

Notice that with the centering matrix
$\Hb=\Ib_n-n^{-1}\oneb\oneb^\top$, the variance becomes
$n^{-1}\mathrm{tr}(\Kb\Hb)$, akin to the variance formulation in kernel PCA \citep{wang2021orcca}.
Since the factor $1/n$ does not affect the maximizing subset,
we choose $\Bb_{\mathrm{DV}}=\Hb$, which gives
\begin{equation}
\label{eq:adaptive_DV_trace}
    \mathcal J_{\mathrm{DV}}(\Kb)
    =
    \mathrm{tr}(\Kb\Hb).
\end{equation}
The same identity applies to the selected empirical embeddings:
\[
    \frac{1}{n}\mathcal J_{\mathrm{DV}}(\widehat{\Kb}_S)
    =
    \frac{1}{n}\sum_{i=1}^n
    \left\|
        \widehat{\mub}_{i,S}
        -
        \frac{1}{n}\sum_{j=1}^n\widehat{\mub}_{j,S}
    \right\|_2^2.
\]
Thus, maximizing the trace objective maximizes the empirical
variance of the selected bag representations.

For convenience, denote the average of candidate $m$ across
bags by $\bar z_m=n^{-1}\oneb^\top\widehat{\zb}_m$.
Its score can then be written as
\begin{equation}
\label{eq:adaptive_DV_score}
    q_{\mathrm{DV},m}
    =
    \widehat{\zb}_m^\top\Hb\widehat{\zb}_m
    =
    \sum_{i=1}^n
    \big(\widehat z_i(\widetilde\eta_m)-\bar z_m\big)^2.
\end{equation}
Consequently, this criterion retains features that maximizes the cross-bag embedding variation.

\subsection{Unified view of adaptive RaDR}
\label{subsec:adaptive_unified_view}

The three criteria fit into the same selection procedure through
the following choices of score matrix:
\begin{equation}
\label{eq:adaptive_B_summary}
\begin{array}{c|c|c}
\text{Variant} & \text{Score matrix} & \text{Candidate score}\\[3pt]
\hline
\text{Kernel alignment}
&
\Bb_{\mathrm{KA}}=\yb\yb^\top
&
(\yb^\top\widehat{\zb}_m)^2
\\[5pt]
\text{Leverage score}
&
\widehat{\Bb}_{\mathrm{LS}}
=(\widehat{\Kb}_0+\nu\Ib_n)^{-1}
&
\widehat{\zb}_m^\top
\widehat{\Bb}_{\mathrm{LS}}\widehat{\zb}_m
\\[5pt]
\text{Distribution variance}
&
\Bb_{\mathrm{DV}}=\Hb
&
\widehat{\zb}_m^\top\Hb\widehat{\zb}_m
\end{array}
\end{equation}
In each case, the selected subset follows
\eqref{eq:adaptive_selected_set}, and the predictor follows
\eqref{eq:adaptive_RaDR_estimator}.
Kernel alignment adapts to the responses, leverage scores adapt
to the regularized kernel geometry, and distribution variance
adapts to the spread of the bag representations.

Through these choices, A-RaDR uses the observed training data to
allocate a finite feature budget toward the chosen objective.
The selected features then define the kernel used for both
training and prediction.

\subsection{Computational complexity}
\label{subsec:adaptive_complexity}

The additional computation in A-RaDR comes from
scoring the candidate pool, followed by feature selection. For instances in $\mathbb R^d$, constructing all candidate bag averages costs
$\Oc(nNM_0)$ using direct inner products.
Once these averages are available, kernel alignment and
distribution variance each require $\Oc(nM_0)$ operations to
evaluate all scores.

Leverage scores require a regularized linear system involving
the candidate-pool kernel. Working in the bag space gives a
cost of $\Oc(n^2M_0+n^3)$ for forming the kernel and evaluating
all scores. When $M_0<n$, the Woodbury inversion lemma can reduce the score computational complexity to
$\Oc(nM_0^2+M_0^3)$ operations. Therefore, leverage score cost is
$
    \Oc\!\left(
        \min\{n^2M_0+n^3,\;nM_0^2+M_0^3\}
    \right).
$

The subsequent ridge fit has the same computational cost as RaDR with $D$ features, $\Oc(nD^2+D^3)$ in the primal form or $\Oc(n^2D+n^3)$ in the dual form.

\section{Empirical Results}
\label{sec:experiments}

In this section, we evaluate RaDR and A-RaDR variants across a broad collection of synthetic and real-world distribution regression tasks. We carry out three sets of simulations. The first set of simulations is designed to examine the behavior of RaDR/A-RaDR vs vanilla kernel distribution regression under a controlled synthetic setting. The second set of simulations is designed to investigate the impact of training and testing bag sizes on kernel distribution regression and RaDR framework. The last set of simulations is the large scale real-world dataset benchmark, where we will compare the RaDR and A-RaDR frameworks to state-of-the-art distribution regression methods in both kernel and deep learning regime. All simulations carried out in this section focus on classification, which is the predominant distribution regression target problem in the literature \citep{kachaiev2024learning} and across our $41$ real-world examples.

% To address these questions, we begin with a sequence of controlled synthetic experiments designed to isolate the individual components of the RaDR framework. We first study the generalization behavior of vanilla RaDR relative to exact kernel mean embedding methods as the number of random features varies. We then investigate the role of adaptive feature selection by analyzing how the candidate feature pool size influences downstream performance after score-based selection. 

% Next, we examine the interaction between bag sampling and random feature approximation through a series of experiments that vary both training and testing bag sizes. These studies include both fixed-size and variable-size bag settings, allowing us to evaluate RaDR under more realistic distribution sampling scenarios. Finally, we analyze the joint effect of training bag size and random feature count, highlighting the trade-off between distribution estimation quality and feature approximation quality.

% After establishing these behaviors in controlled settings, we turn to large-scale real-world evaluation on more than 30 real-world benchmark datasets. We compare RaDR and A-RaDR against exact kernel distribution regression methods as well as modern deep set learning architectures, including Deep Sets and Set Transformers. These experiments are designed to evaluate not only predictive performance, but also computational scalability across diverse problem regimes. We focus on reporting classification accuracy in the main text, while F1 scores are deferred to the appendix for completeness.

\subsection{A taxonomy of benchmark algorithms}
\label{subsec:benchmark_taxonomy}

To provide a comprehensive evaluation of RaDR and A-RaDR, we benchmark against a diverse collection of distribution regression algorithms spanning classical kernel methods, deep learning architectures for set-valued inputs, and the random feature based approaches. The benchmark methods represent the major paradigms currently used for learning from distributions and sets.

The first group consists of kernel mean embedding (KME) based distribution regression methods. We include the standard two-stage kernel distribution regression framework of \citep{szabo2016learning}. Specifically, \textbf{KME\_Gaussian} employs Gaussian kernels for both the embedding and distribution regression stages, while \textbf{KME\_Linear} uses a Gaussian embedding kernel together with a linear distribution kernel. We further include two adaptive kernel learning approaches proposed in \citet{kachaiev2024learning}. The first, denoted by \textbf{AKME\_MDKE}, learns the embedding kernel using the Maximum Distribution Kernel Entropy (MDKE) criterion \citep{kachaiev2024learning}. The second, denoted by \textbf{AKME\_DV}, learns the embedding kernel using the Distribution Variance (DV) criterion. For both adaptive approaches, we consider linear and Gaussian distribution kernels, resulting in the variants \textbf{AKME\_MDKE\_Linear}, \textbf{AKME\_MDKE\_Gaussian}, \textbf{AKME\_DV\_Linear}, and \textbf{AKME\_DV\_Gaussian}.

The second group consists of deep learning architectures designed for set-valued inputs. We consider \textbf{Deep Sets (DeS)} \citep{zaheer2017deep}, a permutation-invariant neural architecture that aggregates instance-level representations through a symmetric pooling operation. We also consider the \textbf{Set Transformer (SeT)} \citep{lee2019set}, which leverages self-attention mechanisms to capture interactions among instances within a bag. Two Set Transformer variants are included: \textbf{SeT\_SAB}, which employs Self-Attention Blocks (SAB), and \textbf{SeT\_ISAB}, which utilizes Induced Self-Attention Blocks (ISAB) to improve scalability on larger sets.

The final group consists of the random feature based methods. \textbf{RaDR} denotes the vanilla random feature approximation of kernel distribution regression introduced in Section~\ref{sec:RaDR}. We further consider three adaptive variants developed in Section~\ref{sec:A-RaDR}: \textbf{A-RaDR\_KA}, which performs kernel-alignment-based feature selection; \textbf{A-RaDR\_LS}, which utilizes leverage-score-based feature selection; and \textbf{A-RaDR\_DV}, which employs the distribution variance criterion. Similar to the KME-based methods, each RaDR and A-RaDR algorithm is evaluated using both linear and Gaussian second-stage distribution kernels.

Table~\ref{tab:benchmark_taxonomy} summarizes the complete benchmark suite used throughout the synthetic and real-data experiments. Note that the kernel column describes the distribution kernel, embedding kernel is always chosen to be Gaussian kernel.

\begin{table*}[h]
\centering
\caption{Summary of benchmark algorithms used throughout the empirical study.}
\label{tab:benchmark_taxonomy}
% \small
\vspace{10pt}
\begin{tabular}{lccccc}
\hline
Algorithm &
KME &
RF &
Deep &
Kernel &
Reference \\
\hline

KME\_Linear &
\checkmark &
 &
 &
Linear &
\citep{szabo2016learning}
\\

KME\_Gaussian &
\checkmark &
 &
 &
Gaussian &
\citep{szabo2016learning}
\\

AKME\_DV\_Linear &
\checkmark &
 &
 &
Linear &
\citep{kachaiev2024learning}
\\

AKME\_DV\_Gaussian &
\checkmark &
 &
 &
Gaussian &
\citep{kachaiev2024learning}
\\

AKME\_MDKE\_Linear &
\checkmark &
 &
 &
Linear &
\citep{kachaiev2024learning}
\\

AKME\_MDKE\_Gaussian &
\checkmark &
 &
 &
Gaussian &
\citep{kachaiev2024learning}
\\

\hline

DeS &
 &
 &
\checkmark &
N/A &
\citep{zaheer2017deep}
\\

SeT\_SAB &
 &
 &
\checkmark &
N/A &
\citep{lee2019set}
\\

SeT\_ISAB &
 &
 &
\checkmark &
N/A &
\citep{lee2019set}
\\

\hline

RaDR\_Linear &
 &
\checkmark &
 &
Linear &
This paper
\\

RaDR\_Gaussian &
 &
\checkmark &
 &
Gaussian &
This paper
\\

A-RaDR\_KA\_Linear &
 &
\checkmark &
 &
Linear &
This paper
\\

A-RaDR\_KA\_Gaussian &
 &
\checkmark &
 &
Gaussian &
This paper
\\

A-RaDR\_LS\_Linear &
 &
\checkmark &
 &
Linear &
This paper
\\

A-RaDR\_LS\_Gaussian &
 &
\checkmark &
 &
Gaussian &
This paper
\\

A-RaDR\_DV\_Linear &
 &
\checkmark &
 &
Linear &
This paper
\\

A-RaDR\_DV\_Gaussian &
 &
\checkmark &
 &
Gaussian &
This paper
\\

\hline
\end{tabular}
\end{table*}

\subsection{Experimental Settings}

\paragraph{Synthetic Dataset}

The synthetic benchmark is designed as a distribution-level binary classification problem, where each observation is a bag of exchangeable samples. For bag $i$, we first draw a latent class variable $s_i \in \{-1,1\}$. We then draw a bag-specific nuisance vector
\[
    \bb_i \sim \mathcal{N}(0,\sigma_b^2 \Ib_8),
\]
from an isotropic Gaussian distribution of dimension $8$ with noise scale $\sigma_b = 0.8$, which induces random between-bag variation unrelated to the class signal. Conditioned on $s_i$ and $\bb_i$, the informative coordinates of the bag distribution are shifted according to
\[
    \mub_i
    =
    s_i \alpha \eb_1
    + \frac{1}{2}s_i \alpha \eb_2
    + \frac{1}{2} \bb_i,
\]
where $\eb_1,\eb_2$ are coordinate basis vectors pointing to the first and second elements which are the informative elements, and $\alpha=0.7$ controls the strength of the class-dependent mean shift. The instances in bag $i$ are sampled independently as
\[
    \xb_{ij} \sim \mathcal{N}(\mub_i,\sigma_w^2 \Ib_8),
    \qquad j=1,\ldots,N_i,
\]
where $N_i$ is the bag size specified by the corresponding experimental setting, and $\sigma_w = 1.6$ represents the within group sample noise level. To make the problem less separable from simple sample means, additional independent noise with standard deviation $1$ is injected into the non-informative coordinates, i.e., element $\#3$ to $\#8$. Finally, the observed bag label is generated from a noisy nonlinear functional of the empirical distribution,
\[
    y_i
    =
    \mathbf{1}
    \left\{
        \frac{1}{N_i}\sum_{j=1}^{N_i} \xb_{ij,1}
        +
        0.35\,\frac{1}{N_i}\sum_{j=1}^{N_i} \tanh(\xb_{ij,2})
        +
        0.15\,\varepsilon_i
        >
        0.2\,\xi_i
    \right\},
\]
with $\varepsilon_i,\xi_i \sim \mathcal{N}(0,1)$ independently, and $\mathbf{1}$ denotes the indicator function. $\xb_{ij,1}$ denotes the first element in vector $\xb_{ij}$. Training and test bags are generated independently from the same data-generating process. In all synthetic simulations, the true labels $y_i$ are first generated with $500$ vectors by default, and the actual training and testing bags are down-sampled from these $500$ vectors if a smaller bag size is required. The total number of training bags is set to 500 by default.

\paragraph{General Experimental Settings}

The KME and RaDR algorithms tune the hyperparameters including the embedding and distribution kernel bandwidth, and the SVM cost parameter with $3$-fold cross validation on the training sets. For both embedding kernel and distribution kernel bandwidth, the CV list is set to $[0.25,0.5,1,2,4]$. The SVM cost parameter CV list is set to $[0.1,1,10,100]$. For A-RaDR, the embedding and distribution kernel bandwidth is set to $1$ by default, and only the SVM cost parameter is cross-validated with the same precision.

The AKME methods adopt the kernel optimization approach proposed in \citet{kachaiev2024learning}. The kernel trajectory is altered via a neural network encoder $f_{\theta}: \R^d \rightarrow \R^{16}$, a one-hidden-layer ReLU neural network with fixed hidden dimensions throughout all the simulations. The encoder network is trained via an Adam optimizer \citep{kingma2014adam}, a learning rate of $5\times10^{-4}$, a bag size of 16, and a total of 150 training epochs.

The DeS model uses a neural network encoder $f_{\theta}: \R^d \rightarrow \R^{50}$ with one hidden layer and ELU activation as recommended by the original paper \citep{zaheer2017deep}. The classifier is a two hidden layer ELU neural network. For the real-world dataset simulation, there are roughly 30k trainable parameters for each dataset (small variations depending on the input dimension). The whole network (encoder + classifier) is trained via an Adam optimizer, a learning rate of $10^{-3}$, a bag size of 16, and a total of 150 training epochs.

The SeT methods use a Transformer architecture with 128 hidden dimensions, 4 attention heads, 2 encoder layers and 2 decoder layers, and layer norm is disabled. The classifier is a linear classification layer. For the real-world dataset simulation , there are around 300k trainable parameters for the SAB variant and 450k trainable parameters for the ISAB variant for each dataset with small variations. The whole network (encoder + classifier) is trained via an Adam optimizer, a learning rate of $10^{-3}$, a bag size of 16, and a total of 150 training epochs.

All the simulations included in this section are repeated over $10$ Monte-Carlo runs to create error bars. The Des \citep{zaheer2017deep} and SeT \citep{lee2019set} implementations follow the open access repositories published by the authors. When recording training time, we include the tuning time for KME, RaDR, and A-RaDR algorithms as well for fair comparison against the deep learning counterparts. We use classification accuracy as the primary performance metric throughout the experiments, defined as
\begin{equation*}
    \mathrm{Accuracy}
    =
    \frac{\text{Number of Correctly Classified Bags}}
         {\text{Total Number of Bags}}.
\end{equation*}
We omit the F1 score, as it exhibits the same overall trends as accuracy across all our simulations.

\subsection{RaDR and A-RaDR ablation studies}
\label{subsec:RaDR_ablation}

In this subsection, we investigate the empirical behavior of RaDR and A-RaDR on synthetic distribution regression problems. The goal of these experiments is twofold. First, we examine whether random feature based distribution regression can recover the predictive performance of exact kernel distribution regression while substantially reducing computational cost. Second, we study the effectiveness of adaptive random feature selection and quantify how the candidate feature pool size influences the resulting performance gains.

Throughout this subsection, we focus on the comparison between RaDR, A-RaDR, and the corresponding exact kernel mean embedding (KME) methods. Unless otherwise stated, all results are reported over both linear and Gaussian second-stage distribution kernels.

\subsubsection{RaDR and A-RaDR versus exact kernel distribution regression}
\label{subsubsec:RaDR_vs_kme}

\begin{figure}[h]
    \centering
    \includegraphics[width=0.9\linewidth]{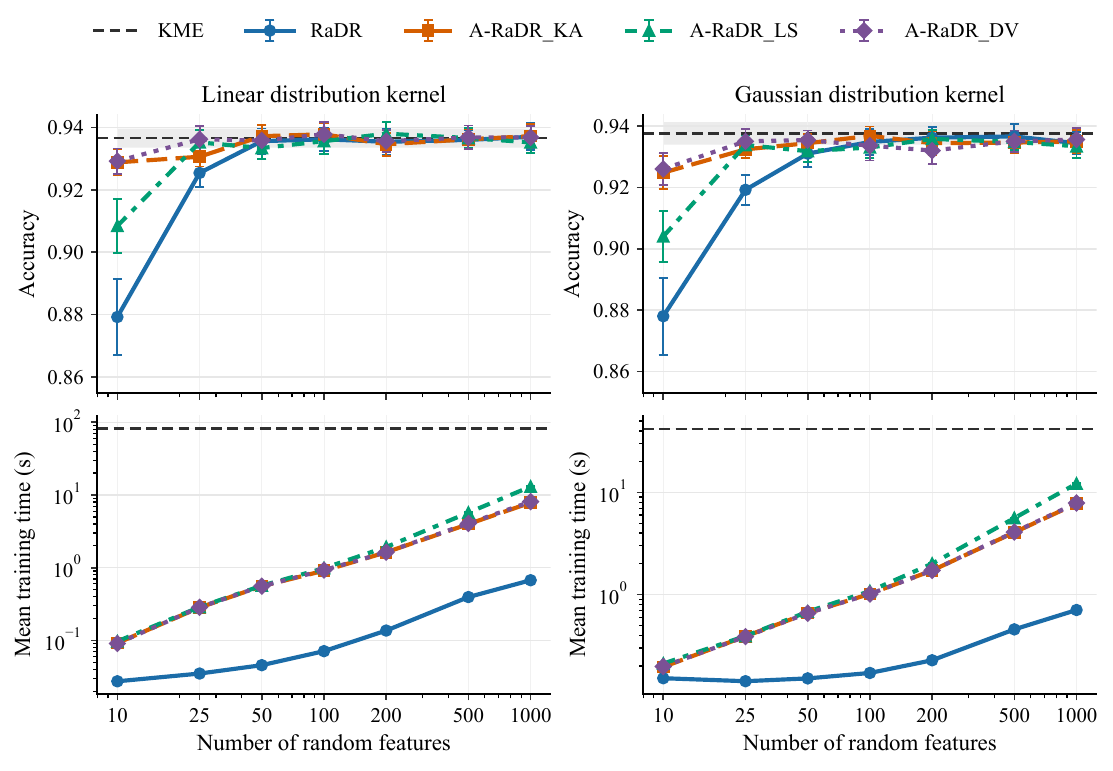}
    \caption{Accuracy and training time of RaDR and A-RaDR as the number of random features varies. The dashed horizontal line denotes exact KME-based distribution regression.}
    \label{fig:rf_feature_ablation}
\end{figure}

Figure~\ref{fig:rf_feature_ablation} compares the predictive accuracy and training time of RaDR and A-RaDR as the number of random features increases. The dashed horizontal line corresponds to the performance of exact kernel distribution regression, while the remaining curves represent random feature approximations with varying feature budgets.

Several observations can be made. First, RaDR exhibits a clear approximation-improvement pattern. When only a small number of random features are used, the approximation error introduced by random features leads to a noticeable loss of predictive accuracy. However, the performance rapidly improves as additional random features are introduced and eventually approaches that of exact kernel distribution regression. This behavior is observed consistently under both linear and Gaussian distribution kernels.

Second, adaptive random feature selection substantially improves performance in the low-feature regime. When only a small number of random features are available for the second stage, all A-RaDR variants outperform vanilla RaDR, with A-RaDR\_KA and A-RaDR\_DV exhibiting the largest improvements. This demonstrates that selecting informative random features is considerably more effective than uniform random sampling when the feature budget is limited.

% Third, the performance gap between RaDR and A-RaDR gradually disappears as the number of random features increases. Once the random feature budget is sufficiently large, both RaDR and A-RaDR achieve performance levels nearly indistinguishable from exact kernel distribution regression. This observation is consistent with the theoretical analysis developed in Section~\ref{sec:RaDR}, which predicts that the random feature approximation error decreases as the feature dimension increases.

Finally, substantial computational savings are achieved even when predictive performance closely matches that of exact kernel methods. Across both kernel choices, RaDR requires only a fraction of the training time needed by KME while achieving nearly identical predictive accuracy. Although A-RaDR incurs additional computational cost due to the feature-scoring procedure, its runtime remains substantially lower than exact kernel distribution regression while providing noticeable performance gains when the feature budget is small. These results suggest that RaDR offers significant computational efficiency, while A-RaDR offers an attractive trade-off between efficiency and performance.

\subsubsection{Effect of candidate feature pool size}
\label{subsubsec:pool_multiplier}

\begin{figure}[h]
    \centering
    \includegraphics[width=0.8\linewidth]{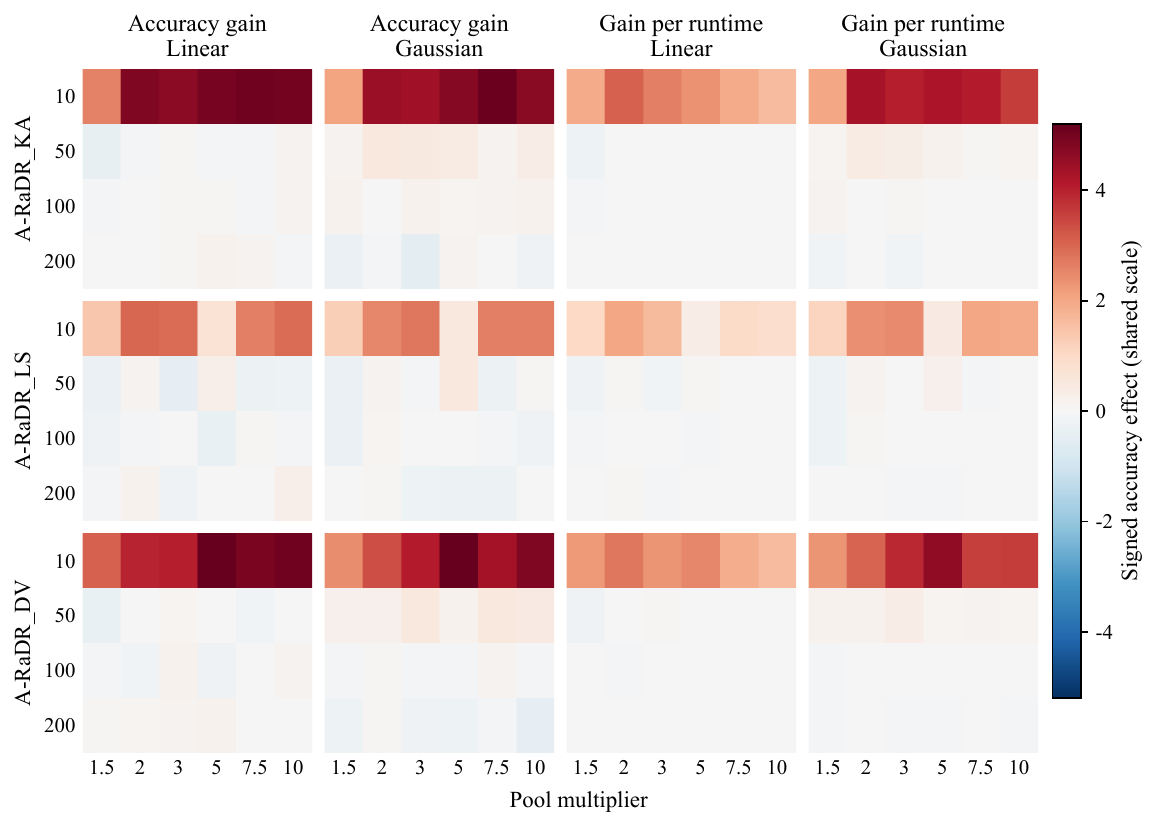}
    \caption{The feature pool multiplier ablation study. \textbf{Left two columns}: Classification accuracy differences between A-RaDR algorithms vs vanilla RaDR algorithm. \textbf{Right two columns}: Accuracy improvements divided by computational overhead (A-RaDR runtime/RaDR runtime). A-RaDR algorithms shows clear benefits in low random feature regime, but the difference diminishes as the random feature count increases.}
    \label{fig:rf_pool_multiplier}
\end{figure}

We next examine the effect of the candidate feature pool size in A-RaDR. Recall that A-RaDR first draws a larger candidate pool of random features, scores these candidates according to a task-dependent criterion, and then retains only a subset of selected features for downstream distribution regression. The pool multiplier controls the ratio between the candidate pool size and the final number of selected random features. A larger pool multiplier gives A-RaDR more candidate features to choose from, but also increases the additional scoring cost.

Figure~\ref{fig:rf_pool_multiplier} reports the accuracy gain of A-RaDR over vanilla RaDR under different random feature budgets and pool multipliers. The heatmap values are reported in percentage points, with positive values indicating improvement over RaDR.

The main pattern is that adaptive selection is most beneficial when the final random feature budget is small. When only 10 random features are retained, all three A-RaDR variants produce clear gains over RaDR across both linear and Gaussian distribution kernels. In particular, A-RaDR\_KA and A-RaDR\_DV provide the largest improvements, often yielding gains of several percentage points, amounting to more than $30\%$ error reduction. This supports that adaptive feature selection is especially useful in the low-dimensional approximation regime, where each selected feature has a large influence on the resulting distribution representation.

As the number of retained random features increases, the marginal benefit of adaptive selection decreases rapidly. For feature budgets of 50, 100, and 200, the gains over RaDR become much smaller and are often close to zero. This is consistent with the intuition, as RaDR and KME methods have already been tuned for the best performance, there leaves little room for improvements for A-RaDR, especially with sufficiently large random feature budget.

The effect of the pool multiplier also exhibits diminishing returns. Increasing the pool multiplier generally improves performance when the feature budget is very small, since A-RaDR can search over a richer candidate set. However, once normalized by computational overhead, the additional gains become limited. This suggests that a relatively small candidate pool is often sufficient to capture most of the benefit of adaptive random feature selection. In practice, this provides a useful computational trade-off where A-RaDR can improve low-budget random feature approximations without requiring an excessively large candidate pool.

Overall, these ablation results support the intended role of A-RaDR. Adaptive feature selection is not primarily needed when the random feature budget is already large, its value lies in improving the statistical efficiency of small random feature representations with mild computational overhead. This makes A-RaDR particularly useful when computation or memory constraints limit the number of features that can be used in the downstream learning tasks.

\subsection{The role of train and test bag sizes}
\label{subsec:bag_size}

\begin{figure}[h]
    \centering
    \includegraphics[width=0.9\linewidth]{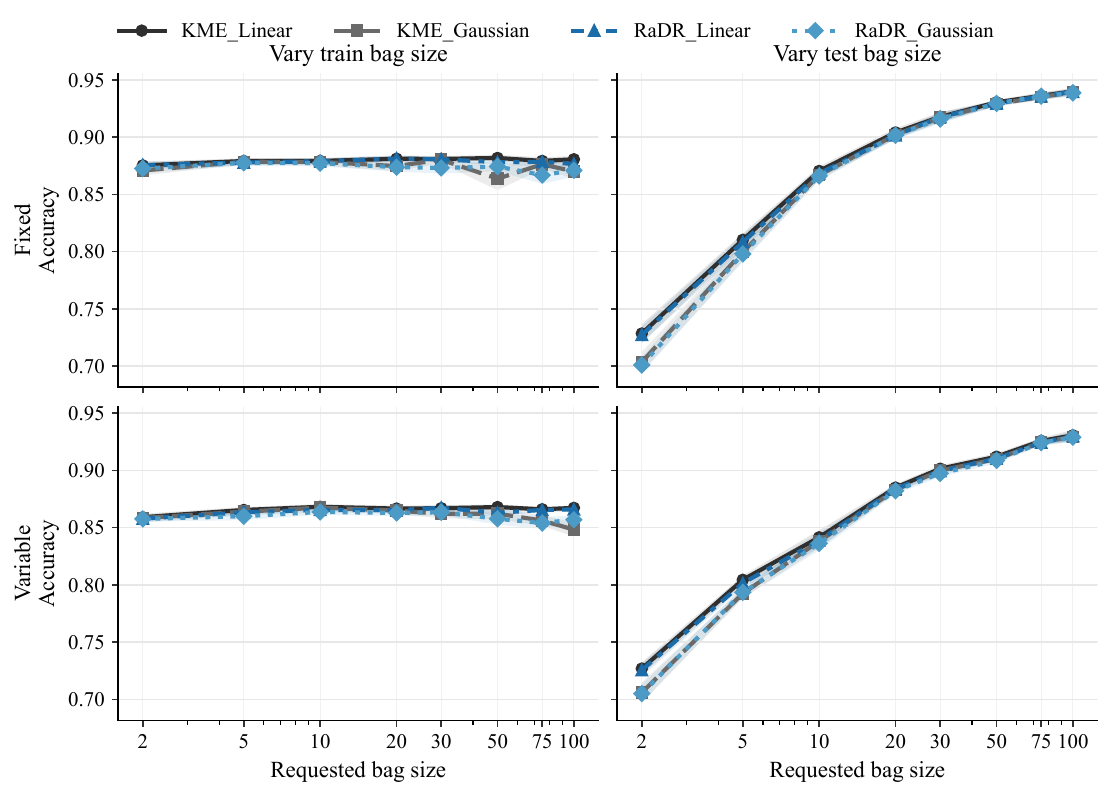}
    \caption{Marginal sensitivity of predictive accuracy to training and testing bag sizes under fixed and variable bag-size settings. Increasing training bag size produces only limited improvements, whereas larger testing bags substantially improve predictive performance. RaDR closely follows the behavior of its exact KME counterparts across all bag-size regimes.}
    \label{fig:bag_size_accuracy_marginal}
\end{figure}

% \begin{figure}[h]
%     \centering
%     \includegraphics[width=\linewidth]{figure/kme_rf_bag_size_accuracy_surfaces.pdf}
%     \caption{Accuracy surfaces as functions of training and testing bag sizes. Across both fixed and variable bag-size settings, predictive performance is dominated by the testing bag size, while the effect of training bag size is comparatively weak. RaDR preserves the same statistical behavior as exact kernel distribution regression throughout the train--test bag-size grid.}
%     \label{fig:bag_size_accuracy_surface}
% \end{figure}

\begin{figure}[h]
    \centering
    \includegraphics[width=\linewidth]{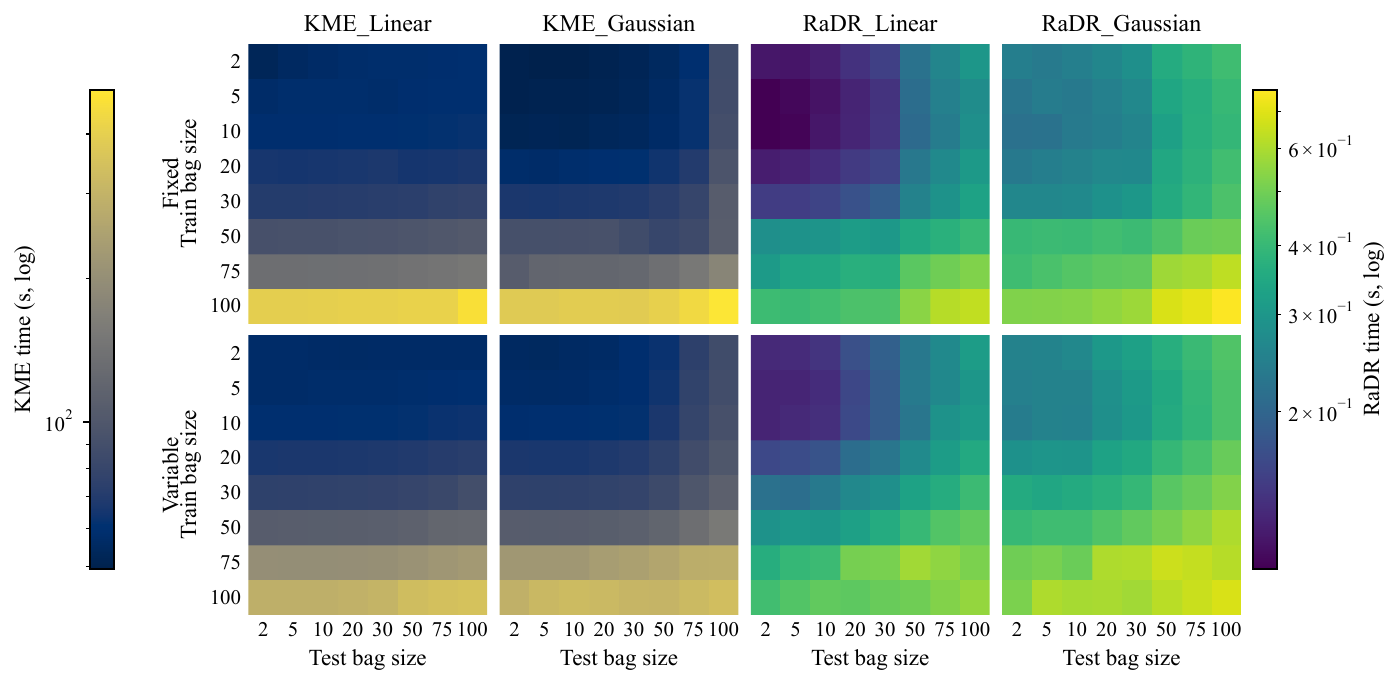}
    \caption{Training-time surfaces as functions of training and testing bag sizes. KME-based distribution regression exhibits strong dependence on training bag size, resulting in highly asymmetric computational scaling. In contrast, RaDR displays nearly symmetric scaling with respect to training and testing bag sizes due to its finite-dimensional random feature representation.}
    \label{fig:bag_size_runtime_surface}
\end{figure}

A unique challenge in distribution regression is the presence of two-stage sampling. Rather than observing distributions directly, each distribution is represented by a finite collection of samples. Consequently, the quality of the learned predictor depends not only on the number of training bags available, but also on the number of observations contained within each bag. Since RaDR is designed specifically to address the computational burden induced by large bags, it is important to understand how training and testing bag sizes influence both predictive performance and computational cost.

To this end, we systematically vary the training and testing bag sizes while keeping all other aspects of the learning problem fixed. We further consider two sampling schemes. In the \emph{fixed} setting, all bags contain exactly the specified number of instances. In the \emph{variable} setting, bag sizes are randomly generated around the requested value, which serves as the average bag size. This allows us to evaluate whether variability in bag cardinality changes the qualitative conclusions.

Figure~\ref{fig:bag_size_accuracy_marginal} reports the marginal effect of training and testing bag sizes on predictive accuracy. An important observation is that increasing the training bag size produces only limited performance improvements. Across all methods, the prediction accuracy shows only mild improvements as the training bag size increases from 2 to 100 observations per bag. In contrast, increasing the testing bag size results in substantial gains in predictive performance. Under both fixed and variable bag-size settings, the accuracy improves steadily as larger testing bags provide increasingly accurate estimates of the underlying test distributions. This observation echoes the well-established phenomenon of kernel distribution regression \citep{szabo2015two}. On the other hand, we observe a clear performance degradation when moving from fixed to variable bag-size settings, this phenomenon has been theoretically investigated in \citet{law2018bayesian}. More importantly, RaDR preserves the same training and testing bag size effects. This shows that the random feature's approximation effect is very much independent from the two-stage sampling process.
% A rigorous theoretical study of this phenomenon represents an interesting direction for future research but lies beyond the scope of this work.

% The same phenomenon becomes even more apparent in the accuracy surfaces shown in Figure~\ref{fig:bag_size_accuracy_surface}. Across all four methods, the accuracy contours are dominated by changes along the testing bag size axis, while variations along the training bag size axis remain comparatively small. In other words, predictive performance is significantly more sensitive to test distribution estimation than to training distribution estimation. Moreover, RaDR closely mirrors the behavior of its exact kernel counterparts throughout the entire train-test bag size grid, demonstrating that the random feature approximation preserves the fundamental statistical characteristics of kernel distribution regression. 

% These results provide an interesting empirical perspective on the two-stage sampling problem. While larger training bags improve the estimation of kernel mean embeddings, the resulting gains quickly saturate once a moderate estimation accuracy has been achieved. In contrast, finite testing bag sizes directly affect the quality of the distribution representation used during prediction and therefore have a much stronger influence on the final prediction accuracy. This asymmetry is consistent with the perturbation analysis developed in Section~\ref{sec:RaDR}, where test distribution estimation error directly enters the prediction process.

While the statistical behavior of RaDR closely matches that of exact kernel methods, the computational scaling characteristics differ substantially. Figure~\ref{fig:bag_size_runtime_surface} presents the corresponding runtime surfaces. For KME-based distribution regression, runtime grows rapidly as the training bag size increases. This behavior reflects the expensive pairwise kernel evaluations required to compare distributions during training. As a result, the computational burden is dominated by the training bag size, producing a highly asymmetric runtime surface.

In contrast, RaDR exhibits a fundamentally different scaling pattern. The runtime surfaces for both RaDR variants become nearly symmetric with respect to training and testing bag sizes. Increasing either the training or testing bag size results in a comparable increase in computational cost. This behavior is a direct consequence of the random feature representation. Once each distribution is summarized through finite-dimensional random feature averages, both training and testing bags contribute through the same linear aggregation procedure, eliminating the quadratic dependence on training bag size that characterizes kernel-based distribution regression.

Taken together, these results highlight an important advantage of the RaDR framework. From a statistical perspective, RaDR faithfully preserves the statistical behavior of exact kernel distribution regression. From a computational perspective, however, RaDR fundamentally alters the scaling law of distribution regression by reducing the dominant training-bag bottleneck to the same linear scaling exhibited by testing bags. 

\subsection{Real-world experiments}
\label{subsec:real-data}

\begin{table*}[h]
\centering
\caption{Dataset information summary for the 41 real-world benchmark datasets.}
\label{tab:data_summary}
\vspace{10pt}
\begin{tabular}{lccccc}
\hline
Name &
Type &
\# Datasets &
Dimension &
Bag Size &
\# Bags\\
\hline

Musk &
molecular activity&
$1$ &
$166$ &
$2-40$ &
$92$ 
\\

Mutagenesis &
molecular activity&
$2$ &
$7$ &
$26-88$ &
$42-188$ 
\\

Protein &
molecular activity&
$1$ &
$8$ &
$35-189$ &
$193$ 
\\

Animal &
Image Annotation&
$3$ &
$230$ &
$1-13$ &
$200$ 
\\

Corel &
Image Annotation&
$20$ &
$9$ &
$2-13$ &
$2000$ 
\\

UCSB Breast Cancer &
Histopathology Analysis&
$1$ &
$708$ &
$21-40$ &
$58$ 
\\

Birds &
Audio Classification&
$13$ &
$38$ &
$2-43$ &
$548$ 
\\

\hline
\end{tabular}
\end{table*}

\begin{table*}[p]
\centering
\caption{Average real-data benchmark performance over 41 complete MIL datasets. Reported values are mean $\pm$ propagated standard error. The top 4 accuracy and the fastest training time are marked in \textbf{bold}.}
\label{tab:mil_real_data_summary}
\vspace{10pt}
\begin{tabular}{lcc}
\hline
Algorithm &
Accuracy &
Training Time (s) \\
\hline

KME\_Linear &
$0.9347 \pm 0.0016$ &
$264.37 \pm 1.30$
\\

KME\_Gaussian &
$\mathbf{0.9398 \pm 0.0015}$ &
$255.16 \pm 2.44$
\\

AKME\_DV\_Linear &
$0.9022 \pm 0.0017$ &
$37.03 \pm 0.26$
\\

AKME\_DV\_Gaussian &
$0.9110 \pm 0.0016$ &
$36.77 \pm 0.29$
\\

AKME\_MDKE\_Linear &
$0.9091 \pm 0.0016$ &
$40.64 \pm 0.26$
\\

AKME\_MDKE\_Gaussian &
$0.9226 \pm 0.0016$ &
$42.29 \pm 0.26$
\\

\hline

DeS &
$0.9266 \pm 0.0017$ &
$41.53 \pm 0.14$
\\

SeT\_SAB &
$0.9356 \pm 0.0016$ &
$110.40 \pm 0.23$
\\

SeT\_ISAB &
$0.9323 \pm 0.0019$ &
$142.14 \pm 0.32$
\\

\hline

RaDR\_Linear &
$0.9291 \pm 0.0017$ &
$0.2534 \pm 0.0013$
\\

RaDR\_Gaussian &
$0.9341 \pm 0.0017$ &
$0.7223 \pm 0.0027$
\\

A-RaDR\_KA\_Linear &
$0.9199 \pm 0.0015$ &
$1.26 \pm 0.01$
\\

A-RaDR\_KA\_Gaussian &
$\mathbf{0.9361 \pm 0.0013}$ &
$1.81 \pm 0.01$
\\

A-RaDR\_LS\_Linear &
$0.9238 \pm 0.0016$ &
$2.40 \pm 0.008$
\\

A-RaDR\_LS\_Gaussian &
$\mathbf{0.9357 \pm 0.0016}$ &
$3.02 \pm 0.007$
\\

A-RaDR\_DV\_Linear &
$0.9256 \pm 0.0016$ &
$1.31 \pm 0.006$
\\

A-RaDR\_DV\_Gaussian &
$\mathbf{0.9360 \pm 0.0015}$ &
$1.80 \pm 0.008$
\\

\hline

% RaDR\_Linear\_low &
% $0.8985 \pm 0.0017$ &
% $\mathbf{0.0350 \pm 0.0001}$
% \\

RaDR\_Gaussian\_low &
$0.9149 \pm 0.0016$ &
$0.4780 \pm 0.0010$
\\

% A-RaDR\_KA\_Linear\_low &
% $0.8988 \pm 0.0016$ &
% $0.1114 \pm 0.0003$
% \\

A-RaDR\_KA\_Gaussian\_low &
$0.9156 \pm 0.0020$ &
$0.5577 \pm 0.0009$
\\

% A-RaDR\_LS\_Linear\_low &
% $0.8971 \pm 0.0019$ &
% $0.1192 \pm 0.0003$
% \\

A-RaDR\_LS\_Gaussian\_low &
$0.9143 \pm 0.0018$ &
$0.5571 \pm 0.0009$
\\

% A-RaDR\_DV\_Linear\_low &
% $0.9009 \pm 0.0018$ &
% $0.1144 \pm 0.0004$
% \\

A-RaDR\_DV\_Gaussian\_low &
$0.9185 \pm 0.0017$ &
$0.5576 \pm 0.0009$
\\

Gaussian\_low\_Average &
$0.9158 \pm 0.0018$ &
$0.5376 \pm 0.0009$
\\

\hline
\end{tabular}
\end{table*}

% \begin{figure}[h]
%     \centering
%     \includegraphics[width=0.95\linewidth]{figure/mil_real_data_accuracy_runtime_by_algorithm.pdf}
%     \caption{Average predictive accuracy (left) and training time (right) across 41 real-world datasets. Error bars represent propagated standard errors across datasets. Striped bars denote low-feature variants used to evaluate the effectiveness of adaptive random feature selection in the limited-feature regime.}
%     \label{fig:mil}
% \end{figure}

\begin{figure}[h]
    \centering
    \includegraphics[width=0.95\linewidth]{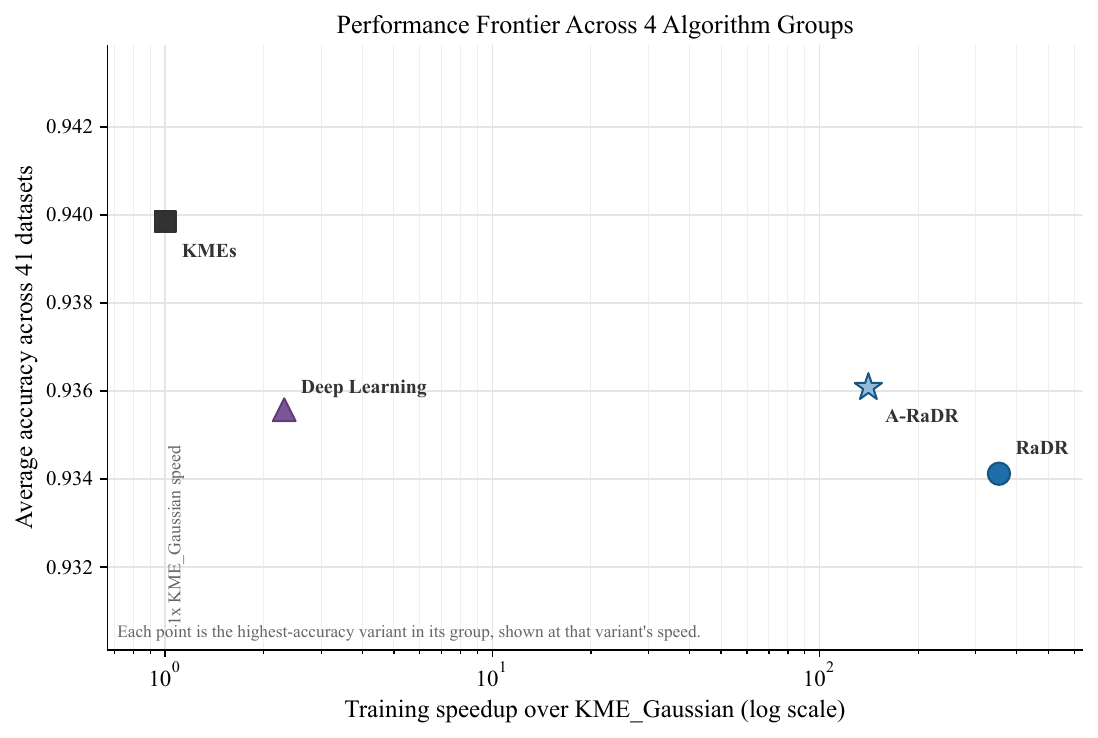}
    \caption{The performance frontier (best accuracy) across the four algorithm groups. X-axis represents the relative training speed up over KME\_Gaussian algorithm, computed as $\text{KME\_Gaussian Runtime}/\text{Algorithm Runtime}$. A-RaDR frontier strikes a balance between computation and performance, ranking second in both criteria.}
    \label{fig:mil_scatter}
\end{figure}

We evaluate RaDR and its adaptive variants on a diverse collection of $41$ real-world multi-instance learning datasets spanning molecular activity prediction, image annotation, histopathology analysis, and audio classification tasks. The benchmark suite includes the Musk and Mutagenesis datasets \citep{dietterich1997solving,srinivasan1995comparing}, the Protein dataset \citep{tao2004svm}, the Elephant, Fox, and Tiger image datasets \citep{andrews2003support}, twenty Corel image annotation datasets \citep{chen2006miles}, the UCSB Breast Cancer dataset \citep{kandemir2014empowering}, and thirteen bird-species audio classification datasets \citep{briggs2012acoustic}. These datasets exhibit substantial diversity in both feature dimensionality and bag structure, ranging from small image collections to large molecular and audio datasets, thereby providing a comprehensive assessment of predictive performance and computational scalability. We exclude the Musk2 dataset from the final benchmark because its extremely large bag sizes (up to $1,044$ instances per bag) cause memory failures for both exact kernel mean embedding methods and Set Transformer baselines. Table~\ref{tab:data_summary} provides a summary for all 41 benchmark datasets.

The benchmark is carried out over the full set of algorithms presented in Table~\ref{tab:benchmark_taxonomy}. Moreover, we introduce a set of low random feature variants to the RaDR and A-RaDR framework to further highlight the efficiency benefit in subsecond territory. Specifically, unlike standard RaDR and A-RaDR algorithms with 256 random features and pool size of 2048, the \_low variants only use 16 random features for the prediction tasks, and 160 random features for the A-RaDR pool.

Figure~\ref{fig:mil_scatter}, and Table~\ref{tab:mil_real_data_summary} summarize the average predictive accuracy and computational cost across all datasets. We point out several important observations. First, with careful tuning, exact kernel distribution regression remains the strongest statistical benchmark. In particular, KME\_Gaussian achieves the highest average predictive accuracy among all methods considered. However, the performance gap between exact kernel methods and the best random-feature-based approaches is small. The strongest A-RaDR variants achieve average accuracies exceeding $0.936$, remaining within less than half a percentage point of KME\_Gaussian while operating in a finite-dimensional representation space. These results indicate that RaDR successfully preserves the predictive advantages of kernel distribution regression despite replacing expensive kernel evaluations with random feature approximations.

More importantly, the adaptive random feature variants emerge as the strongest practical competitors to exact kernel methods. Among all non-KME approaches, A-RaDR\_KA\_Gaussian achieves the highest average accuracy, followed closely by A-RaDR\_DV\_Gaussian and A-RaDR\_LS\_Gaussian. At the same time, these methods require only one to three seconds of average training time, compared with nearly five minutes for exact kernel distribution regression. Consequently, the proposed A-RaDR framework occupies a particularly attractive operating regime, delivering predictive performance that is nearly indistinguishable from the strongest kernel methods while reducing computational cost by over two orders of magnitude.

For adaptive kernel learning methods, although AKME\_DV and AKME\_MDKE attempt to improve performance by directly optimizing kernel hyperparameters, their predictive performance remains noticeably below both RaDR and A-RaDR while still requiring tens of seconds of training time. This observation is particularly interesting for the distribution variance objective. While AKME\_DV directly optimizes the kernel using the same underlying criterion employed by A-RaDR\_DV, the score-based random feature framework produces substantially stronger predictive performance together with dramatically lower computational cost. These results suggest that controlling the effective kernel through adaptive random feature selection may be a more effective strategy than directly optimizing kernel parameters themselves.

The deep learning baselines exhibit competitive predictive performance, with Set Transformer in particular achieving accuracy levels comparable to several A-RaDR variants. However, these gains come at a substantially higher computational cost. Moreover, unlike RaDR, both Set Transformer and kernel-based methods inherit quadratic computation growth with respect to bag size due to the underlying attention or kernel computations. This characteristic limits the ability to trade computation for memory through mini-batching or related implementation techniques, which is one of the primary mechanisms that enables modern deep learning methods to scale. In contrast, RaDR maintains a fixed-dimensional representation throughout training and prediction, resulting in substantially lower memory requirements and more favorable scaling behavior as bag sizes increase.

The low-feature variants provide additional insight into the role of adaptive random feature selection. A-RaDR in low random feature count regime continues to deliver advantages in performance. Moreover, its computational overhead becomes milder in this regime. For example, in the Gaussian-kernel setting, A-RaDR\_KA\_Gaussian\_low and A-RaDR\_DV\_Gaussian\_low achieve statistically meaningful improvements over RaDR\_Gaussian\_low while increasing average training time only marginally, from approximately $0.48$ seconds to $0.56$ seconds. This result demonstrates that adaptive feature selection can provide measurable performance gains in low-feature regimes at an almost negligible additional computational cost.

Taken together, these results provide strong empirical support for the central premise of this paper. Exact kernel distribution regression remains an exceptionally effective statistical learning framework, but its computational requirements become increasingly restrictive on large-scale problems. RaDR preserves nearly all of the predictive performance of kernel distribution regression while reducing computational cost by several orders of magnitude. Furthermore, A-RaDR provides a practical mechanism for further improving predictive performance, particularly when the random feature budget is constrained. Across a diverse collection of 41 real-world datasets, RaDR and A-RaDR frameworks consistently achieve one of the most favorable accuracy-runtime tradeoffs among all competing methods.

\section{Discussion and Conclusion}\label{sec:discussion}

In this paper, we investigated two longstanding challenges in distribution regression: scalability and adaptivity. Building on existing computational formulations, which we refer to as Random Feature Distribution Regression (RaDR), we developed a finite-sample analysis under two-stage sampling. Our analysis separates distribution-level sampling error, finite within-bag sampling error, and random feature approximation error. It also establishes sufficient feature-budget conditions for preserving the order of our exact two-stage excess-risk upper bound. We further developed Adaptive Random Feature Distribution Regression (A-RaDR), a unified score-guided framework that incorporates task-aware feature selection to improve representation quality while maintaining computational efficiency. Through theoretical analysis and extensive empirical studies, we demonstrated that scalable and adaptive distribution regression can be achieved simultaneously within a principled random feature framework.

We believe several interesting directions could follow this work. Extending the excess-risk analysis to data-dependent feature selection would help explain how the adaptation criterion and candidate-pool size affect predictive performance. Further analysis could also address nonlinear distribution kernels and classification, connecting the theoretical guarantees more closely to the prevalent empirical settings in distribution regression. The incremental structure of random feature mean embeddings additionally motivates studying how sampling error, feature selection, and model updates interact in streaming settings. Such developments would extend the present analysis to a broader class of learning objectives and evolving distribution-valued data.

\newpage

\appendix
\section{Technical Details}
\label{app:RaDR_proofs}

This appendix provides the assumptions, auxiliary lemmas, and proofs supporting Section~\ref{sec:RaDR} and \ref{sec:A-RaDR}. Our theoretical analysis is based on the linear outer kernel
\[
K(\mathbb{P},\mathbb{Q})
=
\langle \mu_{\mathbb{P}},\mu_{\mathbb{Q}}\rangle_{\mathcal{H}_k},
\]
for simplicity. Extensions to nonlinear outer kernels require an additional perturbation layer for the outer kernel and we leave that analysis for future works. The one-stage and two-stage kernel regression results is heavily inspired by the proof of \citet{szabo2015two}, and they build the groundwork for our third stage analysis.

\subsection{Assumptions and auxiliary objects}
\label{app:RaDR_assumptions}

Throughout the appendix, the latent pairs
$(\mathbb P_i,y_i)_{i=1}^n$ are independent and identically
distributed according to a probability measure $\rho$.
Its marginal on distributions is denoted by $\rho_X$, and
\[
f_\rho(\mathbb P)=\mathbb E[y\mid\mathbb P].
\]
Conditional on the latent pairs, the observations within each
bag are independent samples from $\mathbb P_i$, and the bags
are conditionally independent. In particular, conditional on
$\mathbb P_i$, the sampling of the observed bag is independent
of $y_i$. We use the balanced setting $N_i=N$.

The random features (aside from selected features in A-RaDR) are independent of the training data. The regularization parameter $\lambda>0$ is considered to be deterministic. Population risks are evaluated on an independent latent pair $(\mathbb P,y)\sim\rho$. All kernels and feature maps are assumed measurable, and the RKHS-valued expectations below are well defined. We first present the set of assumptions.

\begin{assumption}
\label{ass:RaDR_bounded}
There exist constants $\kappa>0$ and $C_y>0$ such that
\[
\sup_{\xb\in\mathcal{X}} k(\xb,\xb)\le \kappa^2,
\qquad
|y|\le C_y
\quad\text{almost surely}.
\]
\end{assumption}

\begin{assumption}
\label{ass:RaDR_spectral}
The instance domain $\mathcal X\subset\mathbb R^d$ is compact,
with diameter
\[
\ell_{\mathcal X}
=\sup_{\xb,\zb\in\mathcal X}\|\xb-\zb\|_2
\in(0,\infty).
\]
The base kernel is continuous, real-valued, positive definite,
and shift-invariant, with $k(\xb,\xb)=1$ and spectral
probability measure $\Lambda$. Moreover,
\[
0<\sigma_p^2
=\int_{\mathbb R^d}\|\omegab\|_2^2\,d\Lambda(\omegab)
<\infty.
\]
The dimension, kernel parameters, and domain are considered fixed in the asymptotic statements.
\end{assumption}

\begin{assumption}
\label{ass:RaDR_source_capacity}
Let $T:\mathcal{H}_K\to\mathcal{H}_K$ denote the covariance operator
\begin{equation}
\label{eq:RaDR_cov_operator_app}
    Tf
    =
    \int_{\mathcal{M}_1^+(\mathcal{X})}
    \langle f,K_{\mathbb{P}}\rangle_{\mathcal{H}_K}
    K_{\mathbb{P}}
    \,d\rho_X(\mathbb{P}),
\end{equation}
where $K_{\mathbb{P}}=K(\mathbb{P},\cdot)$. There exist $c\in(1,2]$, $b>1$, $R>0$, and $\beta>0$ such that
\[
f_\rho
=
T^{(c-1)/2}g
\quad
\text{for some }g\in\mathcal{H}_K
\text{ with }\|g\|_{\mathcal{H}_K}^2\le R,
\]
and the effective dimension
\[
\mathcal{N}(\lambda)
=
\mathrm{tr}\{(T+\lambda I)^{-1}T\}
\]
satisfies
\[
\mathcal{N}(\lambda)\le \beta\lambda^{-1/b}.
\]
\end{assumption}

\begin{remark}
The compact-domain and spectral assumptions are required for the
uniform RF approximation result. The exact-kernel finite-bag
perturbation argument needs only boundedness and the stated
sampling model. The source-capacity condition is required to establish a one-stage statistical baseline.
\end{remark}

\subsection{One-stage exact-kernel baseline}
\label{app:RaDR_baselines}

The perturbation analysis requires establishing a one-stage KRR bound as an input. The one-stage estimator observes the true distribution embeddings and is defined by
\[
\widehat f_\lambda
=\argmin_{f\in\mathcal H_K}
\left\{
\frac1n\sum_{i=1}^n
\bigl(f(\mathbb P_i)-y_i\bigr)^2
+\lambda\|f\|_{\mathcal H_K}^2
\right\}.
\]

\begin{assumption}
\label{ass:RaDR_baseline_exact}
For the sample sizes, regularization parameters, and confidence
levels under consideration, there is a nonnegative function
$B_{\mathrm{1s}}(n,\lambda,\delta)$ such that
\begin{equation}
\label{eq:RaDR_baseline_exact_app}
\mathbb P\left\{
\mathcal E(\widehat f_\lambda,f_\rho)
\le B_{\mathrm{1s}}(n,\lambda,\delta)
\right\}\ge1-\delta.
\end{equation}
We further assume that any assumptions that establish the above bound will hold in the proceeding perturbation analysis, such that the results will carry over to the two-stage risk bound.
\end{assumption}

\begin{remark}
Under Assumptions~\ref{ass:RaDR_bounded}
and~\ref{ass:RaDR_source_capacity}, the one-stage KRR analysis
of \citet{caponnetto2007optimal}, with the corrections in
Sutherland's \emph{Fixing an Error in Caponnetto and de Vito
(2007)} (\href{https://arxiv.org/abs/1702.02982}{arXiv:1702.02982}),
gives the following bound. There are constants
$C_0,C_1>0$, depending only on the fixed problem parameters,
such that
\[
\begin{split}
B_{\mathrm{1s}}(n,\lambda,\delta)
=C_1\log^2(6/\delta)\bigg[
&\lambda^c
+\frac{\lambda^{c-2}}{n^2}
+\frac{\lambda^{c-1}}{n}+\frac{1}{n^2\lambda}
+\frac{1}{n\lambda^{1/b}}
\bigg],
\end{split}
\]
provided that
\[
0<\lambda\le\min\{1,\|T\|_{\mathrm{op}}\},
\qquad
n\ge C_0\log^2(6/\delta)\lambda^{-(1+1/b)}.
\]
The constants absorb the source radius, the
effective-dimension constant, the kernel bound, and the
bounded-output noise constants.

For fixed $\delta$ and a nonzero covariance operator,
the choice
\[
\lambda_n\asymp n^{-b/(bc+1)}
\]
satisfies the sample-size condition for all sufficiently
large $n$, since $c>1$, and yields
\[
B_{\mathrm{1s}}(n,\lambda_n,\delta)
=O\!\left(n^{-bc/(bc+1)}\right).
\]
This is one example of a practical $B_{\mathrm{1s}}(n,\lambda_n,\delta)$ bound that is compatible with our subsequent perturbation analysis. Other valid one-stage baselines can also be used.
\end{remark}

\subsection{Uniform random feature approximation}
\label{app:RaDR_uniform_rf}

Write $L=\sigma_p\ell_{\mathcal X}>0$. We use the normalized
random-bias cosine map in \eqref{eq:rf_map_sec3}. For simplicity, we first define the error term $\epsilon_D(\delta)$.

\begin{definition}
\label{def:RaDR_epsD}
For $\delta\in(0,1)$ and $D\ge1$, define
\begin{equation}
\label{eq:epsD_app}
\epsilon_D(\delta)
=
\inf\left(
\{3\}\cup
\left\{
t\in(0,\min\{L,3\}]:
98\left(\frac{L}{t}\right)^2
\exp\left(-\frac{Dt^2}{32(d+1)}\right)
\le\delta
\right\}
\right).
\end{equation}
\end{definition}

\begin{theorem}
\label{thm:uniform_rf_app}
Under Assumption~\ref{ass:RaDR_spectral},
\[
\mathbb P\left\{
\sup_{\xb,\zb\in\mathcal X}
|k_D(\xb,\zb)-k(\xb,\zb)|
\le\epsilon_D(\delta)
\right\}\ge1-\delta.
\]
For every fixed $\delta\in(0,1)$, there is a constant
$C_\delta>0$, depending only on $\delta$ and the fixed
kernel and domain parameters, such that, for all
sufficiently large $D$,
\[
\epsilon_D(\delta)^2
\le C_\delta\frac{\log(eD)}{D}.
\]
\end{theorem}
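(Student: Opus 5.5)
The argument has two parts: a Rahimi--Recht / Sutherland--Schneider style covering argument for the high-probability uniform bound, and then an inversion of the implicit definition of $\epsilon_D(\delta)$ to obtain the rate.

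\emph{Uniform bound.} Work with the difference function
\[
f(\Delta)=k_D(\xb,\zb)-k(\xb,\zb).
\]
For the random-bias map this is not purely a function of $\Delta=\xb-\zb$. I would therefore first rewrite
\[
2\cos(a)\cos(c)=\cos(a-c)+\cos(a+c),
\]
which gives
\[
k_D(\xb,\zb)=\frac1D\sum_r\Big[\cos\{\omegab_r^\top(\xb-\zb)\}+\cos\{\omegab_r^\top(\xb+\zb)+2b_r\}\Big].
\]
The first sum is an unbiased estimate of $k(\xb,\zb)$ and the second has mean zero. Both are averages of $D$ bounded i.i.d.\ terms, each taking values in $[-2,2]$ overall.

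Treat $f$ as a function on the compact set $\mathcal X\times\mathcal X\subset\mathbb R^{2d}$, or more economically on the $d$-dimensional sets $\mathcal X_\Delta$ and $\mathcal X_+$, each of diameter at most $2\ell_{\mathcal X}$. Following Proposition~2 of \citet{sutherland2015error}, I would proceed as follows.
\begin{enumerate}
\item Cover the domain by $T\le(4\ell_{\mathcal X}/r)^d$ balls of radius $r$.
\item Apply Hoeffding's inequality at each center. This gives a tail bound of $2T\exp(-Dt^2/c)$ with $c$ of order $8$ (range $2$, threshold $t/2$).
\item Control the Lipschitz constant $L_f$ of $f$. Use $\E\|\nabla f\|^2\le \E\|\omegab\|^2=\sigma_p^2$ together with Markov's inequality, giving $\mathbb P(L_f\ge t/(2r))\le (2r\sigma_p/t)^2$.
\item Optimize the resulting bound $2T e^{-Dt^2/c}+(2r\sigma_p/t)^2$ over $r$. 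This yields a bound of the form
\[
\alpha_d\left(\frac{\sigma_p\ell_{\mathcal X}}{t}\right)^{2d/(d+2)} e^{-Dt^2/(c(d+2))}.
\]
\end{enumerate}

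The main obstacle is matching the paper's specific form $98(L/t)^2\exp(-Dt^2/(32(d+1)))$. For $t\le L$ the factor $(L/t)^{2d/(d+2)}$ is at most $(L/t)^2$, and a constant $\alpha_d\le 98$ is plausible after bookkeeping the covering numbers and the $(d+1)$ exponent. The constants $32$ and $98$ presumably come from Sutherland--Schneider's random-bias variant ($\tilde f$ with range $3$, hence the cap at $3$), and I would cite that result rather than re-derive it.

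Given such a tail bound $p(t)$, the event $\sup|f|\le t$ has probability at least $1-\delta$ for any admissible $t$ with $p(t)\le\delta$. Taking the infimum over such $t$, and using continuity of $p$ in $t$ (or a limiting argument), gives the bound at $t=\epsilon_D(\delta)$. When no admissible $t$ exists, $\epsilon_D=3$ and the bound holds trivially since $|k_D-k|\le 3$.

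\emph{Rate.} Set
\[
t_D^2=\frac{32(d+1)}{D}\,\log\!\Big(\frac{98L^2D}{32(d+1)\delta}\Big)\cdot A
\]
for a suitable constant $A\ge1$. Then $t_D\to0$, so $t_D\le\min\{L,3\}$ for large $D$. Also $(L/t_D)^2\le L^2D/(32(d+1)A\log(\cdot))$, so the condition $98(L/t_D)^2e^{-Dt_D^2/(32(d+1))}\le\delta$ reduces to an inequality of the form $\frac{u}{\log u}u^{-A}\le 1$ with $u\to\infty$, which holds for large $D$. Hence $\epsilon_D(\delta)\le t_D$. Finally, $\log(cD/\delta)\le C'_\delta\log(eD)$ for large $D$, which gives $\epsilon_D(\delta)^2\le C_\delta\log(eD)/D$.
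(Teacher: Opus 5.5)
Your proposal is correct and follows the paper's proof. Both import the tail bound $98(L/t)^2\exp\{-Dt^2/(32(d+1))\}$ from Proposition~2 of \citet{sutherland2015error}, use the continuity and monotonicity of that bound to show the infimum defining $\epsilon_D(\delta)$ is attained (with the deterministic bound $|k_D-k|\le 3$ covering the case $\epsilon_D(\delta)=3$), and then exhibit an explicit $t_D\asymp\sqrt{\log D/D}$ satisfying the defining inequality for large $D$. The only difference is cosmetic: the paper takes $t_D=A\sqrt{\log(eD)/D}$ with $A^2>32(d+1)$ so the tail bound tends to zero, whereas you put $\delta$ and the constants inside the logarithm so the condition reduces to $u^{1-A}/(A\log u)\le 1$, which is equally valid.
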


\begin{proof}
Proposition~2 of \citet{sutherland2015error}, applied to
the random-bias cosine construction, gives
\begin{equation}
\label{eq:ss_uniform_bound_app}
\begin{split}
\mathbb P\left\{
\sup_{\xb,\zb\in\mathcal X}
|k_D(\xb,\zb)-k(\xb,\zb)|\ge t
\right\}
&\le
98\left(\frac{L}{t}\right)^2
\exp\left(-\frac{Dt^2}{32(d+1)}\right),\\
&\hspace{20mm}0<t\le L.
\end{split}
\end{equation}
The right-hand side is continuous and decreasing in $t$,
and diverges as $t\downarrow0$. Therefore, if
$\epsilon_D(\delta)<3$, the infimum in
\eqref{eq:epsD_app} is attained by the probability claim in
 \eqref{eq:ss_uniform_bound_app}.

For the edge case where $\epsilon_D(\delta)=3$, observe that
\[
\|\phib_D(\xb)\|_2\le\sqrt2,
\qquad
|k_D(\xb,\zb)|\le2,
\qquad
|k(\xb,\zb)|\le1.
\]
Consequently,
\[
\sup_{\xb,\zb\in\mathcal X}
|k_D(\xb,\zb)-k(\xb,\zb)|\le3
\]
deterministically.

For the convergence rate of $\epsilon_D(\delta)$, choose $A>\sqrt{32(d+1)}$ and set
\[
t_D=A\sqrt{\frac{\log(eD)}{D}},
\qquad
q=\frac{A^2}{32(d+1)}>1.
\]
For sufficiently large $D$, $t_D\le\min\{L,3\}$, and
\[
98\left(\frac{L}{t_D}\right)^2
\exp\left(-\frac{Dt_D^2}{32(d+1)}\right)
=
\frac{98L^2e^{-q}}{A^2}
\frac{D^{1-q}}{\log(eD)}
\longrightarrow0.
\]
Hence $t_D$ is a valid upper bound for all sufficiently large $D$, which proves the stated rate.
\end{proof}

\subsection{Concentration of empirical mean embeddings}
\label{app:RaDR_bag_concentration}

\begin{lemma}
\label{lem:mean_concentration_app}
Under Assumption~\ref{ass:RaDR_bounded}, there is a universal
constant $C_\mu>0$ such that, for every distribution
$\mathbb P$ and $\delta\in(0,1)$,
\[
\mathbb P\left\{
\|\widehat\mu_{\mathbb P}-\mu_{\mathbb P}\|_{\mathcal H_k}
\le C_\mu\kappa\sqrt{\frac{\log(2/\delta)}{N}}
\right\}\ge1-\delta.
\]
Consequently, with probability at least $1-\delta$,
\begin{equation}
\label{eq:uniform_bag_concentration_app}
\max_{1\le i\le n}
\|\widehat\mu_i-\mu_{\mathbb P_i}\|_{\mathcal H_k}
\le C_\mu\kappa\sqrt{\frac{\log(2n/\delta)}{N}}.
\end{equation}
\end{lemma}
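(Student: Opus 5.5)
The plan is to treat $\widehat\mu_{\mathbb P}-\mu_{\mathbb P}=\frac1N\sum_{j=1}^N\xi_j$ as an average of i.i.d.\ centered random elements of the Hilbert space $\mathcal H_k$. Here $\xi_j=k(\xb_j,\cdot)-\mu_{\mathbb P}$ with $\xb_j\overset{\mathrm{i.i.d.}}{\sim}\mathbb P$. The first step is boundedness. By Assumption~\ref{ass:RaDR_bounded}, $\|k(\xb,\cdot)\|_{\mathcal H_k}=\sqrt{k(\xb,\xb)}\le\kappa$, and by Jensen's inequality (Bochner integral) $\|\mu_{\mathbb P}\|_{\mathcal H_k}\le\kappa$. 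Hence $\|\xi_j\|_{\mathcal H_k}\le 2\kappa$ almost surely, and $\mathbb E\xi_j=0$ by the definition of $\mu_{\mathbb P}$.

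For the concentration itself I would take the bounded-differences route, which avoids vector-valued Bernstein inequalities. Set $F(\xb_1,\dots,\xb_N)=\|\frac1N\sum_j\xi_j\|_{\mathcal H_k}$.
\begin{itemize}
\item Replacing one $\xb_j$ changes $F$ by at most $\frac1N\|k(\xb_j,\cdot)-k(\xb_j',\cdot)\|_{\mathcal H_k}\le 2\kappa/N$, by the triangle inequality.
\item McDiarmid's inequality then gives $F\le\mathbb E F+2\kappa\sqrt{\log(1/\delta)/(2N)}$ with probability at least $1-\delta$.
\item The mean is controlled by Jensen and orthogonality of independent centered Hilbert-valued variables: $\mathbb E F\le(\mathbb E F^2)^{1/2}=(\frac1{N}\mathbb E\|\xi_1\|^2)^{1/2}\le 2\kappa/\sqrt N$. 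In fact $\mathbb E\|\xi_1\|^2=\mathbb E k(\xb,\xb)-\|\mu_{\mathbb P}\|^2\le\kappa^2$, which sharpens this to $\kappa/\sqrt N$.
\item Since $\log(2/\delta)\ge\log 2$ for $\delta\in(0,1)$, the constant term satisfies $1\le\sqrt{\log(2/\delta)/\log 2}$. Combining gives $F\le C_\mu\kappa\sqrt{\log(2/\delta)/N}$ with a universal $C_\mu$, e.g.\ $C_\mu=1/\sqrt{\log 2}+\sqrt2$.
\end{itemize}
None of this depends on $\mathbb P$, so the bound is uniform over distributions, as the statement requires. Alternatively, one could cite Pinelis' Hoeffding inequality in Hilbert spaces, which yields $\|\frac1N\sum\xi_j\|\le 2\kappa\sqrt{2\log(2/\delta)/N}$ directly.

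For the uniform statement \eqref{eq:uniform_bag_concentration_app}, I would condition on the latent $\mathbb P_1,\dots,\mathbb P_n$. Under the sampling model, each bag is an i.i.d.\ sample from $\mathbb P_i$, so the single-distribution bound applies to each $i$ with confidence $\delta/n$. This gives failure probability at most $\delta/n$ per bag with radius $C_\mu\kappa\sqrt{\log(2n/\delta)/N}$. A union bound over $i=1,\dots,n$ gives conditional probability at least $1-\delta$. Integrating over the latent distributions preserves this, because the bound does not depend on the $\mathbb P_i$.

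The only delicate point is bookkeeping rather than substance. The constant must be universal, which is handled by absorbing the $\log(1/\delta)$ versus $\log(2/\delta)$ discrepancy and the $N^{-1/2}$ mean term using $\log(2/\delta)\ge\log 2$. One must also apply the union bound conditionally on the latent measures, so that the independence across bags is not needed beyond conditional independence.
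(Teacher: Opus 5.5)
Your proposal is correct and follows the paper's proof essentially step by step. The shared steps are the second-moment bound $\E F\le\kappa/\sqrt N$, McDiarmid with increments $2\kappa/N$, absorbing the mean term and the $\log(1/\delta)$ versus $\log(2/\delta)$ discrepancy into a universal constant, and a conditional union bound at level $\delta/n$ per bag. Your explicit constant $1/\sqrt{\log 2}+\sqrt2\approx 2.62$ is consistent with the paper's stated choice $C_\mu=3$.
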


\begin{proof}
Let $F=\|\widehat\mu_{\mathbb P}-\mu_{\mathbb P}\|_{\mathcal H_k}$.
Independence and centering give
\[
\mathbb EF^2
=\frac1N\mathbb E
\|k(\xb,\cdot)-\mu_{\mathbb P}\|_{\mathcal H_k}^2
\le\frac{\kappa^2}{N},
\]
so $\mathbb EF\le\kappa/\sqrt N$.
Replacing one observation changes $F$ by at most $2\kappa/N$.
The bounded-differences inequality therefore gives, with
probability at least $1-\delta$,
\[
F\le\frac{\kappa}{\sqrt N}
+\kappa\sqrt{\frac{2\log(1/\delta)}{N}}
\le C_\mu\kappa\sqrt{\frac{\log(2/\delta)}{N}}.
\]
Therefore, $C_\mu=3$ is a sufficient constant.
Apply this bound conditionally to each training distribution
with failure probability $\delta/n$, and take a union bound gives~\eqref{eq:uniform_bag_concentration_app}.
\end{proof}

\subsection{Kernel perturbation at the bag level}
\label{app:RaDR_kernel_perturbation}

For the rest of the appendix, $\widehat{\mathbb P}_i$ denotes
the empirical measure of training bag $i$. We use
\[
\begin{aligned}
K_N(\mathbb P_i,\mathbb P_j)
&=\langle\widehat\mu_i,\widehat\mu_j\rangle_{\mathcal H_k},
&K_N(\mathbb P_i,\mathbb P)
&=\langle\widehat\mu_i,\mu_{\mathbb P}\rangle_{\mathcal H_k},\\
K_{D,N}(\mathbb P_i,\mathbb P_j)
&=\langle\widehat{\mub}_{i,D},\widehat{\mub}_{j,D}\rangle_2,
&K_{D,N}(\mathbb P_i,\mathbb P)
&=\langle\widehat{\mub}_{i,D},\mub_{\mathbb P,D}\rangle_2.
\end{aligned}
\]
In this part of the analysis, we assume training Gram entries are computed with empirical measures. For test vector entries, only the training argument is empirical unless a finite test bag is explicitly specified.

\begin{lemma}
\label{lem:rf_bag_population_app}
On the event of Theorem~\ref{thm:uniform_rf_app}, for all probability measures $\mathbb{P}$ and $\mathbb{Q}$ on $\mathcal{X}$,
\begin{equation}
\label{eq:rf_bag_population_app}
    |K_D(\mathbb{P},\mathbb{Q})-K(\mathbb{P},\mathbb{Q})|
    \le
    \epsilon_D(\delta).
\end{equation}
\end{lemma}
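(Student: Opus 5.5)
The plan is to write both distribution kernels as double integrals of the instance-level kernels and then apply the uniform bound of Theorem~\ref{thm:uniform_rf_app} under the integral sign. Fix a realization of the random features $(\omegab_r,b_r)_{r=1}^D$ in the event of Theorem~\ref{thm:uniform_rf_app}, on which $\sup_{\xb,\zb\in\mathcal X}|k_D(\xb,\zb)-k(\xb,\zb)|\le\epsilon_D(\delta)$. Everything after this is deterministic given the features, so the bound holds simultaneously for all pairs $(\mathbb P,\mathbb Q)$. In particular it covers the empirical measures $\widehat{\mathbb P}_i$, which is how the lemma will be used.

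\emph{First step: the exact kernel.} By definition of the mean embedding and the reproducing property, $\mu_{\mathbb P}=\int k(\xb,\cdot)\,d\mathbb P(\xb)$ is a Bochner integral. It is well defined because $k$ is bounded and continuous on compact $\mathcal X$ (Assumption~\ref{ass:RaDR_spectral}). Linearity of the inner product with respect to Bochner integrals then gives
\[
K(\mathbb P,\mathbb Q)=\int\!\!\int k(\xb,\zb)\,d\mathbb P(\xb)\,d\mathbb Q(\zb).
\]

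\emph{Second step: the random feature kernel.} Since $\phib_D$ is a bounded measurable map into $\mathbb R^D$, with $\|\phib_D\|_2\le\sqrt2$, the definition \eqref{eq:rf_distribution_embedding_sec3} together with coordinatewise linearity gives
\[
K_D(\mathbb P,\mathbb Q)=\langle\mub_{\mathbb P,D},\mub_{\mathbb Q,D}\rangle_2=\int\!\!\int k_D(\xb,\zb)\,d\mathbb P(\xb)\,d\mathbb Q(\zb).
\]
Fubini applies here because the integrand is bounded and $\mathbb P\otimes\mathbb Q$ is a probability measure.

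\emph{Third step: the comparison.} Subtracting the two representations and moving the absolute value inside the integral yields
\[
|K_D(\mathbb P,\mathbb Q)-K(\mathbb P,\mathbb Q)|\le\int\!\!\int|k_D-k|\,d\mathbb P\,d\mathbb Q\le\sup_{\xb,\zb\in\mathcal X}|k_D(\xb,\zb)-k(\xb,\zb)|\le\epsilon_D(\delta).
\]
The middle inequality uses that $\mathbb P\otimes\mathbb Q$ has total mass one.

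\emph{Main obstacle.} The argument is essentially routine; the only point requiring care is the first step, that is, justifying the interchange of the RKHS inner product with the Bochner integrals. I would handle it by citing the reproducing property: $\langle\mu_{\mathbb P},h\rangle_{\mathcal H_k}=\int h\,d\mathbb P$ holds for all $h\in\mathcal H_k$, as stated in Section~\ref{sec:setup}. Applying this first with $h=\mu_{\mathbb Q}$, and then again using $\mu_{\mathbb Q}(\xb)=\int k(\xb,\zb)\,d\mathbb Q(\zb)$, produces the double integral directly. A second small point is that $\mathbb P$ and $\mathbb Q$ must be supported on $\mathcal X$, so that the supremum bound applies on their supports; this holds by the lemma's hypothesis that they are probability measures on $\mathcal X$.
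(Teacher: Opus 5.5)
Your proposal is correct and matches the paper's proof: you write $K_D(\mathbb P,\mathbb Q)$ and $K(\mathbb P,\mathbb Q)$ as double integrals of $k_D$ and $k$ against $\mathbb P\otimes\mathbb Q$, then bound the difference by the uniform supremum $\epsilon_D(\delta)$ on the event of Theorem~\ref{thm:uniform_rf_app}. Your extra justifications (Bochner integrability, the reproducing-property interchange, Fubini for a bounded integrand) make explicit what the paper simply takes ``by definition.''
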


\begin{proof}
By definition,
\[
K_D(\mathbb{P},\mathbb{Q})
=
\int_{\mathcal{X}}\int_{\mathcal{X}}
k_D(\xb,\zb)
\,d\mathbb{P}(\xb)d\mathbb{Q}(\zb),
\]
and
\[
K(\mathbb{P},\mathbb{Q})
=
\int_{\mathcal{X}}\int_{\mathcal{X}}
k(\xb,\zb)
\,d\mathbb{P}(\xb)d\mathbb{Q}(\zb).
\]
Therefore,
\[
\begin{aligned}
|K_D(\mathbb{P},\mathbb{Q})-K(\mathbb{P},\mathbb{Q})|
&=
\left|
\int_{\mathcal{X}}\int_{\mathcal{X}}
\{k_D(\xb,\zb)-k(\xb,\zb)\}
\,d\mathbb{P}(\xb)d\mathbb{Q}(\zb)
\right| \\
&\le
\int_{\mathcal{X}}\int_{\mathcal{X}}
|k_D(\xb,\zb)-k(\xb,\zb)|
\,d\mathbb{P}(\xb)d\mathbb{Q}(\zb) \\
&\le
\sup_{\xb,\zb\in\mathcal{X}}
|k_D(\xb,\zb)-k(\xb,\zb)| \\
&\le
\epsilon_D(\delta).
\end{aligned}
\]
\end{proof}

\begin{lemma}
\label{lem:two_stage_bag_kernel_app}
On the event of Lemma~\ref{lem:mean_concentration_app}, for all $i,j\in\{1,\ldots,n\}$,
\begin{equation}
\label{eq:two_stage_bag_kernel_app}
    |K_N(\mathbb{P}_i,\mathbb{P}_j)-K(\mathbb{P}_i,\mathbb{P}_j)|
    \le
    2C_\mu\kappa^2
    \sqrt{\frac{\log(2n/\delta)}{N}}.
\end{equation}
Moreover, for any test distribution $\mathbb{P}$,
\begin{equation}
\label{eq:two_stage_test_kernel_app}
    |K_N(\mathbb{P}_i,\mathbb{P})-K(\mathbb{P}_i,\mathbb{P})|
    \le
    C_\mu\kappa^2
    \sqrt{\frac{\log(2n/\delta)}{N}}.
\end{equation}
\end{lemma}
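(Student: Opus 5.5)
The plan is to reduce both inequalities to Cauchy--Schwarz in $\mathcal H_k$ combined with the uniform bag concentration \eqref{eq:uniform_bag_concentration_app} of Lemma~\ref{lem:mean_concentration_app}. The first ingredient is a norm bound on the embeddings. By Assumption~\ref{ass:RaDR_bounded}, $\|k(\xb,\cdot)\|_{\mathcal H_k}=\sqrt{k(\xb,\xb)}\le\kappa$. Both $\mu_{\mathbb P}$ and $\widehat\mu_i$ are averages (Bochner integral or finite average) of such elements, so the triangle inequality gives $\|\mu_{\mathbb P}\|_{\mathcal H_k}\le\kappa$ and $\|\widehat\mu_i\|_{\mathcal H_k}\le\kappa$ for every $i$ and every $\mathbb P$. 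Write $r_N=C_\mu\kappa\sqrt{\log(2n/\delta)/N}$. On the event of Lemma~\ref{lem:mean_concentration_app} we then have $\|\widehat\mu_i-\mu_{\mathbb P_i}\|_{\mathcal H_k}\le r_N$ simultaneously for all $i$.

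For the test-vector bound \eqref{eq:two_stage_test_kernel_app}, only the training argument is perturbed. Bilinearity and Cauchy--Schwarz give
\[
|K_N(\mathbb P_i,\mathbb P)-K(\mathbb P_i,\mathbb P)|
=|\langle\widehat\mu_i-\mu_{\mathbb P_i},\mu_{\mathbb P}\rangle_{\mathcal H_k}|
\le r_N\,\kappa,
\]
which is exactly $C_\mu\kappa^2\sqrt{\log(2n/\delta)/N}$.

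For the Gram-entry bound \eqref{eq:two_stage_bag_kernel_app}, I would use the telescoping split
\[
\langle\widehat\mu_i,\widehat\mu_j\rangle-\langle\mu_{\mathbb P_i},\mu_{\mathbb P_j}\rangle
=\langle\widehat\mu_i-\mu_{\mathbb P_i},\widehat\mu_j\rangle
+\langle\mu_{\mathbb P_i},\widehat\mu_j-\mu_{\mathbb P_j}\rangle .
\]
Each term is bounded by Cauchy--Schwarz by $r_N\kappa$, since $\|\widehat\mu_j\|_{\mathcal H_k}\le\kappa$ and $\|\mu_{\mathbb P_i}\|_{\mathcal H_k}\le\kappa$. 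Summing the two terms gives $2C_\mu\kappa^2\sqrt{\log(2n/\delta)/N}$. The case $i=j$ is covered by the same identity.

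There is no real obstacle here. The one point to check is that the split keeps the empirical embedding in one factor, so that no quadratic term $r_N^2$ appears; using $\|\widehat\mu_j\|\le\kappa$ deterministically achieves this. The other point is that a single event controls all $i$ simultaneously, which \eqref{eq:uniform_bag_concentration_app} already provides via the union bound.
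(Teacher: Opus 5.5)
Your proposal is correct and follows essentially the same route as the paper's proof. Both use Cauchy--Schwarz together with the deterministic bounds $\|\widehat\mu_i\|_{\mathcal{H}_k},\|\mu_{\mathbb{P}}\|_{\mathcal{H}_k}\le\kappa$ and the uniform concentration event; the only cosmetic difference is that you add and subtract $\langle\mu_{\mathbb{P}_i},\widehat\mu_j\rangle_{\mathcal{H}_k}$ where the paper uses the mirror-image split via $\langle\widehat\mu_i,\mu_{\mathbb{P}_j}\rangle_{\mathcal{H}_k}$, which yields the identical constant $2C_\mu\kappa^2$.
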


\begin{proof}
For the training Gram entry,
\[
K_N(\mathbb{P}_i,\mathbb{P}_j)-K(\mathbb{P}_i,\mathbb{P}_j)
=
\langle \widehat\mu_i,\widehat\mu_j\rangle_{\mathcal{H}_k}
-
\langle \mu_{\mathbb{P}_i},\mu_{\mathbb{P}_j}\rangle_{\mathcal{H}_k}.
\]
Add and subtract $\langle \widehat\mu_i,\mu_{\mathbb{P}_j}\rangle_{\mathcal{H}_k}$:
\[
\begin{aligned}
&|K_N(\mathbb{P}_i,\mathbb{P}_j)-K(\mathbb{P}_i,\mathbb{P}_j)| \\
&\le
|\langle \widehat\mu_i,\widehat\mu_j-\mu_{\mathbb{P}_j}\rangle_{\mathcal{H}_k}|
+
|\langle \widehat\mu_i-\mu_{\mathbb{P}_i},\mu_{\mathbb{P}_j}\rangle_{\mathcal{H}_k}|.
\end{aligned}
\]
By Cauchy--Schwarz,
\[
|K_N(\mathbb{P}_i,\mathbb{P}_j)-K(\mathbb{P}_i,\mathbb{P}_j)|
\le
\|\widehat\mu_i\|_{\mathcal{H}_k}
\|\widehat\mu_j-\mu_{\mathbb{P}_j}\|_{\mathcal{H}_k}
+
\|\mu_{\mathbb{P}_j}\|_{\mathcal{H}_k}
\|\widehat\mu_i-\mu_{\mathbb{P}_i}\|_{\mathcal{H}_k}.
\]
Since $\|\widehat\mu_i\|_{\mathcal{H}_k}\le \kappa$ and $\|\mu_{\mathbb{P}_j}\|_{\mathcal{H}_k}\le \kappa$, \eqref{eq:uniform_bag_concentration_app} gives
\[
|K_N(\mathbb{P}_i,\mathbb{P}_j)-K(\mathbb{P}_i,\mathbb{P}_j)|
\le
2C_\mu\kappa^2
\sqrt{\frac{\log(2n/\delta)}{N}}.
\]

For the test vector entry,
\[
K_N(\mathbb{P}_i,\mathbb{P})-K(\mathbb{P}_i,\mathbb{P})
=
\langle \widehat\mu_i-\mu_{\mathbb{P}_i},\mu_{\mathbb{P}}\rangle_{\mathcal{H}_k}.
\]
Thus,
\[
|K_N(\mathbb{P}_i,\mathbb{P})-K(\mathbb{P}_i,\mathbb{P})|
\le
\|\widehat\mu_i-\mu_{\mathbb{P}_i}\|_{\mathcal{H}_k}
\|\mu_{\mathbb{P}}\|_{\mathcal{H}_k}
\le
C_\mu\kappa^2
\sqrt{\frac{\log(2n/\delta)}{N}}.
\]
\end{proof}

\begin{lemma}
\label{lem:rf_bag_empirical_app}
On the event of Theorem~\ref{thm:uniform_rf_app}, for all $i,j\in\{1,\ldots,n\}$,
\begin{equation}
\label{eq:rf_bag_empirical_app}
    |K_{D,N}(\mathbb{P}_i,\mathbb{P}_j)-K_N(\mathbb{P}_i,\mathbb{P}_j)|
    \le
    \epsilon_D(\delta),
\end{equation}
and for any test distribution $\mathbb{P}$,
\begin{equation}
\label{eq:rf_test_entry_app}
    |K_{D,N}(\mathbb{P}_i,\mathbb{P})-K_N(\mathbb{P}_i,\mathbb{P})|
    \le
    \epsilon_D(\delta).
\end{equation}
\end{lemma}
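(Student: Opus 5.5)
The plan is to reduce both inequalities to Lemma~\ref{lem:rf_bag_population_app}. Its proof averages the pointwise bound $\sup_{\xb,\zb}|k_D(\xb,\zb)-k(\xb,\zb)|\le\epsilon_D(\delta)$ against an arbitrary pair of probability measures, so it applies equally when one or both arguments are empirical. The key observation is that both the RF and exact quantities are bilinear integrals of the respective instance kernels against the same measures.

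For a training pair $(i,j)$, I would first write
\[
K_{D,N}(\mathbb P_i,\mathbb P_j)
=\langle\widehat{\mub}_{i,D},\widehat{\mub}_{j,D}\rangle_2
=\frac{1}{N^2}\sum_{a=1}^N\sum_{b=1}^N k_D(\xb_{ia},\xb_{jb}),
\]
using the definition of $\widehat{\mub}_{i,D}$ and bilinearity together with $k_D(\xb,\zb)=\langle\phib_D(\xb),\phib_D(\zb)\rangle_2$. Similarly, the reproducing property gives
\[
K_N(\mathbb P_i,\mathbb P_j)=\langle\widehat\mu_i,\widehat\mu_j\rangle_{\mathcal H_k}=\frac{1}{N^2}\sum_{a,b}k(\xb_{ia},\xb_{jb}).
\]
Hence these two quantities are exactly $K_D(\widehat{\mathbb P}_i,\widehat{\mathbb P}_j)$ and $K(\widehat{\mathbb P}_i,\widehat{\mathbb P}_j)$ for the empirical measures $\widehat{\mathbb P}_i,\widehat{\mathbb P}_j$. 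Since all sample points lie in $\mathcal X$, these measures are probability measures on $\mathcal X$, and Lemma~\ref{lem:rf_bag_population_app} (or directly the triangle inequality applied to the average) yields $|K_{D,N}-K_N|\le\epsilon_D(\delta)$.

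For the test entry, the same computation applies with $\mub_{\mathbb P,D}=\int\phib_D\,d\mathbb P$ and $\mu_{\mathbb P}=\int k(\xb,\cdot)\,d\mathbb P$. Exchanging the finite-dimensional inner product with the integral (justified since $\|\phib_D\|_2\le\sqrt2$ is bounded), I get
\[
K_{D,N}(\mathbb P_i,\mathbb P)=\frac1N\sum_a\int k_D(\xb_{ia},\zb)\,d\mathbb P(\zb).
\]
The analogous identity holds for $K_N$ via the Bochner-integral property $\langle h,\mu_{\mathbb P}\rangle=\int h\,d\mathbb P$. Thus the difference equals $K_D(\widehat{\mathbb P}_i,\mathbb P)-K(\widehat{\mathbb P}_i,\mathbb P)$, which is again bounded by $\epsilon_D(\delta)$.

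There is essentially no obstacle beyond bookkeeping. The only point requiring care is that the RF event of Theorem~\ref{thm:uniform_rf_app} is a statement about the random features alone, uniform over $\mathcal X\times\mathcal X$. Because the features are independent of the data, the bound holds simultaneously for all bags and all test distributions without any union bound, and no failure probability is added on top of $\delta$.
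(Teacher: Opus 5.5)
Your proposal is correct and follows the paper's proof. Both write $K_{D,N}-K_N$ as the double integral of $k_D-k$ against the empirical measures $\widehat{\mathbb P}_i,\widehat{\mathbb P}_j$ (or $\widehat{\mathbb P}_i,\mathbb P$ for the test entry) and bound it by the uniform supremum, which amounts to Lemma~\ref{lem:rf_bag_population_app} applied with empirical measures. One minor remark: independence of the features from the data is not needed, because on this event the bound is uniform over $\mathcal X\times\mathcal X$ and so holds for any points in $\mathcal X$.
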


\begin{proof}
We only need to prove \eqref{eq:rf_bag_empirical_app} as the proof of \eqref{eq:rf_test_entry_app} is identical. Since the empirical bags define probability measures, we can write
\[
K_{D,N}(\mathbb{P}_i,\mathbb{P}_j)-K_N(\mathbb{P}_i,\mathbb{P}_j)
=
\int_{\mathcal{X}}\int_{\mathcal{X}}
\{k_D(\xb,\zb)-k(\xb,\zb)\}
\,d\widehat{\mathbb{P}}_i(\xb)d\widehat{\mathbb{P}}_j(\zb).
\]
Therefore,
\[
|K_{D,N}(\mathbb{P}_i,\mathbb{P}_j)-K_N(\mathbb{P}_i,\mathbb{P}_j)|
\le
\sup_{\xb,\zb\in\mathcal{X}}
|k_D(\xb,\zb)-k(\xb,\zb)|
\le
\epsilon_D(\delta).
\]
\end{proof}

\subsection{Matrix perturbation bounds}
\label{app:RaDR_matrix_perturbation}

Define
\[
a_N(\delta)
=
2C_\mu\kappa^2
\sqrt{\frac{\log(2n/\delta)}{N}},
\qquad
b_N(\delta)
=
C_\mu\kappa^2
\sqrt{\frac{\log(2n/\delta)}{N}}.
\]
Recall $\Kb$ denote the population distribution Gram matrix, $\Kb_N$ the exact finite-bag Gram matrix, and $\widehat{\Kb}_{D,N}$ the RF finite-bag Gram matrix.

The corresponding test vectors have components
\[
\begin{aligned}
[\vb(\mathbb P)]_i&=K(\mathbb P_i,\mathbb P),\\
[\vb_N(\mathbb P)]_i&=K_N(\mathbb P_i,\mathbb P),\\
[\widehat{\vb}_{D,N}(\mathbb P)]_i&=K_{D,N}(\mathbb P_i,\mathbb P).
\end{aligned}
\]
Thus the three predictors evaluated on a true test distribution are
\[
\begin{aligned}
\widehat f_\lambda(\mathbb P)
&=\vb(\mathbb P)^\top(\Kb+n\lambda\Ib)^{-1}\yb,\\
\widehat f_{\lambda,N}(\mathbb P)
&=\vb_N(\mathbb P)^\top(\Kb_N+n\lambda\Ib)^{-1}\yb,\\
\widehat f_{\lambda,N,D}(\mathbb P)
&=\widehat{\vb}_{D,N}(\mathbb P)^\top
(\widehat{\Kb}_{D,N}+n\lambda\Ib)^{-1}\yb.
\end{aligned}
\]

\begin{lemma}
\label{lem:gram_perturb_exact_app}
Assume that the event of Lemma~\ref{lem:two_stage_bag_kernel_app} holds. Then
\begin{equation}
\label{eq:gram_diff_bound_exact_app}
    \|\Kb_N-\Kb\|_2
    \le
    n a_N(\delta),
\end{equation}
and for every test distribution $\mathbb{P}$,
\begin{equation}
\label{eq:test_vec_diff_bound_exact_app}
    \|\vb_N(\mathbb{P})-\vb(\mathbb{P})\|_2
    \le
    \sqrt{n}\,b_N(\delta).
\end{equation}
\end{lemma}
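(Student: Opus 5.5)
The plan is to turn the entrywise bounds of Lemma~\ref{lem:two_stage_bag_kernel_app} into norm bounds using the elementary comparison between the spectral norm and the Frobenius norm. Write $\Delta=\Kb_N-\Kb$. On the event of Lemma~\ref{lem:two_stage_bag_kernel_app}, every entry satisfies $|\Delta_{ij}|\le a_N(\delta)$ by \eqref{eq:two_stage_bag_kernel_app}, uniformly over $i,j\in\{1,\ldots,n\}$.

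For the Gram matrix bound \eqref{eq:gram_diff_bound_exact_app}, I would use
\[
\|\Delta\|_2\le\|\Delta\|_F=\Bigl(\sum_{i,j=1}^n\Delta_{ij}^2\Bigr)^{1/2}\le\bigl(n^2a_N(\delta)^2\bigr)^{1/2}=n\,a_N(\delta).
\]
Equivalently, one could invoke the Schur-type bound $\|\Delta\|_2\le\max_i\sum_j|\Delta_{ij}|\le n\,a_N(\delta)$, which holds because $\Delta$ is symmetric. Either route gives the claim.

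For the test vector bound \eqref{eq:test_vec_diff_bound_exact_app}, each coordinate of $\vb_N(\mathbb P)-\vb(\mathbb P)$ is bounded in absolute value by $b_N(\delta)$, by \eqref{eq:two_stage_test_kernel_app}. Summing the $n$ squared coordinates and taking a square root gives $\sqrt n\,b_N(\delta)$.

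Two points need care, though neither is a real obstacle. First, the event of Lemma~\ref{lem:two_stage_bag_kernel_app} is the single uniform event from Lemma~\ref{lem:mean_concentration_app}. It controls all $n$ training bags simultaneously and does not depend on $\mathbb P$, so the test-vector bound holds for every $\mathbb P$ on that one event. Second, the bound on $\|\mu_{\mathbb P}\|_{\mathcal H_k}$ used in Lemma~\ref{lem:two_stage_bag_kernel_app} is deterministic, so no union bound over test distributions is required.
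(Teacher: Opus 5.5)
Your proposal is correct and follows the paper's proof essentially verbatim: you bound the spectral norm by the Frobenius norm using the uniform entrywise bound $a_N(\delta)$, and you bound the test-vector norm by summing $n$ squared coordinates, each at most $b_N(\delta)^2$. Your two further remarks are both valid, though the paper does not need them: that the single uniform concentration event covers every test distribution because $\|\mu_{\mathbb P}\|_{\mathcal H_k}\le\kappa$ holds deterministically, and that the Schur-test bound for the symmetric difference gives an alternative route.
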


\begin{proof}
Each entry of $\Kb_N-\Kb$ is bounded in absolute value by $a_N(\delta)$. Hence,
\[
\|\Kb_N-\Kb\|_2
\le
\|\Kb_N-\Kb\|_F
\le
n a_N(\delta).
\]
Similarly, each entry of $\vb_N(\mathbb{P})-\vb(\mathbb{P})$ is bounded by $b_N(\delta)$ in absolute value, so
\[
\|\vb_N(\mathbb{P})-\vb(\mathbb{P})\|_2
\le
\sqrt n\,b_N(\delta).
\]
\end{proof}

\begin{lemma}
\label{lem:gram_perturb_rf_app}
Assume that the event of Lemma~\ref{lem:rf_bag_empirical_app} holds. Then
\begin{equation}
\label{eq:gram_diff_bound_rf_app}
    \|\widehat{\Kb}_{D,N}-\Kb_N\|_2
    \le
    n\epsilon_D(\delta),
\end{equation}
and for every test distribution $\mathbb{P}$,
\begin{equation}
\label{eq:test_vec_diff_bound_rf_app}
    \|\widehat{\vb}_{D,N}(\mathbb{P})-\vb_N(\mathbb{P})\|_2
    \le
    \sqrt n\,\epsilon_D(\delta).
\end{equation}
\end{lemma}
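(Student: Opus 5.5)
The plan mirrors the proof of Lemma~\ref{lem:gram_perturb_exact_app}: first reduce the operator-norm bound to an entrywise bound, then convert that bound into a Frobenius bound. On the event of Lemma~\ref{lem:rf_bag_empirical_app}, inequality~\eqref{eq:rf_bag_empirical_app} gives, for every pair $(i,j)$,
\[
\bigl|[\widehat{\Kb}_{D,N}-\Kb_N]_{ij}\bigr|
=\bigl|K_{D,N}(\mathbb P_i,\mathbb P_j)-K_N(\mathbb P_i,\mathbb P_j)\bigr|
\le\epsilon_D(\delta).
\]
Here we use the identification $[\widehat{\Kb}_{D,N}]_{ij}=\langle\widehat{\mub}_{i,D},\widehat{\mub}_{j,D}\rangle_2=K_{D,N}(\mathbb P_i,\mathbb P_j)$ from~\eqref{eq:rf_gram_sec3}. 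Likewise, $[\Kb_N]_{ij}=K_N(\mathbb P_i,\mathbb P_j)$.

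Next, bound the spectral norm by the Frobenius norm:
\[
\|\widehat{\Kb}_{D,N}-\Kb_N\|_2\le\|\widehat{\Kb}_{D,N}-\Kb_N\|_F
\le\Bigl(\textstyle\sum_{i,j=1}^n\epsilon_D(\delta)^2\Bigr)^{1/2}=n\,\epsilon_D(\delta).
\]
This is exactly~\eqref{eq:gram_diff_bound_rf_app}.

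For the test vector, fix any test distribution $\mathbb P$. Each coordinate of $\widehat{\vb}_{D,N}(\mathbb P)-\vb_N(\mathbb P)$ equals $K_{D,N}(\mathbb P_i,\mathbb P)-K_N(\mathbb P_i,\mathbb P)$. By~\eqref{eq:rf_test_entry_app}, each coordinate is bounded in absolute value by $\epsilon_D(\delta)$. Summing $n$ squared coordinates and taking the square root gives $\sqrt n\,\epsilon_D(\delta)$, which is~\eqref{eq:test_vec_diff_bound_rf_app}.

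There is no real obstacle; the only point to check is uniformity in $\mathbb P$. The event of Theorem~\ref{thm:uniform_rf_app} is a sup-norm bound on $k_D-k$ over all of $\mathcal X\times\mathcal X$. The integral argument in Lemma~\ref{lem:rf_bag_empirical_app}, with the empirical measure $\widehat{\mathbb P}_i$ in the training slot and $\mathbb P$ in the test slot, therefore holds simultaneously for every $\mathbb P$ on that single event. Consequently, no additional union bound is needed for the "for every test distribution" clause.
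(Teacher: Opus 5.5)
Your proposal is correct and follows the paper's proof exactly. You take the entrywise bounds from Lemma~\ref{lem:rf_bag_empirical_app}, use $\|\cdot\|_2\le\|\cdot\|_F\le n\,\epsilon_D(\delta)$ for the Gram matrix, and use the coordinatewise bound to get $\sqrt n\,\epsilon_D(\delta)$ for the test vector. Your remark that the single sup-norm event of Theorem~\ref{thm:uniform_rf_app} covers every test distribution at once, so no union bound is needed, is also correct and makes explicit what the paper leaves implicit.
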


\begin{proof}
Each entry of $\widehat{\Kb}_{D,N}-\Kb_N$ is bounded in absolute value by $\epsilon_D(\delta)$. Hence,
\[
\|\widehat{\Kb}_{D,N}-\Kb_N\|_2
\le
\|\widehat{\Kb}_{D,N}-\Kb_N\|_F
\le
n\epsilon_D(\delta).
\]
The test-vector bound follows in the same way because each component of $\widehat{\vb}_{D,N}(\mathbb{P})-\vb_N(\mathbb{P})$ is bounded by $\epsilon_D(\delta)$.
\end{proof}

\subsection{Predictor perturbation}
\label{app:RaDR_predictor_perturbation}

\begin{lemma}
\label{lem:resolvent_app}
For any symmetric matrices $\Ab,\Bb\in\mathbb{R}^{n\times n}$ and $\lambda>0$ for which both shifted matrices are invertible,
\begin{equation}
\label{eq:resolvent_app}
    (\Ab+n\lambda \Ib)^{-1}-(\Bb+n\lambda \Ib)^{-1}
    =
    (\Ab+n\lambda \Ib)^{-1}
    (\Bb-\Ab)
    (\Bb+n\lambda \Ib)^{-1}.
\end{equation}
Moreover, if $\Ab$ and $\Bb$ are positive semidefinite, then
\[
\|(\Ab+n\lambda\Ib)^{-1}\|_2
\le
\frac{1}{n\lambda},
\qquad
\|(\Bb+n\lambda\Ib)^{-1}\|_2
\le
\frac{1}{n\lambda}.
\]
\end{lemma}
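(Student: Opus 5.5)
The plan is to prove the two claims of Lemma~\ref{lem:resolvent_app} separately, both by elementary linear algebra.

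For the resolvent identity, write $\Ab_\lambda=\Ab+n\lambda\Ib$ and $\Bb_\lambda=\Bb+n\lambda\Ib$, both invertible by hypothesis. The starting point is the trivial identity $\Bb_\lambda-\Ab_\lambda=\Bb-\Ab$, since the shifts cancel. Multiply it on the left by $\Ab_\lambda^{-1}$ and on the right by $\Bb_\lambda^{-1}$. The left-hand side becomes
\[
\Ab_\lambda^{-1}\Bb_\lambda\Bb_\lambda^{-1}-\Ab_\lambda^{-1}\Ab_\lambda\Bb_\lambda^{-1}
=\Ab_\lambda^{-1}-\Bb_\lambda^{-1},
\]
and the right-hand side becomes $\Ab_\lambda^{-1}(\Bb-\Ab)\Bb_\lambda^{-1}$. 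This is exactly \eqref{eq:resolvent_app}. Symmetry is not needed for this step, only invertibility.

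For the operator-norm bounds, I use that a positive semidefinite symmetric $\Ab$ has an orthogonal eigendecomposition $\Ab=\mathbf{U}\,\mathrm{diag}(s_1,\ldots,s_n)\mathbf{U}^\top$ with $s_k\ge0$. Then
\[
\Ab_\lambda^{-1}=\mathbf{U}\,\mathrm{diag}\bigl((s_k+n\lambda)^{-1}\bigr)\mathbf{U}^\top .
\]
This matrix is symmetric, so its spectral norm equals its largest eigenvalue, $\max_k (s_k+n\lambda)^{-1}\le (n\lambda)^{-1}$. In particular, positive semidefiniteness together with $\lambda>0$ already guarantees invertibility. The same argument applies verbatim to $\Bb$.

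I do not expect a genuine obstacle. The only point needing care is the order of multiplication in the first step, which must be left by $\Ab_\lambda^{-1}$ and right by $\Bb_\lambda^{-1}$, since these matrices need not commute. For later use in the predictor perturbation, I note that combining the identity with the norm bounds and submultiplicativity gives
\[
\bigl\|\Ab_\lambda^{-1}-\Bb_\lambda^{-1}\bigr\|_2\le \frac{\|\Ab-\Bb\|_2}{n^2\lambda^2}.
\]
Together with Lemmas~\ref{lem:gram_perturb_exact_app} and~\ref{lem:gram_perturb_rf_app}, this yields the $\lambda^{-2}$ factors appearing in Theorems~\ref{thm:exact_twostage_main} and~\ref{thm:RaDR_excess_main}.
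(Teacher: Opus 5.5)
Your proof is correct and follows the same route as the paper: the paper cites direct multiplication for the identity and the fact that all eigenvalues of the shifted positive semidefinite matrices are at least $n\lambda$, and you simply write both steps out explicitly. One small correction to your closing aside: the resolvent-difference bound gives the predictor-level $\lambda^{-2}$ factor, which becomes the $\lambda^{-4}$ terms of the theorems after squaring in Lemma~\ref{lem:excess_transfer_app}, whereas the theorems' $\lambda^{-2}$ terms come from the $\lambda^{-1}$ test-vector perturbation.
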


\begin{proof}
The identity follows by direct multiplication. The inverse norm bound follows because all eigenvalues of $\Ab+n\lambda\Ib$ and $\Bb+n\lambda\Ib$ are at least $n\lambda$ when $\Ab$ and $\Bb$ are positive semidefinite.
\end{proof}

\begin{lemma}
\label{lem:predictor_exact_vs_twostage_app}
Assume the event of Lemma~\ref{lem:gram_perturb_exact_app} holds. Then there exists a constant $C_{K,y}>0$, depending only on $\kappa$ and $C_y$, such that for every test distribution $\mathbb{P}$,
\begin{equation}
\label{eq:predictor_exact_vs_twostage_app}
    |\widehat f_{\lambda,N}(\mathbb{P})-\widehat f_\lambda(\mathbb{P})|
    \le
    C_{K,y}
    \left(
    \frac{b_N(\delta)}{\lambda}
    +
    \frac{a_N(\delta)}{\lambda^2}
    \right).
\end{equation}
\end{lemma}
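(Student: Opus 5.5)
We split the predictor difference by adding and subtracting the mixed term $\vb(\mathbb P)^\top(\Kb_N+n\lambda\Ib)^{-1}\yb$:
\[
\widehat f_{\lambda,N}(\mathbb P)-\widehat f_\lambda(\mathbb P)
=\bigl(\vb_N(\mathbb P)-\vb(\mathbb P)\bigr)^\top(\Kb_N+n\lambda\Ib)^{-1}\yb
+\vb(\mathbb P)^\top\bigl[(\Kb_N+n\lambda\Ib)^{-1}-(\Kb+n\lambda\Ib)^{-1}\bigr]\yb .
\]
Both $\Kb$ and $\Kb_N$ are Gram matrices of vectors in $\mathcal H_k$ (the $\mu_{\mathbb P_i}$ and the $\widehat\mu_i$), hence positive semidefinite. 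Therefore Lemma~\ref{lem:resolvent_app} applies to both shifted inverses, with operator norm at most $1/(n\lambda)$.

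The first term is bounded by Cauchy--Schwarz and Lemma~\ref{lem:gram_perturb_exact_app}. We use $\|\yb\|_2\le\sqrt n\,C_y$ from Assumption~\ref{ass:RaDR_bounded}, which gives
\[
|\cdot|\le \sqrt n\, b_N(\delta)\cdot\frac{1}{n\lambda}\cdot \sqrt n\, C_y = \frac{C_y\, b_N(\delta)}{\lambda}.
\]

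For the second term, we use the resolvent identity of Lemma~\ref{lem:resolvent_app} with $\Ab=\Kb_N$ and $\Bb=\Kb$:
\[
\bigl|\vb(\mathbb P)^\top(\Kb_N+n\lambda\Ib)^{-1}(\Kb-\Kb_N)(\Kb+n\lambda\Ib)^{-1}\yb\bigr|
\le \|\vb(\mathbb P)\|_2\,\frac{1}{(n\lambda)^2}\,\|\Kb_N-\Kb\|_2\,\|\yb\|_2 .
\]
Each entry of $\vb(\mathbb P)$ satisfies $|K(\mathbb P_i,\mathbb P)|\le\|\mu_{\mathbb P_i}\|\|\mu_{\mathbb P}\|\le\kappa^2$, so $\|\vb(\mathbb P)\|_2\le\sqrt n\,\kappa^2$. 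Lemma~\ref{lem:gram_perturb_exact_app} gives $\|\Kb_N-\Kb\|_2\le n\,a_N(\delta)$. Combining, the second term is at most
\[
\sqrt n\,\kappa^2\cdot\frac{1}{n^2\lambda^2}\cdot n\,a_N(\delta)\cdot\sqrt n\,C_y=\frac{\kappa^2 C_y\,a_N(\delta)}{\lambda^2}.
\]

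Setting $C_{K,y}=C_y\max\{1,\kappa^2\}$ yields the claim. The factors of $n$ cancel exactly because the Frobenius-based bound $n\,a_N$ is matched by the two resolvent factors $1/(n\lambda)^2$ and the two $\sqrt n$ vector norms.

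There is no real obstacle; the only point needing care is this $n$-bookkeeping, which works out as shown. Note also that the bound is deterministic on the event of Lemma~\ref{lem:gram_perturb_exact_app} and holds uniformly in the test distribution $\mathbb P$, because the $\vb_N-\vb$ entry bound of Lemma~\ref{lem:two_stage_bag_kernel_app} holds for every $\mathbb P$.
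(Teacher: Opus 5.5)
Your proof is correct and follows the paper's argument step for step: the same add-and-subtract decomposition into a test-vector term and a resolvent term, the same bounds $C_y b_N(\delta)/\lambda$ and $\kappa^2 C_y a_N(\delta)/\lambda^2$, and the same cancellation of the $n$-factors. You also make explicit that $\Kb$ and $\Kb_N$ are Gram matrices and hence positive semidefinite, which is what justifies the $1/(n\lambda)$ inverse-norm bound; the paper uses this without stating it.
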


\begin{proof}
Set
\[
\Ab_N=\Kb_N+n\lambda\Ib,
\qquad
\Ab=\Kb+n\lambda\Ib.
\]
Subtracting the one-stage estimator from the exact two-stage estimator gives
\[
\widehat f_{\lambda,N}(\mathbb{P})-\widehat f_\lambda(\mathbb{P})
=
(\vb_N(\mathbb{P})-\vb(\mathbb{P}))^\top \Ab_N^{-1}\yb
+
\vb(\mathbb{P})^\top(\Ab_N^{-1}-\Ab^{-1})\yb.
\]
Denote these two terms by $B_1$ and $B_2$.

For $B_1$, using Cauchy--Schwarz, Lemma~\ref{lem:resolvent_app}, and \eqref{eq:test_vec_diff_bound_exact_app},
\[
|B_1|
\le
\|\vb_N(\mathbb{P})-\vb(\mathbb{P})\|_2
\|\Ab_N^{-1}\|_2
\|\yb\|_2
\le
\frac{\sqrt n\,b_N(\delta)}{n\lambda}
\sqrt n C_y
=
\frac{C_y b_N(\delta)}{\lambda}.
\]

For $B_2$, the resolvent identity in Lemma~\ref{lem:resolvent_app} gives
\[
\Ab_N^{-1}-\Ab^{-1}
=
\Ab_N^{-1}(\Kb-\Kb_N)\Ab^{-1}.
\]
Therefore,
\[
|B_2|
\le
\|\vb(\mathbb{P})\|_2
\|\Ab_N^{-1}\|_2
\|\Kb_N-\Kb\|_2
\|\Ab^{-1}\|_2
\|\yb\|_2.
\]
Each component of $\vb(\mathbb{P})$ satisfies
\[
|K(\mathbb{P}_i,\mathbb{P})|\le \kappa^2,
\]
so $\|\vb(\mathbb{P})\|_2\le \sqrt n\,\kappa^2$. Using $\|\yb\|_2\le \sqrt n C_y$ and \eqref{eq:gram_diff_bound_exact_app},
\[
|B_2|
\le
\frac{\sqrt n\,\kappa^2\sqrt n C_y}{(n\lambda)^2}
n a_N(\delta)
=
\frac{\kappa^2C_y a_N(\delta)}{\lambda^2}.
\]
Combining the two bounds and absorbing constants into $C_{K,y}$ proves the claim.
\end{proof}

\begin{lemma}
\label{lem:predictor_rf_vs_exact2_app}
Assume the event of Lemma~\ref{lem:gram_perturb_rf_app} holds. Then there exists a constant $C_{K,y}>0$, depending only on $\kappa$ and $C_y$, such that for every test distribution $\mathbb{P}$,
\begin{equation}
\label{eq:predictor_rf_vs_exact2_app}
    |\widehat f_{\lambda,N,D}(\mathbb{P})-\widehat f_{\lambda,N}(\mathbb{P})|
    \le
    C_{K,y}
    \left(
    \frac{\epsilon_D(\delta)}{\lambda}
    +
    \frac{\epsilon_D(\delta)}{\lambda^2}
    \right).
\end{equation}
\end{lemma}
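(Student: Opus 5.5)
The proof follows the template of Lemma~\ref{lem:predictor_exact_vs_twostage_app}, with the exact finite-bag objects in the role of the reference quantities and the RF objects in the role of the perturbed ones. Set
\[
\Ab_{D}=\widehat{\Kb}_{D,N}+n\lambda\Ib,
\qquad
\Ab_N=\Kb_N+n\lambda\Ib,
\]
and split the difference as
\[
\widehat f_{\lambda,N,D}(\mathbb P)-\widehat f_{\lambda,N}(\mathbb P)
=
(\widehat{\vb}_{D,N}(\mathbb P)-\vb_N(\mathbb P))^\top\Ab_D^{-1}\yb
+
\vb_N(\mathbb P)^\top(\Ab_D^{-1}-\Ab_N^{-1})\yb .
\]
Call these two terms $B_1$ and $B_2$.

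Before using Lemma~\ref{lem:resolvent_app}, I will check that both Gram matrices are positive semidefinite. This holds because $\widehat{\Kb}_{D,N}=\widehat{\Mb}_D\widehat{\Mb}_D^\top$, and $\Kb_N$ is the Gram matrix of the vectors $\widehat\mu_i\in\mathcal H_k$. Hence $\|\Ab_D^{-1}\|_2,\|\Ab_N^{-1}\|_2\le 1/(n\lambda)$.

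For $B_1$, I apply Cauchy--Schwarz together with \eqref{eq:test_vec_diff_bound_rf_app} and $\|\yb\|_2\le\sqrt n C_y$:
\[
|B_1|\le \sqrt n\,\epsilon_D(\delta)\cdot\frac{1}{n\lambda}\cdot\sqrt n C_y=\frac{C_y\epsilon_D(\delta)}{\lambda}.
\]

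For $B_2$, the resolvent identity gives $\Ab_D^{-1}-\Ab_N^{-1}=\Ab_D^{-1}(\Kb_N-\widehat{\Kb}_{D,N})\Ab_N^{-1}$. Combining this with \eqref{eq:gram_diff_bound_rf_app} yields
\[
|B_2|\le \|\vb_N(\mathbb P)\|_2\,\frac{n\epsilon_D(\delta)}{(n\lambda)^2}\,\sqrt n C_y .
\]

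The only step that needs a new bound is $\|\vb_N(\mathbb P)\|_2$, since it is built from empirical embeddings rather than population ones. Each entry satisfies $|\langle\widehat\mu_i,\mu_{\mathbb P}\rangle_{\mathcal H_k}|\le\|\widehat\mu_i\|_{\mathcal H_k}\|\mu_{\mathbb P}\|_{\mathcal H_k}\le\kappa^2$, because $\widehat\mu_i$ is an average of feature maps of norm at most $\kappa$. This holds deterministically, without the concentration event, so $\|\vb_N(\mathbb P)\|_2\le\sqrt n\,\kappa^2$ and therefore $|B_2|\le \kappa^2C_y\epsilon_D(\delta)/\lambda^2$.

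Adding the two bounds and taking $C_{K,y}=\max\{C_y,\kappa^2C_y\}$ gives \eqref{eq:predictor_rf_vs_exact2_app}. I do not expect a real obstacle: the argument works on the single event of Lemma~\ref{lem:gram_perturb_rf_app}, and the only care needed is the PSD check and the norm bound on $\vb_N(\mathbb P)$.
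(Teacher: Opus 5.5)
Your proposal is correct and follows essentially the same argument as the paper: the same split into a test-vector term bounded by $C_y\epsilon_D(\delta)/\lambda$ and a resolvent term bounded by $\kappa^2 C_y\epsilon_D(\delta)/\lambda^2$, using $\|\vb_N(\mathbb P)\|_2\le\sqrt n\,\kappa^2$. The paper leaves the positive semidefiniteness check and the deterministic nature of the entrywise bound $|\langle\widehat\mu_i,\mu_{\mathbb P}\rangle_{\mathcal H_k}|\le\kappa^2$ implicit, and you make both explicit.
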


\begin{proof}
Set
\[
\Ab_{D,N}=\widehat{\Kb}_{D,N}+n\lambda\Ib,
\qquad
\Ab_N=\Kb_N+n\lambda\Ib.
\]
Subtracting the exact two-stage predictor from the RF two-stage predictor gives
\[
\widehat f_{\lambda,N,D}(\mathbb{P})-\widehat f_{\lambda,N}(\mathbb{P})
=
(\widehat{\vb}_{D,N}(\mathbb{P})-\vb_N(\mathbb{P}))^\top
\Ab_{D,N}^{-1}\yb
+
\vb_N(\mathbb{P})^\top
(\Ab_{D,N}^{-1}-\Ab_N^{-1})
\yb.
\]
Denote these two terms by $C_1$ and $C_2$.

For $C_1$,
\[
|C_1|
\le
\|\widehat{\vb}_{D,N}(\mathbb{P})-\vb_N(\mathbb{P})\|_2
\|\Ab_{D,N}^{-1}\|_2
\|\yb\|_2
\le
\frac{\sqrt n\,\epsilon_D(\delta)}{n\lambda}
\sqrt n C_y
=
\frac{C_y\epsilon_D(\delta)}{\lambda}.
\]
For $C_2$, the resolvent identity gives
\[
\Ab_{D,N}^{-1}-\Ab_N^{-1}
=
\Ab_{D,N}^{-1}
(\Kb_N-\widehat{\Kb}_{D,N})
\Ab_N^{-1}.
\]
Thus,
\[
|C_2|
\le
\|\vb_N(\mathbb{P})\|_2
\|\Ab_{D,N}^{-1}\|_2
\|\widehat{\Kb}_{D,N}-\Kb_N\|_2
\|\Ab_N^{-1}\|_2
\|\yb\|_2.
\]
Since $|K_N(\mathbb{P}_i,\mathbb{P})|\le \kappa^2$, we have $\|\vb_N(\mathbb{P})\|_2\le \sqrt n\,\kappa^2$. Combining this with $\|\yb\|_2\le \sqrt n C_y$ and \eqref{eq:gram_diff_bound_rf_app},
\[
|C_2|
\le
\frac{\sqrt n\,\kappa^2\sqrt n C_y}{(n\lambda)^2}
n\epsilon_D(\delta)
=
\frac{\kappa^2C_y\epsilon_D(\delta)}{\lambda^2}.
\]
Combining the two bounds proves the claim.
\end{proof}

\begin{lemma}
\label{lem:excess_transfer_app}
For squared loss,
\begin{equation}
\label{eq:excess_identity_app}
    \mathcal{E}(f,f_\rho)
    =
    \|f-f_\rho\|_{L^2(\rho_X)}^2.
\end{equation}
Consequently, for any predictors $f$ and $g$,
\begin{equation}
\label{eq:excess_transfer_l2_app}
    \mathcal{E}(f,f_\rho)
    \le
    2\mathcal{E}(g,f_\rho)
    +
    2\|f-g\|_{L^2(\rho_X)}^2.
\end{equation}
If $\sup_{\mathbb{P}}|f(\mathbb{P})-g(\mathbb{P})|\le \Delta$, then
\begin{equation}
\label{eq:excess_transfer_sup_app}
    \mathcal{E}(f,f_\rho)
    \le
    2\mathcal{E}(g,f_\rho)+2\Delta^2.
\end{equation}
\end{lemma}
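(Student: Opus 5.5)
The identity \eqref{eq:excess_identity_app} is the standard Pythagorean decomposition of the squared-loss risk. I will start from $f_\rho(\mathbb P)=\mathbb E[y\mid\mathbb P]$ and write, for an independent pair $(\mathbb P,y)\sim\rho$,
\[
(f(\mathbb P)-y)^2=(f(\mathbb P)-f_\rho(\mathbb P))^2+2(f(\mathbb P)-f_\rho(\mathbb P))(f_\rho(\mathbb P)-y)+(f_\rho(\mathbb P)-y)^2 .
\]
Taking expectations, the cross term vanishes by the tower property. Conditional on $\mathbb P$, the factor $f(\mathbb P)-f_\rho(\mathbb P)$ is fixed and $\mathbb E[f_\rho(\mathbb P)-y\mid\mathbb P]=0$. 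Here $f$ is treated as fixed; for data-dependent predictors the risk is evaluated conditional on the training data and random features, which are independent of the test pair.

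Integrability is not an issue. Assumption~\ref{ass:RaDR_bounded} gives $|y|\le C_y$, hence $|f_\rho|\le C_y$. For any $f$ with finite $L^2(\rho_X)$ norm, every term is finite; if $\|f\|_{L^2}=\infty$, both sides of the identity are infinite. Subtracting $\mathcal R[f_\rho]=\mathbb E(f_\rho(\mathbb P)-y)^2$ then gives $\mathcal E(f,f_\rho)=\|f-f_\rho\|_{L^2(\rho_X)}^2$.

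For \eqref{eq:excess_transfer_l2_app}, I will apply the identity to $f$ and write $f-f_\rho=(g-f_\rho)+(f-g)$. The elementary inequality $\|u+v\|^2\le 2\|u\|^2+2\|v\|^2$ in $L^2(\rho_X)$, a consequence of Cauchy–Schwarz or of $2ab\le a^2+b^2$, gives $\|f-f_\rho\|^2\le 2\|g-f_\rho\|^2+2\|f-g\|^2$. Re-applying the identity to $g$ turns the first term into $2\mathcal E(g,f_\rho)$.

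Finally, \eqref{eq:excess_transfer_sup_app} follows from $\|f-g\|_{L^2(\rho_X)}^2=\int|f-g|^2\,d\rho_X\le\Delta^2$, since $\rho_X$ is a probability measure.

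No step is a real obstacle. The only points needing care are justifying the vanishing cross term via conditional expectation and the measurability/integrability bookkeeping, both of which are covered by the boundedness assumption and the standing measurability conventions.
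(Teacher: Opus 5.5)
Your proposal is correct and follows the same route as the paper's proof: the orthogonality (Pythagorean) identity for squared loss, then $(u+v)^2\le 2u^2+2v^2$ integrated against $\rho_X$, then bounding the $L^2(\rho_X)$ norm by the sup norm. Your tower-property argument for the vanishing cross term and your integrability remarks spell out details that the paper simply calls standard.
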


\begin{proof}
The identity \eqref{eq:excess_identity_app} is the standard orthogonality identity for squared loss:
\[
\mathcal{R}[f]-\mathcal{R}[f_\rho]
=
\mathbb{E}_{\mathbb{P}\sim\rho_X}
\left[
(f(\mathbb{P})-f_\rho(\mathbb{P}))^2
\right].
\]
Now write
\[
f-f_\rho=(f-g)+(g-f_\rho).
\]
Using $(u+v)^2\le 2u^2+2v^2$ and integrating with respect to $\rho_X$ gives \eqref{eq:excess_transfer_l2_app}. The last claim follows from
\[
\|f-g\|_{L^2(\rho_X)}^2
\le
\sup_{\mathbb{P}}|f(\mathbb{P})-g(\mathbb{P})|^2.
\]
\end{proof}

\subsection{Proofs of the exact two-stage and RF risk bounds}
\label{app:proof_RaDR_main}

For $\eta\in(0,1)$, define the deterministic perturbation error
\[
\begin{aligned}
\Delta_N(\eta)
&=C_{K,y}\left(
\frac{b_N(\eta)}{\lambda}
+\frac{a_N(\eta)}{\lambda^2}\right),\\
\Delta_D(\eta)
&=C_{K,y}\epsilon_D(\eta)
\left(\lambda^{-1}+\lambda^{-2}\right).
\end{aligned}
\]
Lemmas~\ref{lem:predictor_exact_vs_twostage_app}
and~\ref{lem:predictor_rf_vs_exact2_app} imply the corresponding
uniform predictor bounds on events of probability at least
$1-\eta$. Moreover,
\[
\Delta_N(\eta)^2
\le C\frac{\log(2n/\eta)}{N}
\left(\lambda^{-2}+\lambda^{-4}\right),
\]
and
\[
\Delta_D(\eta)^2
\le C\epsilon_D(\eta)^2
\left(\lambda^{-2}+\lambda^{-4}\right),
\]
where $C$ depends only on the fixed boundedness constants.

\begin{proof}[Proof of Theorem~\ref{thm:exact_twostage_main}]
Use failure probability $\delta/2$ for the one-stage baseline
and $\delta/2$ for the training-bag concentration event.
Their intersection has probability at least $1-\delta$.
On this intersection, Lemma~\ref{lem:excess_transfer_app} gives
\[
\mathcal E(\widehat f_{\lambda,N},f_\rho)
\le 2B_{\mathrm{1s}}(n,\lambda,\delta/2)
+2\Delta_N(\delta/2)^2.
\]
The concentration term involves $\log(4n/\delta)$.
Since $n\ge1$ and $\delta\in(0,1)$,
\[
\log(4n/\delta)\le2\log(2n/\delta).
\]
Absorbing this numerical factor into $C_2$ proves
\eqref{eq:exact_twostage_main_sec3}.
\end{proof}

\begin{proof}[Proof of Theorem~\ref{thm:RaDR_excess_main}]
Allocate failure probability $\delta/3$ separately to:
the one-stage baseline, the training-bag concentration event,
and the uniform RF approximation event.
A union bound gives simultaneous validity with probability
at least $1-\delta$; independence of these events is not needed.

On their intersection, applying
Lemma~\ref{lem:excess_transfer_app} twice gives
\[
\begin{aligned}
\mathcal E(\widehat f_{\lambda,N,D},f_\rho)
&\le 2\mathcal E(\widehat f_{\lambda,N},f_\rho)
+2\Delta_D(\delta/3)^2\\
&\le 4\mathcal E(\widehat f_\lambda,f_\rho)
+4\Delta_N(\delta/3)^2+2\Delta_D(\delta/3)^2\\
&\le 4B_{\mathrm{1s}}(n,\lambda,\delta/3)
+4\Delta_N(\delta/3)^2+2\Delta_D(\delta/3)^2.
\end{aligned}
\]
Finally,
\[
\log(6n/\delta)\le2\log(3n/\delta).
\]
Substituting the perturbation-radius bounds and absorbing
fixed numerical factors into $C_3$ proves
\eqref{eq:RaDR_excess_main}.
\end{proof}

\subsection{Population excess risk versus deployed empirical test bags}
\label{app:RaDR_deployment}

Theorems~\ref{thm:exact_twostage_main} and~\ref{thm:RaDR_excess_main} are population statements computed on true test distributions $\mathbb{P}\sim\rho_X$. We now analyze the additional perturbation that appears when, at deployment time, a new test distribution is available only through a fresh bag. Conditional on the test distribution, this bag is independent of the training data, the random features, and the test response.

Let
\[
\widetilde{\xb}_{1},\ldots,\widetilde{\xb}_{N_t}
\overset{\mathrm{i.i.d.}}{\sim}
\mathbb{P},
\qquad
\widehat{\mathbb{P}}^{N_t}
=
\frac{1}{N_t}\sum_{r=1}^{N_t}\delta_{\widetilde{\xb}_r}.
\]
Define the deployed predictors
\[
\widehat f_{\lambda,N}^{(N_t)}(\mathbb{P})
=
\widehat f_{\lambda,N}(\widehat{\mathbb{P}}^{N_t}),
\qquad
\widehat f_{\lambda,N,D}^{(N_t)}(\mathbb{P})
=
\widehat f_{\lambda,N,D}(\widehat{\mathbb{P}}^{N_t}).
\]

\begin{lemma}
\label{lem:fresh_test_embedding_app}
Under Assumption~\ref{ass:RaDR_bounded}, for any fixed test distribution $\mathbb{P}$ and any $\delta\in(0,1)$,
\[
\mathbb{P}\left(
\|\widehat\mu_{\mathbb{P}}^{N_t}-\mu_{\mathbb{P}}\|_{\mathcal{H}_k}
\le
C_\mu\kappa
\sqrt{\frac{\log(2/\delta)}{N_t}}
\right)
\ge
1-\delta.
\]
\end{lemma}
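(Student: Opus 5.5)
This statement is the single-distribution part of Lemma~\ref{lem:mean_concentration_app}, with the training bag size $N$ replaced by the test bag size $N_t$. The plan is therefore to reuse that argument verbatim. Write $\widehat\mu_{\mathbb P}^{N_t}=N_t^{-1}\sum_{r=1}^{N_t}k(\widetilde{\xb}_r,\cdot)$ and $F=\|\widehat\mu_{\mathbb P}^{N_t}-\mu_{\mathbb P}\|_{\mathcal H_k}$. Since $\mathbb P$ is fixed and the $\widetilde{\xb}_r$ are i.i.d.\ from $\mathbb P$, $F$ is a function of $N_t$ independent variables. No union bound over bags is needed, which is why the log term is $\log(2/\delta)$ rather than $\log(2n/\delta)$.

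\textbf{Step 1: the mean.} The centered summands $k(\widetilde{\xb}_r,\cdot)-\mu_{\mathbb P}$ are i.i.d.\ and mean zero in $\mathcal H_k$, so the cross terms vanish and
\[
\mathbb E F^2=N_t^{-1}\,\mathbb E\|k(\widetilde{\xb},\cdot)-\mu_{\mathbb P}\|_{\mathcal H_k}^2 .
\]
This equals $N_t^{-1}\bigl(\mathbb E k(\widetilde{\xb},\widetilde{\xb})-\|\mu_{\mathbb P}\|^2\bigr)\le \kappa^2/N_t$ by Assumption~\ref{ass:RaDR_bounded}. Jensen then gives $\mathbb E F\le\kappa/\sqrt{N_t}$.

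\textbf{Step 2: concentration.} Replacing one point $\widetilde{\xb}_r$ changes $\widehat\mu_{\mathbb P}^{N_t}$ by $N_t^{-1}\bigl(k(\widetilde{\xb}_r,\cdot)-k(\widetilde{\xb}_r',\cdot)\bigr)$, whose norm is at most $2\kappa/N_t$. By the triangle inequality, $F$ changes by at most the same amount. McDiarmid's inequality then gives, with probability at least $1-\delta$,
\[
F\le \mathbb E F+\kappa\sqrt{2\log(1/\delta)/N_t}.
\]

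\textbf{Step 3: constants.} Use $1\le\sqrt{\log(2/\delta)/\log 2}$ and $\log(1/\delta)\le\log(2/\delta)$ to combine the two terms into $(1/\sqrt{\log 2}+\sqrt2)\,\kappa\sqrt{\log(2/\delta)/N_t}\le 3\kappa\sqrt{\log(2/\delta)/N_t}$, matching $C_\mu=3$ as in Lemma~\ref{lem:mean_concentration_app}.

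There is no real obstacle. The only point to state explicitly is that the probability is over the test bag, conditional on the test distribution. This is legitimate because the test bag is independent of the training data and the random features, so the bound can later be combined with the training-side events by a union bound.
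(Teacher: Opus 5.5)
Your proposal is correct and follows essentially the same route as the paper. The paper conditions on $\mathbb{P}$ and simply invokes Lemma~\ref{lem:mean_concentration_app} with sample size $N_t$, without a union bound over bags. You instead unpack that lemma's argument: the second-moment bound, then bounded differences, then explicitly checking that $1/\sqrt{\log 2}+\sqrt{2}\le 3$ justifies $C_\mu=3$, a step the paper leaves implicit.
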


\begin{proof}
Condition on the test distribution $\mathbb{P}$. Then
\[
\widehat\mu_{\mathbb{P}}^{N_t}-\mu_{\mathbb{P}}
=
\frac{1}{N_t}\sum_{r=1}^{N_t}
\left(
k(\cdot,\widetilde{\xb}_r)-\mu_{\mathbb{P}}
\right),
\qquad
\widetilde{\xb}_r\overset{\mathrm{i.i.d.}}{\sim}\mathbb{P}.
\]
The summands are centered Hilbert-space-valued random variables with norm at most $2\kappa$. Apply Lemma~\ref{lem:mean_concentration_app} with sample size $N_t$ to obtain the result.
\end{proof}

\begin{proposition}
\label{prop:deployment_exact_app}
Under Assumption~\ref{ass:RaDR_bounded}, fix the training sample and a test distribution $\mathbb{P}$. Then, with probability at least $1-\delta$ over the test bag, we have
\begin{equation}
\label{eq:deployment_exact_bound_app}
    |\widehat f_{\lambda,N}^{(N_t)}(\mathbb{P})-\widehat f_{\lambda,N}(\mathbb{P})|
    \le
    \frac{C_\mu C_y\kappa^2}{\lambda}
    \sqrt{\frac{\log(2/\delta)}{N_t}}.
\end{equation}
\end{proposition}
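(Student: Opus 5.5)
Since the training sample is fixed, the dual coefficient vector $\alphab=(\Kb_N+n\lambda\Ib_n)^{-1}\yb$ is deterministic, and the only randomness comes from the test bag. The plan is to write both predictors as linear functionals of the test embedding and reduce the claim to a concentration bound on that embedding. With the linear outer kernel,
\[
\widehat f_{\lambda,N}(\mathbb P)=\sum_{i=1}^n\alpha_i\langle\widehat\mu_i,\mu_{\mathbb P}\rangle_{\mathcal H_k}
=\Big\langle \sum_{i=1}^n\alpha_i\widehat\mu_i,\ \mu_{\mathbb P}\Big\rangle_{\mathcal H_k},
\]
and the deployed predictor is the same expression with $\mu_{\mathbb P}$ replaced by $\widehat\mu^{N_t}_{\mathbb P}$.

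By Cauchy--Schwarz in $\mathcal H_k$, the difference is bounded as
\[
|\widehat f_{\lambda,N}^{(N_t)}(\mathbb P)-\widehat f_{\lambda,N}(\mathbb P)|
\le \Big\|\sum_{i=1}^n\alpha_i\widehat\mu_i\Big\|_{\mathcal H_k}\,\|\widehat\mu^{N_t}_{\mathbb P}-\mu_{\mathbb P}\|_{\mathcal H_k}.
\]
The second factor is controlled directly by Lemma~\ref{lem:fresh_test_embedding_app}: with probability at least $1-\delta$ over the test bag it is at most $C_\mu\kappa\sqrt{\log(2/\delta)/N_t}$. This is where the single $N_t^{-1/2}$ factor comes from.

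It remains to show that the first factor is at most $\kappa C_y/\lambda$. The crude route uses $\|\widehat\mu_i\|_{\mathcal H_k}\le\kappa$ and Lemma~\ref{lem:resolvent_app}:
\[
\Big\|\sum_i\alpha_i\widehat\mu_i\Big\|\le\kappa\|\alphab\|_1\le\kappa\sqrt n\,\|\alphab\|_2\le\kappa\sqrt n\cdot\frac{\sqrt n\,C_y}{n\lambda}=\frac{\kappa C_y}{\lambda}.
\]
Combining the two factors gives exactly $C_\mu C_y\kappa^2\lambda^{-1}\sqrt{\log(2/\delta)/N_t}$.

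The only delicate point is keeping the dependence on $n$ cancelled. Passing from $\|\alphab\|_1$ to $\|\alphab\|_2$ costs a factor $\sqrt n$, and this has to be offset by $\|(\Kb_N+n\lambda\Ib_n)^{-1}\|_2\le(n\lambda)^{-1}$ together with $\|\yb\|_2\le\sqrt n\,C_y$. Alternatively, one could write the first factor as $\|\widehat{\Mb}^*\alphab\|$ and use the sharper bound $\|\widehat{\Mb}^*(\widehat{\Mb}\widehat{\Mb}^*+n\lambda)^{-1}\|\le(2\sqrt{n\lambda})^{-1}$. That would give the better order $\lambda^{-1/2}$, but the stated $\lambda^{-1}$ bound only needs the crude argument above.
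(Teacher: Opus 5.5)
Your proof is correct and is essentially the paper's argument. The paper applies Cauchy--Schwarz in $\mathbb{R}^n$ to $(\vb_N(\widehat{\mathbb{P}}^{N_t})-\vb_N(\mathbb{P}))^\top(\Kb_N+n\lambda\Ib)^{-1}\yb$ and bounds each entry by $\kappa\|\widehat\mu^{N_t}_{\mathbb{P}}-\mu_{\mathbb{P}}\|_{\mathcal{H}_k}$ via Lemma~\ref{lem:fresh_test_embedding_app}, whereas you pull out the same embedding error in $\mathcal{H}_k$ first; both routes use the same ingredients ($\|\widehat\mu_i\|_{\mathcal{H}_k}\le\kappa$, $\|(\Kb_N+n\lambda\Ib)^{-1}\|_2\le(n\lambda)^{-1}$, $\|\yb\|_2\le\sqrt n\,C_y$), the same $\sqrt n$ cancellation, and give the same constant. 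Your closing remark is also correct: the singular-value bound $s/(s^2+n\lambda)\le(2\sqrt{n\lambda})^{-1}$ would sharpen the $\lambda^{-1}$ factor to order $\lambda^{-1/2}$, though this is not needed for the stated bound.
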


\begin{proof}
By definition,
\[
\widehat f_{\lambda,N}^{(N_t)}(\mathbb{P})-\widehat f_{\lambda,N}(\mathbb{P})
=
\left(
\vb_N(\widehat{\mathbb{P}}^{N_t})-\vb_N(\mathbb{P})
\right)^\top
(\Kb_N+n\lambda\Ib)^{-1}\yb.
\]
Thus,
\[
|\widehat f_{\lambda,N}^{(N_t)}(\mathbb{P})-\widehat f_{\lambda,N}(\mathbb{P})|
\le
\|\vb_N(\widehat{\mathbb{P}}^{N_t})-\vb_N(\mathbb{P})\|_2
\|(\Kb_N+n\lambda\Ib)^{-1}\|_2
\|\yb\|_2.
\]
We have
\[
\|(\Kb_N+n\lambda\Ib)^{-1}\|_2\le \frac{1}{n\lambda},
\qquad
\|\yb\|_2\le \sqrt n C_y.
\]
For each $i$,
\[
K_N(\mathbb{P}_i,\widehat{\mathbb{P}}^{N_t})
-
K_N(\mathbb{P}_i,\mathbb{P})
=
\langle \widehat\mu_i,\widehat\mu_{\mathbb{P}}^{N_t}-\mu_{\mathbb{P}}\rangle_{\mathcal{H}_k}.
\]
Since $\|\widehat\mu_i\|_{\mathcal{H}_k}\le \kappa$, Lemma~\ref{lem:fresh_test_embedding_app} implies
\[
|K_N(\mathbb{P}_i,\widehat{\mathbb{P}}^{N_t})
-
K_N(\mathbb{P}_i,\mathbb{P})|
\le
C_\mu\kappa^2
\sqrt{\frac{\log(2/\delta)}{N_t}}.
\]
Therefore,
\[
\|\vb_N(\widehat{\mathbb{P}}^{N_t})-\vb_N(\mathbb{P})\|_2
\le
\sqrt n\,C_\mu\kappa^2
\sqrt{\frac{\log(2/\delta)}{N_t}}.
\]
Substitution gives \eqref{eq:deployment_exact_bound_app}.
\end{proof}

\begin{lemma}
\label{lem:fresh_test_rf_embedding_app}
For any fixed test distribution $\mathbb{P}$, any set of random features $(\omegab_r,b_r)_{r=1}^D$, and any $\delta\in(0,1)$, there exists a universal constant $C_\phi>0$ such that
\[
\mathbb{P}\left(
\|\widehat{\mub}_{\mathbb{P},D}^{N_t}-\mub_{\mathbb{P},D}\|_2
\le
C_\phi
\sqrt{\frac{\log(2/\delta)}{N_t}}
\right)
\ge
1-\delta.
\]
\end{lemma}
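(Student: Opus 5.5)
The plan is to mirror the proof of Lemma~\ref{lem:mean_concentration_app} in the Euclidean space $\mathbb{R}^D$. Conditional on the realized random features $(\omegab_r,b_r)_{r=1}^D$, which are independent of the test bag, the vectors $\phib_D(\widetilde{\xb}_1),\ldots,\phib_D(\widetilde{\xb}_{N_t})$ are i.i.d.\ with mean $\mub_{\mathbb{P},D}$. Each satisfies the deterministic bound $\|\phib_D(\xb)\|_2\le\sqrt{2}$, since every coordinate has modulus at most $\sqrt{2/D}$ and there are $D$ of them. Hence $\|\mub_{\mathbb{P},D}\|_2\le\sqrt2$, and each centered summand $\phib_D(\widetilde{\xb}_r)-\mub_{\mathbb{P},D}$ has norm at most $2\sqrt2$. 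This bound does not depend on $D$ or on the features, which is why the constant will be universal.

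Set $F=\|\widehat{\mub}_{\mathbb{P},D}^{N_t}-\mub_{\mathbb{P},D}\|_2$.

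\textbf{Step 1 (expectation).} Independence and centering give
\[
\mathbb{E}F^2=\frac{1}{N_t}\,\mathbb{E}\|\phib_D(\widetilde{\xb})-\mub_{\mathbb{P},D}\|_2^2\le\frac{1}{N_t}\,\mathbb{E}\|\phib_D(\widetilde{\xb})\|_2^2\le\frac{2}{N_t}.
\]
By Jensen's inequality, $\mathbb{E}F\le\sqrt{2/N_t}$.

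\textbf{Step 2 (bounded differences).} Replacing a single $\widetilde{\xb}_r$ changes the empirical average by a vector of norm at most $2\sqrt2/N_t$. By the triangle inequality, it therefore changes $F$ by at most $2\sqrt2/N_t$. McDiarmid's inequality then yields, with probability at least $1-\delta$,
\[
F\le\sqrt{\frac{2}{N_t}}+2\sqrt{\frac{\log(1/\delta)}{N_t}}.
\]

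\textbf{Step 3 (constant).} Since $\log(2/\delta)\ge\log 2$ and $\log(2/\delta)\ge\log(1/\delta)$, both terms are dominated by multiples of $\sqrt{\log(2/\delta)/N_t}$. One valid choice is $C_\phi=\sqrt{2/\log 2}+2$, and any larger numerical constant such as $C_\phi=4$ also works. The resulting bound is conditional on the features, and since it holds for every fixed realization it also holds unconditionally.

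The main point requiring care, rather than a real obstacle, is that the constant must be independent of $D$. This independence comes from the $\sqrt{2/D}$ normalization in \eqref{eq:rf_map_sec3}, which makes $\|\phib_D\|_2\le\sqrt2$ uniformly. The other point is to apply McDiarmid conditionally on the features, so that only the test-bag randomness is involved. Alternatively, one could invoke a vector-valued Hoeffding or Pinelis inequality in $\mathbb{R}^D$ and obtain the same order.
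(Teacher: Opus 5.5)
Your proof is correct and follows the same route as the paper. You condition on the test distribution and the realized features, use $\|\phib_D(\xb)\|_2\le\sqrt2$, and rerun the expectation-plus-bounded-differences argument of Lemma~\ref{lem:mean_concentration_app} with $\kappa$ replaced by $\sqrt2$; your explicit constant $\sqrt{2/\log 2}+2$ is slightly sharper than the paper's choice $C_\phi=\sqrt2\,C_\mu=3\sqrt2$.
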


\begin{proof}
Condition on $\mathbb{P}$ and on the realized random features. The random-bias cosine map satisfies
\[
\|\phib_D(\xb)\|_2^2
=
\frac{2}{D}\sum_{r=1}^D
\cos^2(\omegab_r^\top\xb+b_r)
\le
2.
\]
Thus,
\[
\|\phib_D(\widetilde{\xb}_r)-\mub_{\mathbb{P},D}\|_2
\le
2\sqrt 2.
\]
The expectation and bounded-differences argument in Lemma~\ref{lem:mean_concentration_app}, with feature-norm bound $\sqrt2$ and sample size $N_t$, gives the result. One may take $C_\phi=\sqrt2 C_\mu$.
\end{proof}

\begin{proposition}
\label{prop:deployment_rf_app}
Under Assumption~\ref{ass:RaDR_bounded}, fix the training sample, the realized random features, and a test distribution $\mathbb{P}$. Then, with probability at least $1-\delta$ over the test bag,
\begin{equation}
\label{eq:deployment_rf_bound_app}
    |\widehat f_{\lambda,N,D}^{(N_t)}(\mathbb{P})-\widehat f_{\lambda,N,D}(\mathbb{P})|
    \le
    \frac{\sqrt2 C_\phi C_y}{\lambda}
    \sqrt{\frac{\log(2/\delta)}{N_t}}.
\end{equation}
\end{proposition}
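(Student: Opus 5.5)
The argument mirrors Proposition~\ref{prop:deployment_exact_app}, with the RKHS embeddings replaced by the finite-dimensional random feature embeddings. We condition throughout on the training sample and the realized features $(\omegab_r,b_r)_{r=1}^D$. Then $\widehat{\Mb}_D$, $\widehat{\Kb}_{D,N}$ and $\yb$ are all fixed, and the only randomness left is the test bag.

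First, we write the difference of the two predictors through the dual form~\eqref{eq:RaDR_dual_sec3}. Only the test vector changes, so
\[
\widehat f_{\lambda,N,D}^{(N_t)}(\mathbb{P})-\widehat f_{\lambda,N,D}(\mathbb{P})
=
\bigl(\widehat{\vb}_{D,N}(\widehat{\mathbb{P}}^{N_t})-\widehat{\vb}_{D,N}(\mathbb{P})\bigr)^\top
(\widehat{\Kb}_{D,N}+n\lambda\Ib_n)^{-1}\yb .
\]
Cauchy--Schwarz then bounds this by the product of three factors:
\begin{itemize}
\item the norm of the test-vector difference;
\item $\|(\widehat{\Kb}_{D,N}+n\lambda\Ib_n)^{-1}\|_2\le 1/(n\lambda)$, by Lemma~\ref{lem:resolvent_app}, since $\widehat{\Kb}_{D,N}=\widehat{\Mb}_D\widehat{\Mb}_D^\top$ is positive semidefinite;
\item $\|\yb\|_2\le\sqrt n C_y$, by Assumption~\ref{ass:RaDR_bounded}.
\end{itemize}

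Second, we bound the test-vector difference entrywise. Its $i$th component is $\langle \widehat{\mub}_{i,D},\widehat{\mub}_{\mathbb{P},D}^{N_t}-\mub_{\mathbb{P},D}\rangle_2$. The training embedding $\widehat{\mub}_{i,D}$ is an average of vectors $\phib_D(\xb_{ij})$, each with norm at most $\sqrt2$, so $\|\widehat{\mub}_{i,D}\|_2\le\sqrt2$. Lemma~\ref{lem:fresh_test_rf_embedding_app} gives a single high-probability event on which $\|\widehat{\mub}_{\mathbb{P},D}^{N_t}-\mub_{\mathbb{P},D}\|_2\le C_\phi\sqrt{\log(2/\delta)/N_t}$. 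This event concerns the test embedding only and does not depend on $i$, so no union bound over the $n$ training bags is needed. On it, every component is at most $\sqrt2\,C_\phi\sqrt{\log(2/\delta)/N_t}$, and the full vector has Euclidean norm at most $\sqrt n$ times that.

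Multiplying the three factors gives
\[
\sqrt n\cdot\sqrt2\,C_\phi\sqrt{\log(2/\delta)/N_t}\cdot\frac{1}{n\lambda}\cdot\sqrt n\,C_y
=\frac{\sqrt2\,C_\phi C_y}{\lambda}\sqrt{\frac{\log(2/\delta)}{N_t}},
\]
which is exactly~\eqref{eq:deployment_rf_bound_app}.

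The main point needing care is conditioning. Lemma~\ref{lem:fresh_test_rf_embedding_app} must be applied with the features held fixed. This is valid because, conditional on $\mathbb{P}$, the test bag is independent of both the training data and the features. Conditioning on the features is also what makes the deterministic norm bound $\sqrt2$ and the resolvent bound available.
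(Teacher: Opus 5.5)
Your proposal is correct and follows the paper's proof essentially step for step: the same dual-form difference isolating the test vector, the same three-factor Cauchy--Schwarz bound using $\|(\widehat{\Kb}_{D,N}+n\lambda\Ib_n)^{-1}\|_2\le 1/(n\lambda)$ and $\|\yb\|_2\le\sqrt n C_y$, and the same entrywise bound via $\|\widehat{\mub}_{i,D}\|_2\le\sqrt2$ and Lemma~\ref{lem:fresh_test_rf_embedding_app}. Your remarks on conditioning on the realized features and on not needing a union bound over $i$ are accurate; they make explicit what the paper uses implicitly.
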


\begin{proof}
By definition,
\[
\widehat f_{\lambda,N,D}^{(N_t)}(\mathbb{P})-\widehat f_{\lambda,N,D}(\mathbb{P})
=
\left(
\widehat{\vb}_{D,N}(\widehat{\mathbb{P}}^{N_t})
-
\widehat{\vb}_{D,N}(\mathbb{P})
\right)^\top
(\widehat{\Kb}_{D,N}+n\lambda\Ib)^{-1}\yb.
\]
Hence,
\[
|\widehat f_{\lambda,N,D}^{(N_t)}(\mathbb{P})-\widehat f_{\lambda,N,D}(\mathbb{P})|
\le
\|\widehat{\vb}_{D,N}(\widehat{\mathbb{P}}^{N_t})-\widehat{\vb}_{D,N}(\mathbb{P})\|_2
\frac{1}{n\lambda}
\sqrt n C_y.
\]
For each $i$,
\[
K_{D,N}(\mathbb{P}_i,\widehat{\mathbb{P}}^{N_t})
-
K_{D,N}(\mathbb{P}_i,\mathbb{P})
=
\langle \widehat{\mub}_{i,D},
\widehat{\mub}_{\mathbb{P},D}^{N_t}-\mub_{\mathbb{P},D}
\rangle_2.
\]
Since $\|\widehat{\mub}_{i,D}\|_2\le \sqrt 2$, Lemma~\ref{lem:fresh_test_rf_embedding_app} implies
\[
|K_{D,N}(\mathbb{P}_i,\widehat{\mathbb{P}}^{N_t})
-
K_{D,N}(\mathbb{P}_i,\mathbb{P})|
\le
\sqrt 2 C_\phi
\sqrt{\frac{\log(2/\delta)}{N_t}}.
\]
Therefore,
\[
\|\widehat{\vb}_{D,N}(\widehat{\mathbb{P}}^{N_t})-\widehat{\vb}_{D,N}(\mathbb{P})\|_2
\le
\sqrt{2n}C_\phi
\sqrt{\frac{\log(2/\delta)}{N_t}}.
\]
Substitution proves \eqref{eq:deployment_rf_bound_app}.
\end{proof}

\begin{remark}
The population excess-risk theorems measure generalization on true test distributions. The deployment perturbation bounds measure the additional pointwise error caused by replacing a true test distribution by a fresh empirical test bag. The latter contributes a separate $N_t^{-1/2}$ term with only one factor of $\lambda^{-1}$.
\end{remark}

\subsection{Proof of the no-loss result (Corollary~\ref{cor:RaDR_D_threshold_app})}
\label{app:RaDR_noloss_proofs}

\begin{proof}
Work under the assumptions of Theorem~\ref{thm:RaDR_excess_main}
and fix $\delta\in(0,1)$. Let $N=N_n\to\infty$,
$N_n\le n^p$, and $D_n\asymp N_n$.
Theorem~\ref{thm:uniform_rf_app} gives, for sufficiently large $n$,
\[
\epsilon_{D_n}(\delta/3)^2
\le C_\delta\frac{\log(eD_n)}{D_n}.
\]
Because $D_n\asymp N_n$ and $N_n\le n^p$, there are constants
$c_1,c_2>0$ such that, for all sufficiently large $n$,
\[
D_n\ge c_1N_n,
\qquad
\log(eD_n)\le c_2\log n.
\]
Consequently,
\[
\epsilon_{D_n}(\delta/3)^2
\le C\frac{\log(3n/\delta)}{N_n}.
\]

Define
\[
G_n=\frac{\log(3n/\delta)}{N_n}
\left(\lambda_n^{-2}+\lambda_n^{-4}\right).
\]
Theorem~\ref{thm:RaDR_excess_main} therefore implies
\[
\mathcal E(\widehat f_{\lambda_n,N_n,D_n},f_\rho)
\le 4B_{\mathrm{1s}}(n,\lambda_n,\delta/3)+CG_n
\]
with probability at least $1-\delta$.
Here $\lambda_n$ must satisfy the chosen baseline conditions
at confidence level $\delta/3$.

For comparison, applying
Theorem~\ref{thm:exact_twostage_main} with failure probability
$2\delta/3$ yields
\[
\mathcal E(\widehat f_{\lambda_n,N_n},f_\rho)
\le 2B_{\mathrm{1s}}(n,\lambda_n,\delta/3)+C_2G_n
\]
with probability at least $1-2\delta/3$.
Thus the two guarantees have the same upper-bound order under
a common probability. Both estimates can also be
controlled simultaneously on the three-event intersection
used in the proof of Theorem~\ref{thm:RaDR_excess_main}.

For computation, use direct feature construction and dense
linear algebra, with fixed instance dimension and fixed kernel
parameters. Choosing the less expensive ridge formulation gives
\[
T_{\mathrm{RaDR}}
=O\!\left(nND+\min\{nD^2+D^3,\;n^2D+n^3\}\right).
\]
When $N_n\le n$, the primal formulation and $D_n\asymp N_n$
give
\[
T_{\mathrm{RaDR}}
\le O(nN_n^2+N_n^3)=O(nN_n^2).
\]
When $N_n\ge n$, the dual formulation gives
\[
T_{\mathrm{RaDR}}
\le O(nN_n^2+n^2N_n+n^3)=O(nN_n^2).
\]
The conventional dense exact-kernel implementation has cost
\[
T_{\mathrm{exact}}^{\mathrm{dense}}
=\Theta(n^2N_n^2+n^3).
\]
Therefore,
\[
\frac{T_{\mathrm{RaDR}}}{T_{\mathrm{exact}}^{\mathrm{dense}}}
=O\!\left(\frac{nN_n^2}{n^2N_n^2+n^3}\right)
=O(n^{-1}).
\]
\end{proof}

\subsection{SVM perturbation details}
\label{app:RaDR_svm}

The two-stage analysis above is specific to kernel ridge regression with squared loss. For SVM, the loss is the hinge loss. Here, we derive self-contained objective-perturbation results for two-stage distribution classification and its random-feature approximation.

Throughout this subsection, assume binary labels $y_i\in\{-1,+1\}$ and define
\[
\ell_{\mathrm h}(u)=(1-u)_+.
\]
The hinge loss is convex and $1$-Lipschitz:
\[
|\ell_{\mathrm h}(u)-\ell_{\mathrm h}(v)|\le |u-v|.
\]

For $\wb\in\mathcal{H}_k$, define the exact empirical SVM objective based on the latent bag embeddings by
\[
\mathcal{J}_n(\wb)
=
\frac1n\sum_{i=1}^n
\ell_{\mathrm h}\!\left(
y_i\langle \wb,\mu_{\mathbb{P}_i}\rangle_{\mathcal{H}_k}
\right)
+
\frac{\lambda}{2}\|\wb\|_{\mathcal{H}_k}^2.
\]
The observable exact two-stage objective is
\[
\mathcal{J}_{n,N}(\wb)
=
\frac1n\sum_{i=1}^n
\ell_{\mathrm h}\!\left(
y_i\langle \wb,\widehat\mu_i\rangle_{\mathcal{H}_k}
\right)
+
\frac{\lambda}{2}\|\wb\|_{\mathcal{H}_k}^2.
\]
For the RF classifier, define
\[
\mathcal{J}_{n,D}(\betab)
=
\frac1n\sum_{i=1}^n
\ell_{\mathrm h}\!\left(
y_i\betab^\top\mub_{\mathbb{P}_i,D}
\right)
+
\frac{\lambda}{2}\|\betab\|_2^2,
\]
and
\[
\mathcal{J}_{n,N,D}(\betab)
=
\frac1n\sum_{i=1}^n
\ell_{\mathrm h}\!\left(
y_i\betab^\top\widehat{\mub}_{i,D}
\right)
+
\frac{\lambda}{2}\|\betab\|_2^2.
\]

\begin{lemma}
\label{lem:svm_norm_bounds_app}
Any minimizer of any one of the four objectives above satisfies the corresponding norm bound
\[
\|\wb\|_{\mathcal{H}_k}\le \sqrt{\frac2\lambda},
\qquad
\text{or}
\qquad
\|\betab\|_2\le \sqrt{\frac2\lambda}.
\]
\end{lemma}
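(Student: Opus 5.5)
The plan is the usual comparison of each minimizer with the zero candidate, $\wb=0$ (respectively $\betab=0$). The four objectives share one structure: an average of hinge losses evaluated at inner products, plus the penalty $\frac{\lambda}{2}\|\cdot\|^2$. So I will give the argument once for a generic objective
\[
\mathcal{J}(\wb)=\frac1n\sum_{i=1}^n \ell_{\mathrm h}\bigl(y_i\langle \wb,\mathbf{u}_i\rangle\bigr)+\frac{\lambda}{2}\|\wb\|^2 .
\]
Here $\mathbf{u}_i$ stands for $\mu_{\mathbb P_i}$, $\widehat\mu_i$, $\mub_{\mathbb P_i,D}$, or $\widehat{\mub}_{i,D}$, and the inner product and norm are those of $\mathcal H_k$ or of $\mathbb R^D$, as appropriate. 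No property of the $\mathbf{u}_i$ is needed, so no boundedness assumption enters.

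First, I evaluate the objective at zero. Since $\ell_{\mathrm h}(0)=(1-0)_+=1$, every loss term equals $1$ and the penalty vanishes, so $\mathcal J(0)=1$.

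Second, let $\wb^\star$ be any minimizer. Then $\mathcal J(\wb^\star)\le \mathcal J(0)=1$. The hinge loss is nonnegative, so discarding the loss terms gives
\[
\frac{\lambda}{2}\|\wb^\star\|^2\le \mathcal J(\wb^\star)\le 1,
\]
and therefore $\|\wb^\star\|\le\sqrt{2/\lambda}$. The same two lines apply verbatim to each of $\mathcal J_n$, $\mathcal J_{n,N}$, $\mathcal J_{n,D}$ and $\mathcal J_{n,N,D}$.

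I do not expect a real obstacle. The only points to check are that the bound is stated for any minimizer, so neither existence nor uniqueness is needed (though strong convexity yields both), and that in the RKHS case the argument does not rely on a representer theorem.
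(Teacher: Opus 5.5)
Your proposal is correct and matches the paper's proof. Both compare the minimizer with the zero vector, use $\mathcal{J}(0)=1$ because $\ell_{\mathrm h}(0)=1$, and drop the nonnegative hinge terms to get $\frac{\lambda}{2}\|\wb\|^2\le 1$; the paper writes this out for the RKHS objectives and notes that the Euclidean case is identical.
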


\begin{proof}
We prove the RKHS case as the Euclidean case is identical. Evaluating the objective at $\wb=0$ gives value $1$. By optimality of the minimizer $\widehat{\wb}$,
\[
\mathcal{J}(\widehat{\wb})\le 1.
\]
Since the hinge-loss term is nonnegative,
\[
\frac{\lambda}{2}\|\widehat{\wb}\|_{\mathcal{H}_k}^2\le 1.
\]
Therefore $\|\widehat{\wb}\|_{\mathcal{H}_k}\le \sqrt{2/\lambda}$.
\end{proof}

\begin{lemma}
\label{lem:svm_primal_obj_perturb_app}
Assume the event of Lemma~\ref{lem:mean_concentration_app}. Define
\[
\Delta_N(\delta)
=
C_\mu\kappa
\sqrt{\frac{\log(2n/\delta)}{N}}.
\]
Then for every $\wb\in\mathcal{H}_k$,
\[
|\mathcal{J}_{n,N}(\wb)-\mathcal{J}_n(\wb)|
\le
\|\wb\|_{\mathcal{H}_k}\Delta_N(\delta).
\]
Similarly, on the corresponding bounded-vector concentration event for empirical RF means, for every $\betab\in\mathbb{R}^D$,
\[
|\mathcal{J}_{n,N,D}(\betab)-\mathcal{J}_{n,D}(\betab)|
\le
\|\betab\|_2\Delta_N^{\mathrm{rf}}(\delta),
\]
where
\[
\Delta_N^{\mathrm{rf}}(\delta)
=
C_{\mu,\mathrm{rf}}
\sqrt{\frac{\log(2n/\delta)}{N}}.
\]
\end{lemma}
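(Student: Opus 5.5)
The plan is to compare the two objectives term by term, since they share the regularizer $\frac{\lambda}{2}\|\wb\|_{\mathcal{H}_k}^2$. Subtracting, the regularizers cancel, leaving
\[
\mathcal{J}_{n,N}(\wb)-\mathcal{J}_n(\wb)
=
\frac1n\sum_{i=1}^n\Bigl[
\ell_{\mathrm h}\bigl(y_i\langle\wb,\widehat\mu_i\rangle_{\mathcal{H}_k}\bigr)
-
\ell_{\mathrm h}\bigl(y_i\langle\wb,\mu_{\mathbb{P}_i}\rangle_{\mathcal{H}_k}\bigr)
\Bigr].
\]
I will apply the triangle inequality and the $1$-Lipschitz property of the hinge loss to each summand. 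Using $|y_i|=1$, this bounds the $i$th term by $|\langle\wb,\widehat\mu_i-\mu_{\mathbb{P}_i}\rangle_{\mathcal{H}_k}|$. Cauchy--Schwarz in $\mathcal{H}_k$ then gives at most $\|\wb\|_{\mathcal{H}_k}\|\widehat\mu_i-\mu_{\mathbb{P}_i}\|_{\mathcal{H}_k}$.

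On the event of Lemma~\ref{lem:mean_concentration_app}, the uniform bound \eqref{eq:uniform_bag_concentration_app} holds simultaneously for all $i$. Hence each summand is at most $\|\wb\|_{\mathcal{H}_k}\Delta_N(\delta)$, and averaging over $i$ yields the first claim. Since the bound is deterministic on that event, it holds for every $\wb$ at once, with no further union bound.

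For the RF part, I repeat the argument with $\betab^\top\widehat{\mub}_{i,D}$ and $\betab^\top\mub_{\mathbb{P}_i,D}$, using Euclidean Cauchy--Schwarz. What remains is to supply the concentration event $\max_i\|\widehat{\mub}_{i,D}-\mub_{\mathbb{P}_i,D}\|_2\le C_{\mu,\mathrm{rf}}\sqrt{\log(2n/\delta)/N}$. I will obtain it in three steps:
\begin{enumerate}
\item Condition on the realized random features.
\item Use $\|\phib_D(\xb)\|_2\le\sqrt2$, as in the proof of Lemma~\ref{lem:fresh_test_rf_embedding_app}.
\item Run the expectation-plus-bounded-differences argument of Lemma~\ref{lem:mean_concentration_app} with $\kappa$ replaced by $\sqrt2$, at failure level $\delta/n$ per bag, followed by a union bound over $i$.
\end{enumerate}
This allows $C_{\mu,\mathrm{rf}}=\sqrt2C_\mu$.

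The only step requiring care is justifying this conditioning. The random features are independent of the training data, so conditionally on them the within-bag samples are still i.i.d.\ from $\mathbb{P}_i$, and the bounded-differences inequality applies. Because the resulting bound does not depend on the realized features, it also holds unconditionally. Everything else is routine Lipschitz and Cauchy--Schwarz bookkeeping.
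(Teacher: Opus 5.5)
Your proposal is correct and follows essentially the same route as the paper: cancel the regularizers, use the $1$-Lipschitz hinge loss with $|y_i|=1$ and Cauchy--Schwarz, then invoke the uniform bag concentration bound, with the RF case handled identically via $\|\phib_D(\xb)\|_2\le\sqrt2$. The paper only says the RF case ``follows in the same way.'' Your explicit construction of the RF concentration event fills in that step: conditioning on the features, bounded differences with $\kappa$ replaced by $\sqrt2$, and a union bound over bags, giving $C_{\mu,\mathrm{rf}}=\sqrt2\,C_\mu$.
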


\begin{proof}
Since $\ell_{\mathrm h}$ is $1$-Lipschitz,
\[
\begin{aligned}
|\mathcal{J}_{n,N}(\wb)-\mathcal{J}_n(\wb)|
&\le
\frac1n\sum_{i=1}^n
\left|
\ell_{\mathrm h}\!\left(y_i\langle\wb,\widehat\mu_i\rangle_{\mathcal{H}_k}\right)
-
\ell_{\mathrm h}\!\left(y_i\langle\wb,\mu_{\mathbb{P}_i}\rangle_{\mathcal{H}_k}\right)
\right| \\
&\le
\frac1n\sum_{i=1}^n
|\langle\wb,\widehat\mu_i-\mu_{\mathbb{P}_i}\rangle_{\mathcal{H}_k}| \\
&\le
\|\wb\|_{\mathcal{H}_k}
\max_{1\le i\le n}
\|\widehat\mu_i-\mu_{\mathbb{P}_i}\|_{\mathcal{H}_k}.
\end{aligned}
\]
Applying Lemma~\ref{lem:mean_concentration_app} gives the claim. The RF statement follows in the same way, using $\|\phib_D(\xb)\|_2\le \sqrt2$.
\end{proof}

\begin{theorem}
\label{thm:svm_twostage_primal_app}
Let $\widehat{\wb}_{\lambda,n}$ and $\widehat{\wb}_{\lambda,n,N}$ be minimizers of $\mathcal{J}_n$ and $\mathcal{J}_{n,N}$, respectively. On the event of Lemma~\ref{lem:svm_primal_obj_perturb_app}, let
\[
B_\lambda=\sqrt{\frac2\lambda}.
\]
Then
\[
0
\le
\mathcal{J}_n(\widehat{\wb}_{\lambda,n,N})
-
\mathcal{J}_n(\widehat{\wb}_{\lambda,n})
\le
2B_\lambda\Delta_N(\delta).
\]
Moreover,
\[
\|\widehat{\wb}_{\lambda,n,N}-\widehat{\wb}_{\lambda,n}\|_{\mathcal{H}_k}
\le
2\sqrt{\frac{B_\lambda\Delta_N(\delta)}{\lambda}}.
\]
Consequently, for every test distribution $\mathbb{P}$,
\[
|\langle \widehat{\wb}_{\lambda,n,N}-\widehat{\wb}_{\lambda,n},
\mu_{\mathbb{P}}\rangle_{\mathcal{H}_k}|
\le
2\kappa
\sqrt{\frac{B_\lambda\Delta_N(\delta)}{\lambda}}.
\]
\end{theorem}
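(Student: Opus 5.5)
The plan is the standard objective-perturbation argument for strongly convex problems. Write $\widehat{\wb}=\widehat{\wb}_{\lambda,n}$ and $\widetilde{\wb}=\widehat{\wb}_{\lambda,n,N}$. Lemma~\ref{lem:svm_norm_bounds_app} gives $\|\widehat{\wb}\|_{\mathcal H_k},\|\widetilde{\wb}\|_{\mathcal H_k}\le B_\lambda$. Lemma~\ref{lem:svm_primal_obj_perturb_app} gives $|\mathcal J_{n,N}(\wb)-\mathcal J_n(\wb)|\le \|\wb\|_{\mathcal H_k}\Delta_N(\delta)$ for every $\wb$. The lower bound $0\le \mathcal J_n(\widetilde{\wb})-\mathcal J_n(\widehat{\wb})$ is immediate from the optimality of $\widehat{\wb}$ for $\mathcal J_n$. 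For the upper bound I chain three comparisons:
\[
\mathcal J_n(\widetilde{\wb})
\le \mathcal J_{n,N}(\widetilde{\wb})+B_\lambda\Delta_N(\delta)
\le \mathcal J_{n,N}(\widehat{\wb})+B_\lambda\Delta_N(\delta)
\le \mathcal J_n(\widehat{\wb})+2B_\lambda\Delta_N(\delta).
\]
The first and third inequalities use the perturbation lemma together with the norm bounds. The middle inequality uses the optimality of $\widetilde{\wb}$ for $\mathcal J_{n,N}$.

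For the parameter bound I use strong convexity. The empirical hinge term is convex, and $\frac{\lambda}{2}\|\wb\|^2$ is $\lambda$-strongly convex, so $\mathcal J_n$ is $\lambda$-strongly convex on $\mathcal H_k$. At its minimizer $\widehat{\wb}$, the subdifferential contains $0$. The strong-convexity inequality with that zero subgradient then gives the quadratic growth condition
\[
\mathcal J_n(\wb)-\mathcal J_n(\widehat{\wb})\ge \frac{\lambda}{2}\|\wb-\widehat{\wb}\|_{\mathcal H_k}^2 .
\]
Applying this at $\wb=\widetilde{\wb}$ and combining it with the objective gap yields
\[
\|\widetilde{\wb}-\widehat{\wb}\|_{\mathcal H_k}^2\le \frac{4B_\lambda\Delta_N(\delta)}{\lambda},
\]
which is the stated bound after taking square roots. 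This is the one step that needs care: I must justify quadratic growth for the nonsmooth objective through subgradient optimality rather than a Taylor expansion.

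The prediction bound then follows from Cauchy--Schwarz together with $\|\mu_{\mathbb P}\|_{\mathcal H_k}\le\kappa$, where the embedding bound uses Assumption~\ref{ass:RaDR_bounded}.
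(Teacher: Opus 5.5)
Your proposal is correct and matches the paper's proof step for step. It uses the same three-link chain through $\mathcal{J}_{n,N}$, based on the norm bounds of Lemma~\ref{lem:svm_norm_bounds_app} and the perturbation bound of Lemma~\ref{lem:svm_primal_obj_perturb_app}, followed by $\lambda$-strong convexity (quadratic growth at the minimizer) and Cauchy--Schwarz with $\|\mu_{\mathbb{P}}\|_{\mathcal{H}_k}\le\kappa$; your subgradient justification of quadratic growth for the nonsmooth objective is a small detail the paper simply asserts.
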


\begin{proof}
By Lemma~\ref{lem:svm_norm_bounds_app}, both minimizers lie in the norm ball of radius $B_\lambda$. Therefore Lemma~\ref{lem:svm_primal_obj_perturb_app} gives
\[
\mathcal{J}_n(\widehat{\wb}_{\lambda,n,N})
\le
\mathcal{J}_{n,N}(\widehat{\wb}_{\lambda,n,N})
+
B_\lambda\Delta_N(\delta).
\]
By optimality of $\widehat{\wb}_{\lambda,n,N}$ for $\mathcal{J}_{n,N}$,
\[
\mathcal{J}_{n,N}(\widehat{\wb}_{\lambda,n,N})
\le
\mathcal{J}_{n,N}(\widehat{\wb}_{\lambda,n}).
\]
Applying Lemma~\ref{lem:svm_primal_obj_perturb_app} again,
\[
\mathcal{J}_{n,N}(\widehat{\wb}_{\lambda,n})
\le
\mathcal{J}_n(\widehat{\wb}_{\lambda,n})
+
B_\lambda\Delta_N(\delta).
\]
Combining the three inequalities gives
\[
\mathcal{J}_n(\widehat{\wb}_{\lambda,n,N})
-
\mathcal{J}_n(\widehat{\wb}_{\lambda,n})
\le
2B_\lambda\Delta_N(\delta).
\]
The lower bound follows because $\widehat{\wb}_{\lambda,n}$ minimizes $\mathcal{J}_n$.

Next, $\mathcal{J}_n$ is $\lambda$-strongly convex because of the quadratic regularizer. Hence,
\[
\mathcal{J}_n(\widehat{\wb}_{\lambda,n,N})
\ge
\mathcal{J}_n(\widehat{\wb}_{\lambda,n})
+
\frac{\lambda}{2}
\|\widehat{\wb}_{\lambda,n,N}-\widehat{\wb}_{\lambda,n}\|_{\mathcal{H}_k}^2.
\]
Combining this with the objective gap bound gives the parameter perturbation bound. The score perturbation follows by Cauchy--Schwarz and $\|\mu_{\mathbb{P}}\|_{\mathcal{H}_k}\le\kappa$.
\end{proof}

\begin{lemma}
\label{lem:svm_dual_uniform_app}
Assume the event of Lemma~\ref{lem:rf_bag_empirical_app}. For
\[
\mathcal{A}
=
\{\alphab\in\mathbb{R}^n:0\le \alpha_i\le 1/n\},
\]
define
\[
\mathcal{D}_N(\alphab)
=
\sum_{i=1}^n\alpha_i
-
\frac{1}{2\lambda}
\alphab^\top\Yb\Kb_N\Yb\alphab,
\]
and
\[
\mathcal{D}_{N,D}(\alphab)
=
\sum_{i=1}^n\alpha_i
-
\frac{1}{2\lambda}
\alphab^\top\Yb\widehat{\Kb}_{D,N}\Yb\alphab.
\]
Then for every $\alphab\in\mathcal{A}$,
\[
|\mathcal{D}_{N,D}(\alphab)-\mathcal{D}_N(\alphab)|
\le
\frac{\epsilon_D(\delta)}{2\lambda}.
\]
\end{lemma}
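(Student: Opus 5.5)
The linear terms $\sum_i\alpha_i$ are identical in $\mathcal{D}_{N,D}$ and $\mathcal{D}_N$, so they cancel. The difference is therefore a single quadratic form:
\[
\mathcal{D}_{N,D}(\alphab)-\mathcal{D}_N(\alphab)
=
-\frac{1}{2\lambda}\,\alphab^\top\Yb\bigl(\widehat{\Kb}_{D,N}-\Kb_N\bigr)\Yb\alphab .
\]
The plan is to bound this form entrywise. The entrywise route suits the box constraint $\mathcal{A}$ better than the spectral-norm bound of Lemma~\ref{lem:gram_perturb_rf_app}.

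Write $\Eb$ informally for $\widehat{\Kb}_{D,N}-\Kb_N$ and set $u_i=y_i\alpha_i$. Since $y_i\in\{-1,+1\}$, we have $|u_i|=\alpha_i$. Then
\[
\bigl|\alphab^\top\Yb(\widehat{\Kb}_{D,N}-\Kb_N)\Yb\alphab\bigr|
\le \sum_{i,j}|u_i||u_j|\,\bigl|[\widehat{\Kb}_{D,N}-\Kb_N]_{ij}\bigr|
\le \epsilon_D(\delta)\Bigl(\sum_{i=1}^n\alpha_i\Bigr)^2 .
\]
The entrywise bound $\epsilon_D(\delta)$ comes directly from Lemma~\ref{lem:rf_bag_empirical_app}, since we are on its event.

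For every $\alphab\in\mathcal{A}$ we have $0\le\alpha_i\le 1/n$, so $\sum_i\alpha_i\le 1$ and the squared sum is at most $1$. Multiplying by $1/(2\lambda)$ gives the claimed uniform bound $\epsilon_D(\delta)/(2\lambda)$.

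There is no real obstacle here. The one point needing care is the choice of bound: going through the spectral norm would give $n\epsilon_D\|\alphab\|_2^2\le\epsilon_D$, which happens to give the same bound, but the $\ell_1$ route is cleaner. Proposition~\ref{prop:svm_dual_sec3} then follows immediately, because $|\sup f-\sup g|\le\sup|f-g|$ transfers the bound to the optimal dual values.
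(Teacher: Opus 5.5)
Your proof is correct and matches the paper's argument essentially line for line: you cancel the linear terms, bound $\bigl|\alphab^\top\Yb(\widehat{\Kb}_{D,N}-\Kb_N)\Yb\alphab\bigr|$ entrywise by $\epsilon_D(\delta)\bigl(\sum_i\alpha_i\bigr)^2$ using the event of Lemma~\ref{lem:rf_bag_empirical_app}, and then use $\sum_i\alpha_i\le 1$ on $\mathcal{A}$. Your side remarks are also correct: the spectral-norm route gives the same $\epsilon_D(\delta)$ because $\|\alphab\|_2^2\le 1/n$, and your $|\sup f-\sup g|\le\sup|f-g|$ transfer is a compact form of the paper's two-sided comparison in the proof of Proposition~\ref{prop:svm_dual_sec3}.
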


\begin{proof}
For feasible $\alphab$,
\[
\begin{aligned}
|\mathcal{D}_{N,D}(\alphab)-\mathcal{D}_N(\alphab)|
&=
\frac{1}{2\lambda}
\left|
\alphab^\top
\Yb(\widehat{\Kb}_{D,N}-\Kb_N)\Yb
\alphab
\right| \\
&\le
\frac{1}{2\lambda}
\sum_{i,j=1}^n
\alpha_i\alpha_j
|[\widehat{\Kb}_{D,N}-\Kb_N]_{ij}| \\
&\le
\frac{\epsilon_D(\delta)}{2\lambda}
\left(\sum_{i=1}^n\alpha_i\right)^2.
\end{aligned}
\]
Because $0\le\alpha_i\le 1/n$, we have $\sum_{i=1}^n\alpha_i\le1$. Therefore
\[
|\mathcal{D}_{N,D}(\alphab)-\mathcal{D}_N(\alphab)|
\le
\frac{\epsilon_D(\delta)}{2\lambda}.
\]
\end{proof}

We now proceed to prove Proposition~\ref{prop:svm_dual_sec3}.

\begin{proof}
The entrywise argument in Lemma~\ref{lem:svm_dual_uniform_app} is deterministic and applies directly with the approximation radius $\epsilon_D$ in the proposition, proving its first claim. Let $\alphab_N^\star$ and $\alphab_{N,D}^\star$ be maximizers of $\mathcal{D}_N$ and $\mathcal{D}_{N,D}$ over $\mathcal{A}$. Because $\alphab_N^\star$ maximizes $\mathcal{D}_N$,
\[
\mathcal{D}_N(\alphab_N^\star)
\ge
\mathcal{D}_N(\alphab_{N,D}^\star).
\]
Using Lemma~\ref{lem:svm_dual_uniform_app} at $\alphab_{N,D}^\star$ gives
\[
\mathcal{D}_N(\alphab_{N,D}^\star)
\ge
\mathcal{D}_{N,D}(\alphab_{N,D}^\star)
-
\frac{\epsilon_D}{2\lambda}.
\]
Hence,
\[
\mathcal{D}_N(\alphab_N^\star)
-
\mathcal{D}_{N,D}(\alphab_{N,D}^\star)
\ge
-
\frac{\epsilon_D}{2\lambda}.
\]
Repeating the same argument with the roles of $\mathcal{D}_N$ and $\mathcal{D}_{N,D}$ interchanged gives
\[
\mathcal{D}_N(\alphab_N^\star)
-
\mathcal{D}_{N,D}(\alphab_{N,D}^\star)
\le
\frac{\epsilon_D}{2\lambda}.
\]
Therefore the optimal dual objective values differ by at most $\epsilon_D/(2\lambda)$. By strong duality for soft-margin SVM with quadratic regularization, the same objective-value perturbation holds for the corresponding primal optima.
\end{proof}

\subsection{Proof of Proposition~\ref{prop:adaptive_trace_selection}}
\label{app:adaptive_trace_selection}

\begin{proof}
We first condition on the training data and a realized candidate
pool, so that the feature vectors and score matrix are fixed.
Substituting \eqref{eq:adaptive_selected_kernel} into the trace
objective gives
\[
\begin{aligned}
    \mathcal J_B(\widehat{\Kb}_S)
    &=
    \mathrm{tr}\!\left(
        \frac{1}{D}\sum_{m\in S}
        \widehat{\zb}_m\widehat{\zb}_m^\top\Bb
    \right)\\
    &=
    \frac{1}{D}\sum_{m\in S}
    \mathrm{tr}
    \big(\widehat{\zb}_m\widehat{\zb}_m^\top\Bb\big)\\
    &=
    \frac{1}{D}\sum_{m\in S}
    \widehat{\zb}_m^\top\Bb\widehat{\zb}_m
    =
    \frac{1}{D}\sum_{m\in S}q_{B,m}.
\end{aligned}
\]
This proves \eqref{eq:adaptive_selected_trace}.
Since $\Bb$ is held fixed across subsets, maximizing the
objective amounts to selecting the $D$ largest scores.
Indeed, if a selected score is smaller than an unselected one,
exchanging the two increases the objective. Repeating this
exchange produces a top-$D$ subset.

The first $D$ candidates form a feasible subset, and their
Gram matrix is
\[
    \widehat{\Kb}_{D,N}
    =
    \frac{1}{D}\sum_{m=1}^{D}
    \widehat{\zb}_m\widehat{\zb}_m^\top.
\]
The maximizing property of $S_B$ therefore gives
\[
    \mathcal J_B(\widehat{\Kb}_{S_B})
    \ge
    \mathcal J_B(\widehat{\Kb}_{D,N})
\]
for every realized pool, including when $\Bb$ is constructed
from that pool.

For the expectation statement, we further assume that $\Bb$
is fixed conditional on the training data and independent of
the candidate pool. To identify the expected Gram matrix,
consider any pair of bags $i$ and $j$. By
\eqref{eq:adaptive_empirical_scalar_feature},
\[
\begin{aligned}
    &\mathbb E_{\eta\sim\pi}
    [\widehat z_i(\eta)\widehat z_j(\eta)]\\
    &\quad=
    \frac{1}{N_iN_j}
    \sum_{a=1}^{N_i}\sum_{c=1}^{N_j}
    \mathbb E_{\omegab,b}
    \left[
        2\cos(\omegab^\top\xb_{ia}+b)
        \cos(\omegab^\top\xb_{jc}+b)
    \right]\\
    &\quad=
    \frac{1}{N_iN_j}
    \sum_{a=1}^{N_i}\sum_{c=1}^{N_j}
    k(\xb_{ia},\xb_{jc})\\
    &\quad=
    \langle\hat\mu_i,\hat\mu_j\rangle_{\mathcal H_k}
    =
    [\Kb_N]_{ij}.
\end{aligned}
\]
Here the second equality follows by averaging over the uniform
phase and then the spectral measure, while the final equality
uses the linear distribution kernel.
Since the candidates are sampled independently from $\pi$,
we obtain
\[
    \mathbb E_{\mathrm{pool}}
    [\widehat{\Kb}_{D,N}]
    =
    \Kb_N.
\]

The score matrix can now be taken outside the conditional
expectation, giving
\[
\begin{aligned}
    \mathbb E_{\mathrm{pool}}
    [\mathcal J_B(\widehat{\Kb}_{D,N})]
    &=
    \mathrm{tr}\!\left(
        \mathbb E_{\mathrm{pool}}
        [\widehat{\Kb}_{D,N}]\Bb
    \right)\\
    &=
    \mathrm{tr}(\Kb_N\Bb)
    =
    \mathcal J_B(\Kb_N).
\end{aligned}
\]
Finally, taking expectations of the realized-pool inequality
yields
\[
    \mathbb E_{\mathrm{pool}}
    [\mathcal J_B(\widehat{\Kb}_{S_B})]
    \ge
    \mathbb E_{\mathrm{pool}}
    [\mathcal J_B(\widehat{\Kb}_{D,N})]
    =
    \mathcal J_B(\Kb_N),
\]
which proves \eqref{eq:adaptive_expected_dominance}.
\end{proof}

%%%%%%%%%%%%%%%% REFERENCES %%%%%%%%%%%%%%%

\clearpage % Clear all remaining figures and tables then start a new page

\bibliography{bibliography} % for a file named science_template.bib
\bibliographystyle{plainnat}

\end{document}